\newcolumntype{Y}{>{\centering\arraybackslash}X}
\crefname{figure}{Fig.}{Figs.}
\crefname{section}{Sec.}{Secs.}
\crefname{subsection}{Sec.}{Secs.}
\crefname{equation}{Eqn.}{Eqns.}
\Crefname{figure}{Fig.}{Figs.}
\Crefname{section}{Sec.}{Secs.}
\Crefname{subsection}{Sec.}{Secs.}
\Crefname{equation}{Eqn.}{Eqns.}
\theoremstyle{definition}
\newtheorem{theorem}{Theorem}
\newtheorem{prop}{Proposition}[section]
\newtheorem{lemma}[prop]{Lemma}
\newtheorem{corollary}[prop]{Corollary}
\newtheorem{defn}[prop]{Definition}
\newtheorem{example}[prop]{Example}
\newtheorem{remark}[prop]{Remark}
\newcommand{\define}[1]{\narrowbf{#1}}
\newcommand{\1}{\mathbbm{1}}
\newcommand{\R}{\mathbb{R}}
\newcommand{\cX}{\mathcal{X}}
\newcommand{\cY}{\mathcal{Y}}
\newcommand{\cZ}{\mathcal{Z}}
\newcommand {\mm}[1] {\ifmmode{#1}\else{\mbox{\(#1\)}}\fi}
\newcommand{\Rspace}        {\mm{{\mathbb R}}}
\newcommand{\Acal}        {\mm{{\mathcal A}}}
\newcommand{\Bcal}        {\mm{{\mathcal B}}}
\newcommand{\Pcal}        {\mm{{\mathcal P}}}
\newcommand{\Xcal}        {\mm{{\mathcal X}}}
\newcommand{\Ycal}        {\mm{{\mathcal Y}}}
\newcommand{\Zcal}        {\mm{{\mathcal Z}}}
\newcommand{\supp}        {\mm{\mathrm{supp}}}
\newcommand{\GW}{\mathsf{GW}} 
\newcommand{\W}{\mathsf{W}} 
\newcommand{\coup}{\mathcal{C}} 
\DeclareMathOperator{\dis}{dis} 
\newcommand{\narrowbf}[1]{\textbf{\scalebox{0.9}[1]{#1}}}
\newcommand{\para}[1]{\vspace{2mm}\noindent{\narrowbf{#1}}}
\title{Metrics for Parametric Families of Networks}
\author{Mario G{\'o}mez}
\address{}
\email{}
\author{Guanqun Ma}
\address{}
\email{}
\author{Tom Needham}
\address{}
\email{}
\author{Bei Wang}
\address{}
\email{}
\begin{document}

\begin{abstract}
We introduce a general framework for analyzing data modeled as parameterized families of networks. Building on a Gromov–Wasserstein variant of optimal transport, we define a family of parameterized Gromov-Wasserstein distances for comparing such parametric data, including time-varying metric spaces induced by collective motion, temporally evolving weighted social networks, and random graph models. We establish foundational properties of these distances, showing that they subsume several existing metrics in the literature, and derive theoretical approximation guarantees. In particular, we develop computationally tractable lower bounds and relate them to graph statistics commonly used in random graph theory. Furthermore, we prove that our distances can be consistently approximated in random graph and random metric space settings via empirical estimates from generative models. Finally, we demonstrate the practical utility of our framework through a series of numerical experiments.
\end{abstract} 

\maketitle

\section{Introduction}
\label{sec:introduction}

\para{Motivating examples.}~Consider the problem of mathematically modeling the collective spatial motion of multiple agents over time, such as a group of animals~\cite{parrish1997animal,huang2008modeling,benkert2008reporting,ulmer2019topological}, a population of cells~\cite{nguyen2024quantifying,bonilla2020tracking}, or a fleet of vehicles~\cite{jeung2008discovery}. It is desirable to adopt a representation that is invariant under ambient isometries when only the \emph{intrinsic} features of the motion are of interest, rather than the \emph{extrinsic} position of the group in ambient space. A natural choice in this setting is to record the pairwise distances between agents over time, which leads to the notion of a \emph{time-varying metric space}: a one-parameter family of metrics defined on a common underlying set. A substantial body of work has developed methods for analyzing data in the form of time-varying metric spaces~\cite{xian2022capturing,kim2020persistent,kim2021spatiotemporal,munch2013applications}.

While time-varying metric spaces offer a concrete and well-studied class of examples, they exemplify a broader and increasingly common scenario in data science and network analysis: the need to analyze \emph{parameterized families} of complex structures. Additional examples include:
\begin{itemize}[leftmargin=*]
\item \narrowbf{Time-varying graphs.} Applications in social network modeling~\cite{skarding2021foundations,greene2010tracking} and neuroscience~\cite{yoo2016topological,billings2021simplicial} often involve weighted graphs whose edge weights evolve over time, giving rise to data in the form of a family of graphs parameterized by real numbers, representing the time parameter.
\item \narrowbf{Heat kernels.} The heat kernel on a Riemannian manifold~$M$ describes the diffusion of heat across the manifold and is widely used in geometry processing~\cite{OvsjanikovMerigotMemoli2010,raviv2010volumetric,memoli2011spectral}. In this setting, the data naturally take the form of a family of maps $M \times M \to \Rspace$ parameterized by the positive real numbers.
\item \narrowbf{Random graph models.} Generative random graph models~\cite{van2014random}, such as the Erd\H{o}s--R\'enyi model~\cite{Erdos:1959:pmd}, can be viewed as drawing graphs from a (typically unknown) distribution over the space of all graphs on a fixed node set. Here, the data consist of a collection of graphs parameterized by this underlying state space according to the (unknown) distribution.
\end{itemize}
In practice, a dataset may consist of an \emph{ensemble} of parameterized objects, such as a collection of time-varying fMRI brain connectivity graphs, in which case a metric is needed to compare elements within the ensemble.

To this end, we introduce the notion of a \emph{parameterized measure network}, a flexible model for representing parameterized families of complex structures that encompasses the examples described above. We then define a novel and highly general family of distances on the space of parameterized measure networks, establish their metric, analytical, and statistical properties, and demonstrate their practical utility through a robust numerical implementation.

\para{Gromov-Wasserstein distance.} 
To describe our contributions in more detail, we now briefly review a key component of our approach, the Gromov-Wasserstein framework from optimal transport theory~\cite{Memoli2007,memoli2011}. Recall that a \emph{metric measure space} (\emph{mm-space})~\cite{Gromov2007} is a triple $\Xcal = (X,\mu_X,d_X)$\footnote{The original definition uses the tuple $(X, d_X, \mu_X)$; for convenience, we adopt the order $(X, \mu_X, d_X)$ in this paper.}, where $X$ is a Polish space (i.e.,~a separable completely metrizable topological space), $d_X: X \times X \to \Rspace_{\geq 0}$ is a metric on $X$, and $\mu_X$ is a Borel probability measure on $X$. In other words, it is a metric space equipped with a probability measure.  

Given another mm-space $\Ycal = (Y, \mu_Y, d_Y)$, the associated order-$p$ \emph{Gromov-Wasserstein (GW) distance} between mm-spaces $\Xcal$ and $\mathcal{Y}$~\cite{Gromov2007,memoli2011}, for $1 \leq p < \infty$, is 
\begin{equation}
\label{eqn:GW_distance_intro}
    \mathsf{GW}_p(\Xcal,\mathcal{Y}) \coloneqq \frac{1}{2} \inf_\pi \left(\int_{(X \times Y)^2} \int_{(X \times Y)^2} |d_X(x,x') - d_Y(y,y')|^p \pi(dx \otimes dy) \pi(dx' \otimes dy')\right)^{1/p},
\end{equation}
where the infimum is over \emph{measure couplings} (or joint measures) $\pi$ on $X \times Y$ whose marginals agree with $\mu_X$ and $\mu_Y$, respectively (see \Cref{sec:Wasserstein} for more details). 

\para{A general framework of comparing parameterized measure networks.}
Our proposed framework substantially generalizes the GW distance, extending it to a family of metrics designed to compare general parameterized measure networks.
As a concrete illustration, we next show how the GW framework described in \Cref{eqn:GW_distance_intro} can be adapted to the setting of time-varying metric spaces. 

Let $\Xcal = (X,\mu_X,(d_X^t)_{t \in [0,1]})$, where $(X,\mu_X)$ is a Polish probability space as in the setting of metric measure spaces, but the metric structure is replaced by a one-parameter family $(d_X^t)_{t \in [0,1]}$ of metrics on $X$, assumed to vary continuously in $t$. Given another such structure $\Ycal = (Y,\mu_Y,(d_Y^t)_{t \in [0,1]})$, one can define a notion of GW distance between $\Xcal$ and $\Ycal$ by
\begin{equation}
\label{eqn:GW_distance_intro_time_varying}
    \mathsf{GW}_p(\Xcal,\mathcal{Y}) \coloneqq \frac{1}{2} \inf_\pi \int_{0}^1 \left(\int_{(X \times Y)^2} \int_{(X \times Y)^2} |d_X^t(x,x') - d_Y^t(y,y')|^p \pi(dx \otimes dy) \pi(dx' \otimes dy')\right)^{1/p} \nu(dt),
\end{equation}
where $\nu$ is a Lebesgue measure on $[0,1]$. 

The structure of the metric in \cref{eqn:GW_distance_intro_time_varying} is quite natural, and related ideas have appeared in previous work~\cite{munch2013applications,sturm2018super,xian2022capturing}. In contrast, our general framework encompasses a broader class of novel metrics that have not yet been explored in the literature. In particular, it includes a variant that incorporates an additional optimal transport–based alignment step for comparing parameterized measure networks defined over different parameter spaces—an approach especially relevant for applications involving, for example, random graph models.  

\para{Contributions.} 
We now provide a more detailed account of our contributions.

\begin{enumerate}[leftmargin=*,itemsep=1ex]

\item \narrowbf{New model for parameterized data.} 
We introduce the notion of a \emph{parameterized measure network} (\emph{pm-net}) (\Cref{def:parametrized-networks}), which unifies all of the data types described above. Analogous to a metric measure space, a pm-net is defined as a tuple
$$
\Xcal = (X,\mu_X,\Omega_X,\nu_X,\omega_X),
$$
where $(X,\mu_X)$ is a Polish probability space, $(\Omega_X,\nu_X)$ is a measured \emph{parameter space}, and $\omega_X = (\omega_X^t)_{t \in \Omega_X}$ is a parameterized family of kernels $\omega_X^t : X \times X \to \Rspace$.
Examples of pm-nets, along with their connection to the motivating examples discussed earlier, are given in \Cref{sec:examples_of_pmnets}.

\item \narrowbf{A general family of Gromov-Wasserstein-type distances.} We define a new family of GW-type distances, called \emph{parameterized Gromov–Wasserstein distances} (\Cref{def:multiscale_GW}). Members of this family are denoted $\mathsf{GW}_\mathsf{C}$, where the subscript $\mathsf{C}$ specifies a chosen \emph{cost structure}—that is, a rule for quantifying the geometric distortion induced by a given probability coupling. The metric properties of $\mathsf{GW}_\mathsf{C}$, which depend on the choice of $\mathsf{C}$, are established in \Cref{thm:metric_structure} and \Cref{thm:isomorphism_of_pm_nets}. The first of these results is stated at a high level of generality and provides a category-theoretic interpretation of GW-type distances, which may be of independent interest in optimal transport theory (see \Cref{rem:category_theory}). The second result is more specialized: its proof shares key ideas with standard arguments for existing GW variants (see, e.g.,~\cite{memoli2011,ChowdhuryMemoli2019,bauer2024z}), but requires substantial work to extend them to the parameterized setting. In particular, the proof relies on technical lemmas concerning the (lower semi-)continuity of a certain distortion functional (\Cref{lem:continuity_lemma} and \Cref{lem:continuity_lemma_extended}), as well as on a novel and somewhat subtle equivalence relation for parameterized networks (\Cref{def:equivalence_of_measure_networks}).

\item \narrowbf{Generalizations of existing metrics.}~Certain instances of the parameterized GW framework recover metrics that have previously appeared in the literature. In particular, we show that for specific choices of $\mathsf{C}$ under suitable technical assumptions, our metric coincides with:
\begin{itemize}[leftmargin=*]
\item the temporal alignment–based GW distance of~\cite{cohen2021aligning} (\Cref{ex:parameterized_by_R});
\item certain instances of the $Z$-GW distances of~\cite{bauer2024z} (\Cref{prop:Z-GW});
\item metrics for time-varying metric spaces from~\cite{sturm2018super} (\Cref{ex:Sturm}) and~\cite{kim2021spatiotemporal,kim2020persistent} (\Cref{ex:Kim-Memoli});
\item GW-type distances for heat kernels from~\cite{memoli2011spectral} (\Cref{ex:Memoli-HK}) and~\cite{spectralGW} (\Cref{ex:Chowdhury-HK}).
\end{itemize}
    
\item \narrowbf{Lower bounds and stability of invariants.}~It is shown in \Cref{thm:wasserstein_estimate} that (for a particularly useful choice of $\mathsf{C}$), the distance $\mathsf{GW}_\mathsf{C}$ can be lower bounded by a Wasserstein distance (see \Cref{sec:Wasserstein}) defined over the (classical) GW space. It follows that the parameterized GW distance $\mathsf{GW}_\mathsf{C}$ is lower bounded by a polynomial-time computable pseudometric, defined also in terms of Wasserstein distances (see \Cref{cor:global_distribution_stability} and \Cref{rem:computation}). On one hand, these lower bounds give computationally tractable estimates of the parameterized GW distance. Alternatively, we interpret these lower bounds as proofs that certain invariants of random graph and random metric space models are stable. For example, \Cref{cor:global_distribution_stability_graphs} shows that the distribution of total edges in a random graph is a GW-stable invariant of the model.

\item \narrowbf{Sampling convergence for random graphs and metric spaces.}~In the random graph model setting, one does not typically have access to the full parameter space $(\Omega_X,\nu_X)$, but rather samples from it. In \Cref{sec:approximation_by_samples}, we study approximation of parameterized GW distances from random samples of the parameter space; in particular, \Cref{thm:approx_random_metrics} shows that estimates from random samples converge to the true distance as the number of samples goes to infinity. 

\item {\narrowbf{Numerical experiments.}} 
We describe our implementation of parametrized GW distances in \Cref{sec:implementation}. We adapt components of the Python Optimal Transport (POT) library~\cite{python-ot} (which implements algorithms from \cite{gw_averaging, ot-structured-data}) to approximate $\GW_{\mathsf{C}}$ using gradient descent for several choices of cost structure $\mathsf{C}$. We provide explicit formulas for the gradient of $\GW_{\mathsf{C}}$ and other auxiliary quantities. 

We perform a number of numerical experiments, which illustrate the intuition that parametrized GW distances integrate information from all parameters $t \in \Omega$. We give qualitative examples which show that parameterized GW distances are able to pick up subtle structures of data at multiple scales in \cref{sec:pandas} and \cref{sec:nested-cycles}. In \cref{sec:random-graphs}, we show that the parameterized GW distance serves as a meaningful invariant for comparing two samples of random graphs. Finally, we incorporate our metrics into a supervised learning framework: rather than fixing the measure $\nu$ on the parameter set $\Omega$, we allow it to vary as a mechanism for feature selection. We evaluate this idea by clustering dynamic metric spaces (parameterized by time) that differ only at specific time intervals (\cref{sec:feature-selection}). 
\end{enumerate}

\para{Outline.}~We begin in \Cref{sec:preliminary} with a review of essential concepts from measure theory and optimal transport. In \Cref{sec:parametrized-GW}, we formally define parameterized measure networks and introduce a family of GW distances for comparing them, establishing their core metric properties. \Cref{sec:results} presents our main theoretical results on the equivalence and estimation of these distances, including their connections to existing metrics, lower bounds, and sampling convergence guarantees. Finally, \Cref{sec:experiments} details our computational framework for estimating the proposed distances and demonstrates their practical utility through several numerical examples.  

\section{Preliminary Concepts and Notations}
\label{sec:preliminary} 

\subsection{Basic Terminology from Measure Theory}
\label{sec:measure}
In this subsection, we review basic terminology from measure theory, which experts may safely skip. This also serves to standardize our notation for the rest of the paper.

A \emph{measurable space} is a pair $(X,\Acal)$, where $X$ is a set and $\Acal$ is a $\sigma$-algebra on $X$ (i.e., a collection of subsets of $X$, called \emph{measurable sets}, that is closed under complements, countable unions and intersections).
Given a measurable space $(X, \Acal)$, a measure $\mu_X$ is a function $\mu_X : \Acal \to [0,\infty]$ that assigns a non-negative value to each measurable set such that it is countably additive and $\mu_X(\emptyset)=0$.
A \emph{measure space} is a measurable space together with a measure, denoted as a triple $(X, \Acal, \mu_X)$; sometimes the $\sigma$-algebra $\Acal$ is omitted from the notation when it is clear from the context---we generally assume that $X$ is endowed with a topology and that it is endowed with the Borel $\sigma$-algebra, generated by open sets. 
Let $(X, \Acal, \mu_X)$ be a measure space; a property is said to hold \emph{$\mu_X$-almost everywhere} (often written as $\mu_X$-a.e.) on $X$ if the set of points where the property does not hold has measure zero. 
Given a pair of measure spaces $(X, \Acal, \mu_X)$ and $(Y, \Bcal, \mu_Y)$, a \emph{product measure} is a measure
$\mu_X \otimes \mu_Y : \mathcal{A}\otimes\mathcal{B} \to [0,\infty]$
satisfying
$(\mu_X\otimes\mu_Y)(A\times B) = \mu_X(A)\,\mu_Y(B)$, $\forall A\in\mathcal{A},\; B\in\mathcal{B}$.
For a measure space $(X,\mu_X)$ and $p\in[1,\infty]$, we use $\|\cdot\|_{L^p(X;\mu_X)}$ to denote the standard $L^p$-norm. We use $L^p(X;\mu_X)$ to denote the set of all measurable functions $f:X \to \mathbb{R}$ with finite $L^p$-norm. In particular $L^\infty(X;\mu_X)$ is the set of essentially bounded functions.

A \emph{Polish space} is a topological space $(X,\tau)$ that is \emph{separable} (i.e., it contains a countable dense subset) and \emph{completely metrizable} (i.e., there exists a metric $d_X$ on $X$ such that $d_X$ induces the topology $\tau$ and $(X,d_X)$ is complete). Here, the metric is not part of the structure; we only require that one exists.
In contrast, a \emph{Polish metric space} is a metric space $(X,d_X)$ that is both \emph{complete} (i.e., every Cauchy sequence converges) and \emph{separable}. In this case, the metric $d_X$ is part of the structure.
Every Polish metric space gives rise to a Polish space by forgetting the metric, and every Polish space admits some compatible metric that turns it into a Polish metric space. A \emph{Polish probability space} is a measure space $(X,\mu_X)$ where $X$ is assumed to be a Polish space and $\mu_X$ is a Borel probability measure, i.e., $\mu_X(X) = 1$. We use $\Pcal(X)$ to denote the set of probability measures on $X$. Given a $\mu_X \in \Pcal(X)$, the \emph{support} of $\mu_X$, denoted $\supp(\mu_X)$, is the set of $x \in X$ such that every open neighborhood of $x$ has positive measure.

Let $(X,\mu_X)$ be a Polish probability space. Given another Polish space $Y$ and a (Borel) measurable map $f:X \to Y$, the \emph{pushforward} of $\mu_X$ to $Y$, denoted $f_\# \mu_X$, is the measure on $Y$ defined by $(f_\# \mu_X)(A) = \mu_X(f^{-1}(A))$ (for $A$ being any Borel set). 
If $\mu_Y$ is a probability measure on $Y$, the map $f$ is \emph{measure-preserving} if $f_\# \mu_X = \mu_Y$. 

More generally, given two Polish probability spaces $(X,\mu_X)$ and $(Y,\mu_Y)$, a \emph{coupling} of $\mu_X$ and $\mu_Y$ is a measure $\pi \in \Pcal(X \times Y)$ whose left and right marginals are $\mu_X$ and $\mu_Y$, respectively; that is, using $p_X:X \times Y \to X$ and $p_Y:X \times Y \to Y$ to denote the coordinate projections, $(p_X)_\# \pi = \mu_X$ and $(p_Y)_\# \pi = \mu_Y$. We use $\mathcal{C}(\mu_X,\mu_Y)$ to denote the collection of all couplings of $\mu_X$ and $\mu_Y$.

\subsection{Wasserstein Distances}
\label{sec:Wasserstein}

The notion of measure coupling leads naturally to the Kantorovich formulation of transport distance between measures, the core object of study in optimal transport theory~\cite{Villani2009}.  
Let $(X,d_X)$ be a Polish metric space and let $\mu_X, \mu'_X \in \Pcal(X)$ be any two probability measures on $X$. For $p \in [1,\infty]$, the order-$p$ \define{Wasserstein distance}~\cite[Definition 6.1]{Villani2009} between $\mu_X$ and $\mu'_X$ is 
\begin{equation}
\label{eqn:Wasserstein_Distance}
    \mathsf{W}^{d_X}_p(\mu_X,\mu'_X) \coloneqq \inf_{\pi \in \mathcal{C}(\mu_X,\mu'_X)} \|d_X\|_{L^p(X \times X; \pi)} \underset{p < \infty}{=} \inf_{\pi \in \mathcal{C}(\mu_X,\mu'_X)} \left(\int_{X \times X} d_X(x,x')^p \pi(dx \otimes dx')\right)^{1/p}.
\end{equation}
Here, and throughout the rest of the paper, we use the notation $\underset{p < \infty}{=}$ to indicate that the integral formulation is valid for $p<\infty$, whereas the $L^p$-norm definition holds for any $p \in [1,\infty]$. 

\subsection{Measure Networks and Gromov-Wasserstein Distances}
\label{sec:GW} 

The Wasserstein distance described above is able to compare measures defined over the same (Polish) metric space, whereas this paper is primarily interested in comparing distributions defined over \emph{distinct} spaces. This is handled with the Gromov-Wasserstein (GW) framework \cref{eqn:GW_distance_intro}, whose purview is extended beyond metric measure spaces, following the work of Chowdhury and M\'{e}moli~\cite{ChowdhuryMemoli2019}. 

In order to handle kernel structures which are more general than metrics, Chowdhury and M\'{e}moli introduced the following concept.

\begin{defn}[Measure Network~{\cite[Definition 2.1]{ChowdhuryMemoli2019}}]
A \define{measure network} is a triple $\Xcal = (X,\mu_X,\omega_X)$ such that $X$ is a compact Polish space, $\mu_X$ is a fully supported Borel probability measure, and $\omega_X$ is a bounded measurable function on $X \times X$. 
In other words, $(X,\mu_X)$ is a Polish probability space and $\omega_X:X \times X \to \R$ is a kernel belonging to $L^\infty(X \times X, \mu_X \otimes \mu_X)$. 
\end{defn}

\begin{example}
\label{ex:measure_network_examples}
If $\omega_X$ is a distance metric (inducing the given Polish space a topology $\tau$ on $X$), then $\Xcal=(X,\mu_X,\omega_X)$ is a metric measure space (mm-space). However, the measure network formalism allows much more general structures than an mm-space. Of particular interest is the case where $X$ is a finite set of nodes for a graph structure, and $\omega_X$ is a kernel encoding node interactions. For example, $\omega_X$ could be an adjacency function (possibly including edge weights), that is, $\omega_X: X \times X \to \Rspace_{\geq 0}$, where $\omega_X(u,v) = w_e$ if nodes $u$ and $v$ are connected by an edge $e$ with a weight $w_e$; otherwise $\omega_X(u,v) = 0$. 
$\omega_X$ could also be a graph Laplacian or a graph heat kernel (see \Cref{ex:graph_heat_kernels}). 
\end{example}

In \cite{ChowdhuryMemoli2019}, the GW distance \eqref{eqn:GW_distance_intro} was extended to a pseudometric which is able to compare general measure networks. A similar idea was considered by Sturm~\cite{Sturm2023}, where some different regularity and symmetry assumptions on the kernels were imposed. To streamline notation for the rest of the paper, we define a preliminary concept (which goes back at least to M\'{e}moli~\cite{Memoli2007}). 

\begin{defn}[Distortion]
\label{def:p-distortion}
    Fix $1 \leq p < \infty$. Let $(X,\mu_X)$ and $(Y,\mu_Y)$ be probability spaces, and let $f_X:X \times X \to \R$ and $f_Y:Y \times Y \to \R$ be essentially bounded, measurable functions. Given a coupling $\pi \in \coup(\mu_X, \mu_Y)$, define the \define{$p$-distortion} as 
    \begin{equation*}
        \dis_p(\pi, f_X, f_Y) \coloneqq \left(\int_{(X \times Y)^2} |f_X(x,x') - f_Y(y,y')|^p \, \pi(dx \otimes dy) \, \pi(dx' \otimes dy')\right)^{1/p}.
    \end{equation*}
    For $p = \infty$, define
    \begin{equation*}
        \dis_\infty(\pi, f_X, f_Y) \coloneqq \sup_{(x,y),(x',y') \in \supp(\pi)} |f_X(x,x') - f_Y(y,y')|,
    \end{equation*}
    where $\supp(\pi)$ denotes the \emph{support} of $\pi$. In other words, we have the general definition
    \[
    \dis_p(\pi, f_X, f_Y) \coloneqq \| f_X \circ (p_X, p_X) - f_Y \circ (p_Y, p_Y) \|_{L^p\left( (X \times Y)^2; \pi \otimes \pi \right)},
    \]
    where $p_X$ and $p_Y$ are the coordinate projections from $X \times Y$ to $X$ and $Y$, respectively.
\end{defn}

For $p \in [1,\infty]$, the associated \define{Gromov-Wasserstein (GW) distance} between measure networks $\Xcal$ and $\Ycal$ is given by 
\begin{equation}\label{eqn:ChowdhuryMemoli2019_distance}
\begin{split}
    \mathsf{GW}_p(\Xcal,\Ycal) &\coloneqq \frac{1}{2} \inf_{\pi \in \mathcal{C}(\mu_X,\mu_Y)} \dis_p(\pi,\omega_X,\omega_Y) \\
    & \underset{p < \infty}{=} \frac{1}{2} \inf_{\pi \in \mathcal{C}(\mu_X,\mu_Y)} \left(\int_{(X \times Y)^2} |\omega_X(x,x') - \omega_Y(y,y')|^p \pi(dx \otimes dy) \pi(dx' \otimes dy') \right)^{1/p}.
\end{split}
\end{equation}
It was shown in \cite{ChowdhuryMemoli2019} that $\mathsf{GW}_p$ defines a pseudometric on the space of measure networks, with $\mathsf{GW}_p(\Xcal,\Ycal) = 0$ if and only if $\Xcal$ and $\Ycal$ are \emph{weakly isomorphic}, defined as follows~\cite[Definition 2.4]{MemoliNeedham2024}.

\begin{defn}[Weakly Isomorphic]
\label{def:weakly_isomorphic}
Let $\Xcal$ and $\Ycal$ be measure networks. A measure network $\Zcal$ is called a \define{stabilization} of $\Xcal$ and $\Ycal$ if there exist maps $\varphi_X:Z \to X$ and $\varphi_Y:Z \to Y$ such that (i) $\varphi_X$ and $\varphi_Y$ are measure-preserving, and (ii) $\omega_Z(z,z') := \omega_X(\varphi_X(z),\varphi_X(z')) =\omega_Y(\varphi_Y(z),\varphi_Y(z'))$ holds for $(\mu_Z \otimes \mu_Z)$-almost every $(z, z') \in Z \times Z$.
If a stabilization exists, we say that $\Xcal$ and $\Ycal$ are \define{weakly isomorphic}. 
\end{defn}
  
\section{Parameterized Gromov-Wasserstein Distances}
\label{sec:parametrized-GW}

\subsection{Parameterized Measure Networks}
\label{sec:parametrized-measure-networks}

Motivated by the examples described in~\cref{sec:introduction}, we now introduce a general framework for encoding objects consisting of a parameterized family of kernels over a fixed set. 

\subsubsection{Main Definition.} 
The primary objects of interest in this paper are defined as follows.

\begin{defn}[Parameterized Measure Network]
\label{def:parametrized-networks}
A \define{parameterized measure network} (abbreviated as \define{pm-net}) is a 5-tuple of the form $\Xcal = (X,\mu_X,\Omega_X,\nu_X,\omega_X)$, where:
\begin{itemize}[leftmargin=*]
\item $X$ is a Polish space endowed with a Borel probability measure $\mu_X \in \Pcal(X)$, referred to as the \emph{underlying measure space},
\item $\Omega_X$ is a compact Polish space endowed with a Borel probability measure $\nu_X$, referred to as the \emph{parameter space}, and
\item $\omega_X$ is a function
\begin{align*}
\omega_X:\Omega_X &\to L^\infty(X\times X; \mu_X \otimes \mu_X) \\
t &\mapsto \omega_X^t
\end{align*}
which is continuous with respect to the $L^\infty$ norm, referred to as a \emph{parameterized network kernel}.
\end{itemize}
\end{defn}

\begin{remark}
Various technical conditions in \Cref{def:parametrized-networks} could be relaxed—for example, one could assume each $\omega_X^t$ lies in $L^p$ rather than $L^\infty$, or weaken the compactness requirement on $\Omega_X$—but we impose these conditions for convenience. They provide a sufficiently flexible framework while sparing us from having to verify an excess of intricate technical details in the proofs.    
\end{remark}

\subsubsection{Examples of Parameterized Measure Networks.}
\label{sec:examples_of_pmnets} 
We now present several examples of parameterized measure networks, starting with the simple observation that this framework generalizes the standard measure network as a special case.

\begin{example}[Measure Networks]
\label{ex:measure_networks}
Let $(X,\mu_X,\omega_X)$ be a \emph{measure network}, in the sense of \cite{ChowdhuryMemoli2019} (see \Cref{sec:GW}). This gives a trivial example of a parameterized measure network: let $(\Omega_X,\nu_X)$ be a space consisting of a single point, $\Omega_X = \{t\}$, and define $\Xcal = (X,\mu_X,\Omega_X,\nu_X,\omega_X)$, where (by a slight abuse of notation) $\omega_X^t = \omega_X$. 
\end{example}

The next few examples take (a subset of) $\R$ as the parameter space. In these cases, a parameter $t$ intuitively represents a notion of ``time'' or ``scale''. These examples formalize concepts which were described informally in the introduction.

\begin{example}[Time-Varying Metric Spaces]
\label{ex:time_varying}
    A \emph{time-varying metric space} or \emph{dynamic metric space} consists of a (finite) set $X$ endowed with a collection $(d_X^t)_{t \in \Omega_X}$ of (pseudo-)metrics, parameterized by some compact subset $\Omega_X$ of $\R$. Such objects were studied as models for flocking behavior in \cite{sumpter2010collective} and metrics on the space of these objects were studied in \cite{kim2020persistent,kim2021spatiotemporal,kim2020analysis,xian2022capturing}, using constructions similar to Gromov-Hausdorff and Gromov-Wasserstein distances, as well as ideas from topological data analysis. Setting  $\omega_X^t = d_X^t$ and imposing the mild assumption that $t \mapsto \omega_X^t$ is $L^\infty$-continuous, one obtains a representation of a dynamic metric space as a parameterized measure network for any choices of  $\nu_X \in \Pcal(\Omega_X)$ and $\mu_X \in \Pcal(X)$.
\end{example}

\begin{example}[Time-Varying Networks]
\label{ex:time_varying_networks}
Similar to the above, one can consider data consisting of a weighted graph whose edge weights vary in time $t$, defined over some compact subset of real numbers (e.g., ~\cite{skarding2021foundations,greene2010tracking,yoo2016topological,kim2020analysis,billings2021simplicial}). Taking $\omega_X^t$ to be a graph kernel for each $t$ (e.g., the weighted adjacency matrix), and choosing necessary distributions, leads to a pm-network representation of this \emph{time-varying network} structure. 
\end{example}

\begin{example}[Riemannian Heat Kernels]
\label{ex:riemannian_heat_kernels}
Let $X$ be a (say, compact) Riemannian manifold with associated Laplacian operator $\Delta$. A solution $\omega: (0,\infty) \times X \times X \to \R$ of the \emph{heat equation}
\[
\frac{\partial}{\partial t} \omega (t,x,x') = \Delta_x \omega (t,x,x'),
\]
which limits to the delta distribution at $x$ as $t \to 0^+$, is called a \emph{heat kernel} for $X$. Let $\Omega_X$ be a compact subset of $\Rspace_{>0}$, endowed with some measure $\nu_X$, let $\mu_X$ denote the normalized Riemannian volume measure on $X$, and let $\omega_X^t(x,x') = \omega(t,x,x')$ for a heat kernel $\omega$. This data then defines a parameterized measure network.
\end{example}

\begin{example}[Graph Heat Kernels]\label{ex:graph_heat_kernels}
The heat kernels described in \Cref{ex:riemannian_heat_kernels} have a  discrete counterpart in graph theory. Let $G$ be a finite graph with node set $X$. The natural discrete version of the Riemannian Laplacian operator is the \emph{graph Laplacian}. We formulate it in matrix notation as follows. Choosing an enumeration of $X = (x_1,\ldots,x_n)$, let $A$ denote the associated $n\times n$ adjacency matrix, and $D$ the $n \times n$ degree matrix (the diagonal matrix whose $i$-th diagonal entry is the degree of node $x_i$ in $G$). Then the \emph{graph Laplacian} is the matrix $\Delta = D - A$. The \emph{graph heat equation} is typically written as 
\[
\frac{d}{dt} B(t) = - \Delta B(t), 
\]
and a solution $B$, understood to be a time-varying $n\times n$ matrix, is given by the \emph{graph heat kernel}
\[
B(t) = \exp(-t \Delta).
\]
One thus obtains a parameterized measure network by taking $\mu_X$ to be some measure over $X$, $\nu_X$ to be some measure over a compact parameter space $\Omega_X \subset \Rspace_{>0}$, and by setting 
\[
\omega_X^t(x_i,x_j) = B(t)_{i,j},
\]
where $B(t)_{i,j}$ denotes the $(i,j)$-entry of the matrix $B(t)$. 
\end{example}

The following examples have a different flavor than those above, in that the parameter space is treated as a state space, so that the parameterized network kernel is naturally considered as a random variable (valued in a function space). 

\begin{example}[Random Graphs]
\label{ex:random_graphs}
    Let $X$ be a finite set and let $\Omega_X$ be the set of all (say, simple) graphs with node set $X$. This is a finite set, which we endow with the discrete topology. A distribution $\nu_X$ over $\Omega_X$ can then be understood as a \emph{random graph model}; for example, the \emph{Erd\H{o}s-R\'enyi model}~\cite{Erdos:1959:pmd}, the \emph{stochastic block model}~\cite{holland1983stochastic}, or the \emph{Watts-Strogatz model}~\cite{watts1998collective} (see \cite{van2014random} for a survey on the topic). We construct a pm-net $\Xcal$ by choosing a measure $\mu_X$ on $X$ (e.g., the uniform measure), and taking $\omega_X^t$ to be the adjacency kernel associated to the graph $t \in \Omega_X$ (i.e., $\omega_X^t(x,x') = 1$ if the nodes $x,x' \in X$ are connected by an edge in the graph at time $t$, and $\omega_X^t(x,x') = 0$ otherwise). 

    We make the observation here that one typically does not have access to the full distribution $\nu_X$ in practice. Rather, the standard examples described above are generative models; one generally has access to iid samples $t_1,\ldots,t_N$ from $\Omega_X$, and hence to samples $\omega_X^{t_1},\ldots,\omega_X^{t_N}$ of the the kernel defining the pm-net structure. This motivates the statistical questions that we consider in \Cref{sec:approximation_by_samples}. 
\end{example}

\begin{example}[Random Metric Spaces]\label{ex:random_metric_spaces} This example is similar to the random graph example above, extending it to consider random metric structures. Towards this end, let $(X,\mu_X)$ be a compact Polish probability space and let $\Omega_X$ be a compact collection of metrics inducing the given topology on $X$, endowed with the subspace topology coming from the inclusion $\Omega_X \subset L^\infty(X \times X; \mu_X \otimes \mu_X)$. A distribution $\nu_X$ on $\Omega_X$ defines a \emph{random metric space model} over $X$. There is an associated pm-net $\Xcal$ given by taking $\omega_X^t$ to be equal to the metric $t \in \Omega_X$. 

As in \Cref{ex:random_graphs}, one typically only has access to samples of metrics distributed according to $\nu_X$, and statistical inferences come into play (see~\Cref{sec:approximation_by_samples}). In practice, these samples could arise from noisy measurements of some metric structure; to give a concrete example, the nuclear magnetic resonance problem in structural biology represents molecular conformation via many noisy measurements of pairwise distances between its atoms~\cite{crippen1988distance}. 
\end{example}

We conclude our list of examples by describing a situation where the parameter space consists of a set of modalities for representing a given dataset.

\begin{example}[Graph Representations]\label{ex:graph_representations}
    Given a graph $G$ with a node set $X$, there are many choices of kernels $X \times X \to \R$ for representing the structure of $G$, including: the adjacency kernel, the graph Laplacian, the graph heat kernel for various choices of $t$, the shortest path distance function, or other kernels induced by node features (if they exist). One can consider a multimodal representation of $G$ by setting $\Omega_X$ to be a (say, finite) set of representation modalities $t$ and defining $\omega_X^t$ to be the kernel for $G$ under the given modality. For any choices of distributions $\nu_X$ and $\mu_X$, this data determines a parameterized measure network. 
\end{example}

\subsection{Distances Between Parameterized Measure Networks} 
\label{sec:distances}

Our next objective is to define a suitable notion of distance between parameterized measure networks. Rather than tailoring a distinct distance for each specific class of objects, we adopt a unified and general framework that accommodates a wide range of structures, including those introduced in \Cref{ex:measure_networks} through \Cref{ex:graph_representations}. This generality is essential, as the objects we seek to analyze are inherently diverse. By formulating a family of distances at this level of abstraction, we are able to derive theoretical guarantees applicable across multiple settings, specializing only when necessary to address particular cases.

\subsubsection{Classes of Parameterized Measure Networks.} Throughout the rest of the section, let $\mathfrak{N}$ denote some fixed but arbitrary \define{class of parameterized measure networks}. The family of distances introduced below will be defined with respect to the class $\mathfrak{N}$. 

\begin{example}[Classes of Parameterized Measure Networks]
\label{ex:classes_of_pmnets}
    We will return frequently to the following important classes $\mathfrak{N}$ of pm-nets:
    \begin{enumerate}
        \item the class containing \emph{all} pm-nets, which we denote as $\mathfrak{N}_\mathrm{all}$;
        \item for a fixed parameter space $(\Omega,\nu)$, the class of pm-nets of the form $\Xcal = (X,\mu_X,\Omega,\nu,\omega_X)$, which we denote as $\mathfrak{N}_{\nu}$;
        \item one of the more specific classes consisting of the objects described in  \Cref{ex:measure_networks}--\Cref{ex:random_metric_spaces}, for which we do not currently  introduce any specialized notation.
    \end{enumerate}
\end{example}

\subsubsection{General Family of Distances.} Our general family of distances on a fixed class $\mathfrak{N}$ is defined as follows.

\begin{defn}[Parameterized Gromov-Wasserstein Distance]
	\label{def:multiscale_GW}
    A \define{cost structure} on $\mathfrak{N}$ is an assignment $\mathsf{C}$ taking a pair of pm-nets $\Xcal,\Ycal \in \mathfrak{N}$ to a function 
    \[
    \mathsf{C}_{\Xcal,\Ycal}:\mathcal{C}(\mu_X,\mu_Y) \to \R_{\geq 0}.
    \]
    Given a cost structure $\mathsf{C}$, the \define{parameterized Gromov-Wasserstein distance induced by $\mathsf{C}$} is 
    \begin{equation}\label{eqn:GW_C}
    \begin{split}
        \GW_\mathsf{C}:\mathfrak{N} \times \mathfrak{N} &\to \R_{\geq 0} \\
        (\Xcal,\Ycal) &\mapsto \GW_\mathsf{C}(\Xcal,\Ycal) \coloneqq \inf_{\pi \in \mathcal{C}(\mu_X,\mu_Y)} \mathsf{C}_{\Xcal,\Ycal}(\pi). 
    \end{split}
    \end{equation}
\end{defn}

\begin{remark}
    We refer to $\GW_\mathsf{C}$ as a “distance” only in a colloquial sense, as we do not assert that it satisfies the axioms of a metric in general. However, we show below that under suitable assumptions on the cost function $\mathsf{C}$, $\GW_\mathsf{C}$ does indeed exhibit metric properties. 
\end{remark}

\begin{remark}
    An optimization problem similar to \Cref{eqn:GW_C} was studied in a recent paper by Sebbouh, Cuturi, and Peyr\'{e}~\cite{sebbouh2024structured}, with a view toward generalizing duality properties in optimal transport-type problems. There, the additional assumption that $\mathsf{C}_{\Xcal,\Ycal}$ is always a concave function was imposed for analytical purposes, but we make no such restriction.
\end{remark} 

\subsubsection{Continuity of Distortion.} Before providing examples of interesting cost structures $\mathsf{C}$, we establish a useful property of the distortion function, introduced in \Cref{def:p-distortion}.

\begin{lemma}
\label{lem:continuity_lemma}
     Let $\Xcal$ and $\Ycal$ be pm-nets and fix $1 \leq p \leq \infty$. For all $s \in \Omega_X$ and $t \in \Omega_Y$, the quantity $\dis_p(\pi, \omega^s_X, \omega^t_Y)$ is well-defined; in particular, it is finite. Moreover, if $p < \infty$, the function
    \begin{align*}
        \dis_p:\coup(\mu_X, \mu_Y) \times \Omega_X \times \Omega_Y &\to \R \\
        (\pi,s,t) &\mapsto \dis_p(\pi, \omega_X^s, \omega_Y^t)
    \end{align*}
    is continuous. If $p = \infty$, $\dis_\infty$ is lower semicontinuous.
\end{lemma}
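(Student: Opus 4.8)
\emph{Proof proposal.} The plan is to dispose of well-definedness first, then prove continuity for $p<\infty$ by decoupling the dependence on the coupling from the dependence on $(s,t)$, and finally deduce the $p=\infty$ statement as a formal consequence. The starting observation for all three parts is the same: for any $\pi\in\coup(\mu_X,\mu_Y)$, the pushforward of $\pi\otimes\pi$ under $((x,y),(x',y'))\mapsto(x,x')$ is exactly $\mu_X\otimes\mu_X$ (this map is a coordinate projection and $\pi$ has left marginal $\mu_X$), and likewise the pushforward under $((x,y),(x',y'))\mapsto(y,y')$ is $\mu_Y\otimes\mu_Y$. Consequently $\omega_X^s\circ(p_X,p_X)-\omega_Y^t\circ(p_Y,p_Y)$ is an essentially bounded function on $(X\times Y)^2$ whose $(\pi\otimes\pi)$-equivalence class is independent of the chosen $L^\infty$-representatives (any two representatives differ on a $(\pi\otimes\pi)$-null set), and since $\pi\otimes\pi$ is a probability measure it lies in $L^p((X\times Y)^2;\pi\otimes\pi)$ for every $p\in[1,\infty]$; this gives finiteness and well-definedness of $\dis_p(\pi,\omega_X^s,\omega_Y^t)$ for all $p$.

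For the continuity claim ($p<\infty$), note that since $X\times Y$ is Polish the weak topology on $\coup(\mu_X,\mu_Y)$ is metrizable, as are $\Omega_X$ and $\Omega_Y$, so it suffices to check sequential continuity; I would establish two estimates and glue them. \emph{First} (uniform control in the parameters): for fixed $\pi$, the reverse triangle inequality in $L^p(\pi\otimes\pi)$ together with the marginal identities above bounds $\bigl|\dis_p(\pi,\omega_X^s,\omega_Y^t)-\dis_p(\pi,\omega_X^{s'},\omega_Y^{t'})\bigr|$ by $\|\omega_X^s-\omega_X^{s'}\|_{L^\infty(\mu_X\otimes\mu_X)}+\|\omega_Y^t-\omega_Y^{t'}\|_{L^\infty(\mu_Y\otimes\mu_Y)}$, which tends to $0$ by the assumed $L^\infty$-continuity of $\omega_X$ and $\omega_Y$ — and this bound is \emph{uniform in $\pi$}. \emph{Second} (continuity in the coupling for fixed $(s,t)$): given $\varepsilon>0$, approximate $\omega_X^s$ and $\omega_Y^t$ in $L^p$ of $\mu_X\otimes\mu_X$ and $\mu_Y\otimes\mu_Y$ respectively by bounded continuous functions $\tilde f_X,\tilde f_Y$ (density of $C_b$ in $L^p$ on a Polish space, truncating to retain boundedness); the marginal identities show $\bigl|\dis_p(\pi,\omega_X^s,\omega_Y^t)-\dis_p(\pi,\tilde f_X,\tilde f_Y)\bigr|<2\varepsilon$ for \emph{every} $\pi$, while for the continuous approximants the integrand $|\tilde f_X\circ(p_X,p_X)-\tilde f_Y\circ(p_Y,p_Y)|^p$ is bounded and continuous on $(X\times Y)^2$, so $\pi_n\to\pi$ weakly (hence $\pi_n\otimes\pi_n\to\pi\otimes\pi$ weakly) yields $\dis_p(\pi_n,\tilde f_X,\tilde f_Y)\to\dis_p(\pi,\tilde f_X,\tilde f_Y)$; letting $\varepsilon\to0$ gives continuity in $\pi$. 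Along a convergent sequence $(\pi_n,s_n,t_n)\to(\pi,s,t)$, a triangle inequality through $\dis_p(\pi_n,\omega_X^s,\omega_Y^t)$ then yields joint continuity, the first term controlled by estimate one (uniformly in $\pi_n$) and the second by estimate two.

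For $p=\infty$, since $\pi\otimes\pi$ is a probability measure one has $\dis_\infty(\pi,\omega_X^s,\omega_Y^t)=\sup_{n\ge1}\dis_n(\pi,\omega_X^s,\omega_Y^t)$, a pointwise supremum of functions already shown to be continuous, hence lower semicontinuous. The main obstacle is precisely the continuity in the coupling variable: because each $\omega_X^s$ is only an $L^\infty$-equivalence class, one cannot feed it directly into the portmanteau theorem, which requires genuine (bounded) continuous test functions. The resolution — the one nonroutine point — is that all couplings in $\coup(\mu_X,\mu_Y)$ share the marginals $\mu_X,\mu_Y$, so both the value of $\dis_p$ and its $L^p$-approximation by continuous functions are insensitive to the choice of representative and are controlled \emph{uniformly over all couplings}; the problem then collapses to the classical fact that products of weakly convergent probability measures on Polish spaces converge weakly.
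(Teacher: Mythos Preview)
Your proof is correct and follows essentially the same decomposition as the paper: finiteness via the triangle inequality, uniform-in-$\pi$ continuity in $(s,t)$ via the $L^\infty$-continuity of the kernels, continuity in $\pi$ for fixed $(s,t)$, a triangle-inequality splice, and $p=\infty$ as a supremum of continuous functions. The only real difference is that the paper outsources continuity in $\pi$ to \cite[Lemma~2.3]{ChowdhuryMemoli2019}, whereas you give a self-contained argument via $L^p$-approximation by bounded continuous functions and weak convergence of product couplings; your version is more elementary and makes explicit why the choice of $L^\infty$-representative is harmless, while the paper's is shorter.
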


\begin{proof}
    The finiteness claim is straightforward, due to our regularity assumptions in \Cref{def:parametrized-networks}:
        \[
        \mathrm{dis}_p(\pi,\omega_X^s,\omega_Y^t) \leq \|\omega_X^s\|_{L^p(\mu_X \otimes \mu_X)} + \|\omega_Y^t\|_{L^p(\mu_Y \otimes \mu_Y)} \leq \|\omega_X^s\|_{L^\infty(\mu_X \otimes \mu_X)} + \|\omega_Y^t\|_{L^\infty(\mu_Y \otimes \mu_Y)} < \infty.
        \]

    We proceed with the continuity claim. 
    Suppose $p < \infty$. Let $\epsilon > 0$ and fix $\pi_0 \in \coup(\mu_X, \mu_Y)$, $s_0 \in \Omega_X$, and $t_0 \in \Omega_Y$. By \cite[Lemma 2.3]{ChowdhuryMemoli2019}, for any fixed $s$ and $t$, the function
    	\begin{align*}
		\dis_p(\bullet, \omega_X^s, \omega_Y^t): \coup(\mu_X, \mu_Y) &\to \R \\
		\pi &\mapsto \dis_p(\pi, \omega_X^s, \omega_Y^t)
	\end{align*}
	is continuous. Thus there exists $V \subset \coup(\mu_X, \mu_Y)$ neighborhood of $\pi_0$ such that for all $\pi \in V$,
	\begin{equation*}
		|\dis_p(\pi, \omega_X^{s_0}, \omega_Y^{t_0}) - \dis_p(\pi_0, \omega_X^{s_0}, \omega_Y^{t_0})| < \epsilon.
	\end{equation*}
	Similarly, since $\omega_X:\Omega_X \to L^\infty(X \times X, \mu_X \otimes \mu_X)$ is continuous, there exists $U_X \subset \Omega_X$ neighborhood of $s_0$ such that
	\begin{equation*}
		\|\omega_X^{s} - \omega_X^{s_0}\|_{L^p(X^2, \mu_X \otimes \mu_X)} \leq \|\omega_X^{s} - \omega_X^{s_0}\|_{L^\infty(X^2, \mu_X \otimes \mu_X)} < \epsilon
	\end{equation*}
	for all $s \in U_X$. We define $U_Y$ analogously.
    
    Let $p_X$ and $p_Y$ be the standard projections from $X \times Y$. For convenience, write $\overline{\omega}_X^s = \omega_X^s \circ (p_X, p_X)$ and $\overline{\omega}_Y^t = \omega_Y^t \circ (p_Y, p_Y)$. By the triangle inequality of the $L_p$ norm,
	\begin{align*}
		\dis_p(\pi, \omega_X^{s_0}, \omega_Y^{t_0})
		&= \| \overline{\omega}_X^{s_0} - \overline{\omega}_Y^{t_0} \|_{L^p\left( (X \times Y)^2; \pi \otimes \pi \right)} \\
		&\leq
		\| \overline{\omega}_X^{s_0} - \overline{\omega}_X^{s} \|_{L^p\left( (X \times Y)^2; \pi \otimes \pi \right)}
		+
		\| \overline{\omega}_X^s - \overline{\omega}_Y^t \|_{L^p\left( (X \times Y)^2; \pi \otimes \pi \right)}
		+
		\| \overline{\omega}_Y^{t} - \overline{\omega}_Y^{t_0} \|_{L^p\left( (X \times Y)^2; \pi \otimes \pi \right)} \\
		&=
		\| \omega_X^{s_0} - \omega_X^{s} \|_{L^p(X^2; \mu_X \otimes \mu_X)}
		+
		\| \overline{\omega}_X^s - \overline{\omega}_Y^t \|_{L^p\left( (X \times Y)^2; \pi \otimes \pi \right)}
		+
		\| \omega_Y^{t} - \omega_Y^{t_0} \|_{L^p(Y^2; \mu_Y \otimes \mu_Y )} \\
		&< \dis_p(\pi, \omega_X^{s}, \omega_Y^{t}) + 2\epsilon.
	\end{align*}
	With a symmetric argument we obtain $|\dis_p(\pi, \omega_X^{s}, \omega_Y^{t}) - \dis_p(\pi, \omega_X^{s_0}, \omega_Y^{t_0})| < 2\epsilon$. Then for all $\pi \in V$, $s \in U_X$, and $t \in U_Y$,
	\begin{multline*}
		|\dis_p(\pi, \omega_X^{s}, \omega_Y^{t}) - \dis_p(\pi_0, \omega_X^{s_0}, \omega_Y^{t_0})| \\
		\leq
		|\dis_p(\pi, \omega_X^{s}, \omega_Y^{t}) - \dis_p(\pi, \omega_X^{s_0}, \omega_Y^{t_0})|
		+ |\dis_p(\pi, \omega_X^{s_0}, \omega_Y^{t_0}) - \dis_p(\pi_0, \omega_X^{s_0}, \omega_Y^{t_0})|
		< 3\epsilon.
	\end{multline*}
	This proves that $\dis_p$ is continuous for $p<\infty$. Since $\dis_\infty$ is the supremum over $p \geq 1$ of the family of continuous functions $\dis_p$, it is lower semicontinuous. 
\end{proof}

\subsubsection{Examples of Parameterized Gromov-Wasserstein Distances.} The following examples of cost structures $\mathsf{C}$, along with their associated parameterized Gromov–Wasserstein distances, will be referenced frequently throughout the paper.

\begin{example}[Main Example: Fixed Parameter Space]\label{ex:fixed_parameter_space}
    Fix a parameter space $(\Omega,\nu)$ and let $\mathfrak{N}_\nu$ be the class of pm-nets with this parameter space, that is, of the form $\Xcal = (X,\mu_X,\Omega,\nu,\omega_X)$; see \Cref{ex:classes_of_pmnets}. A natural cost structure is given by 
    \begin{equation}\label{eqn:fixed_parameters_cost}
    \mathsf{C}_{\Xcal,\Ycal}(\pi) = \frac{1}{2}\|\mathrm{dis}_p(\pi,\omega_X,\omega_Y)\|_{L^q(\Omega;\nu)} \underset{q < \infty}{=} \frac{1}{2} \left(\int_\Omega \mathrm{dis}_p(\pi,\omega_X^t,\omega_Y^t)^q \nu(dt)\right)^{1/q}.
    \end{equation}
     We record the full expression for $\GW_\mathsf{C}$ in this case, for later reference:
    \begin{equation}\label{eqn:GW_fixed_space}
    \begin{split}
        &\GW_\mathsf{C}(\Xcal,\Ycal) = \frac{1}{2}\inf_{\pi \in \mathcal{C}(\mu_X,\mu_Y)} \|\mathrm{dis}_p(\pi,\omega_X,\omega_Y)\|_{L^q(\Omega;\nu)} \\
        &\qquad \qquad \underset{p,q < \infty}{=} \frac{1}{2}\inf_{\pi \in \mathcal{C}(\mu_X,\mu_Y)} \left(\int_\Omega \left(\int_{(X \times Y)^2} |\omega_X(x,x') - \omega_Y(y,y')|^p \pi(dx \times dy) \pi(dx' \times dy')\right)^{q/p} \nu(dt)\right)^{1/q}.
    \end{split}
    \end{equation}
    Observe that the cost structure is well-defined (i.e., finite). 
    Indeed, \Cref{lem:continuity_lemma} implies that, for fixed $\pi \in \coup(\mu_X,\mu_Y)$, the map $\Omega \to \R: t \mapsto \mathrm{dis}_p(\pi,\omega_X^t,\omega_Y^t)$
        is continuous. By compactness of $\Omega$, it is therefore $q$-integrable. 
\end{example}

    \begin{remark}\label{rem:recovers_GW_distance}
        Suppose that $(\Omega,\nu)$ is a one-point space, $\Omega = \{t\}$. Let $\Xcal = (X,\mu_X,\omega_X)$ be a measure network; as was observed in \Cref{ex:measure_networks}, $\cX$ is naturally  represented as an element of the class $\mathfrak{N}_\nu$, which we denote in this remark as $\overline{\Xcal} = (X,\mu_X,\Omega,\nu,\omega_X)$. With the cost structure $\mathsf{C}$ from \Cref{ex:fixed_parameter_space}, the parameterized GW distance is equivalent to the standard GW distance for measure networks, as was described in \Cref{sec:GW}. Indeed,
        \begin{align*}
\GW_\mathsf{C}(\overline{\Xcal},\overline{\Ycal}) &= \frac{1}{2} \inf_{\pi \in \mathcal{C}(\mu_X,\mu_Y)} \|\mathrm{dis}_p(\pi,\omega_X,\omega_Y)\|_{L^q(\Omega;\nu)} \\
&= \frac{1}{2} \inf_{\pi \in \mathcal{C}(\mu_X,\mu_Y)} \mathrm{dis}_p(\pi,\omega_X^t,\omega_Y^t) = \GW_p(\Xcal,\Ycal).
        \end{align*}
    \end{remark}

    \begin{remark}\label{rem:ZGW_distance}
        For any parameter space, $\GW_\mathsf{C}$ is related to a  $Z$-Gromov-Wasserstein distance, in the sense of \cite{bauer2024z}. This is explained in detail below, in \cref{sec:ZGW}.
    \end{remark}

\begin{example}[Main Example: General Parameter Spaces]\label{ex:general_parameter_spaces}
    Let $\mathfrak{N}_\mathrm{all}$ denote the class of all pm-nets, and let $\Xcal,\Ycal \in \mathfrak{N}_\mathrm{all}$. For $p,q \in [1,\infty]$, we have the cost structure
    \begin{equation}\label{eqn:cost_structure_different_classes}
    \begin{split}
    \mathsf{C}_{\Xcal,\Ycal}(\pi) &= \frac{1}{2} \inf_{\xi \in \mathcal{C}(\nu_X,\nu_Y)} \|\mathrm{dis}_p(\pi,\omega_X,\omega_Y)\|_{L^q(\Omega_X \times \Omega_Y,\xi)} \\
    &\underset{q < \infty}{=} \frac{1}{2} \inf_{\xi \in \mathcal{C}(\nu_X,\nu_Y)} \left(\int_{\Omega_X \times \Omega_Y} \mathrm{dis}_p(\pi,\omega_X^t,\omega_Y^s)^q \xi(dt \otimes ds) \right)^{1/q},
    \end{split}
    \end{equation}
    in which case $\GW_\mathsf{C}$ becomes
    \begin{equation}\label{eqn:GW_general_parameters}
        \GW_\mathsf{C}(\Xcal,\Ycal) = \frac{1}{2} \inf_{\pi \in \mathcal{C}(\mu_X,\mu_Y)} \inf_{\xi \in \mathcal{C}(\nu_X,\nu_Y)} \|\mathrm{dis}_p(\pi,\omega_X,\omega_Y)\|_{L^q(\Omega_X \times \Omega_Y,\xi)}.
    \end{equation}
    By arguments similar to the above, \Cref{lem:continuity_lemma} implies that the cost structure is well-defined (i.e.,~finite).
\end{example}

Although our analysis and experiments in the remainder of the paper primarily focus on the cost structures presented in \Cref{ex:fixed_parameter_space} and \Cref{ex:general_parameter_spaces}, a wide range of alternative cost structures can be constructed to suit specific applications.
We largely leave detailed exploration of these possibilities for future work, but describe one such potential idea below. We also note that a discrete analogue of this construction was previously introduced in \cite{cohen2021aligning}, motivated by the problem of aligning time series valued in different spaces. 

\begin{example}[Optimization Over Reparameterizations]\label{ex:parameterized_by_R}
Consider the class $\mathfrak{N}$ consisting of pm-nets whose parameter space is $\Omega_X = [0,1]$, endowed with Lebesgue measure $\nu_X$; for example, this class contains versions of the pm-nets described in \Cref{ex:riemannian_heat_kernels}, \Cref{ex:graph_heat_kernels}, and \Cref{ex:time_varying}, when the parameter space is restricted to the interval (extending definitions to the parameter $t=0$, as necessary). Let $\mathrm{Diff}_+([0,1])$ denote the group of orientation-preserving diffeomorphisms of $[0,1]$. The cost structure
\[
    \mathsf{C}_{\Xcal,\Ycal}(\pi) = \inf_{\alpha \in \mathrm{Diff}_+([0,1])} \left(\int_0^1 \mathrm{dis}_p(\pi,\omega_X^t,\omega_Y^{\alpha(t)})^q \nu(dt)\right)^{1/q} 
\]
leads to a distance $\GW_{\mathsf{C}}$ involving a ``temporal alignment'' of the pm-nets. In particular, if $\Xcal$ and $\Ycal$ only differ by an orientation-preserving reparameterization of their parameter spaces, then one has $\mathsf{GW}_\mathsf{C}(\Xcal,\Ycal) = 0$ under this cost structure. The associated optimization problem has connections to ideas from the field of \emph{statistical shape analysis}~\cite{srivastava2016functional,bauer2024elastic}, which we plan to explore in future work.
\end{example}

\subsection{General metric properties} 
\label{sec:metric-properties}

We now formally study metric-like structures arising from parameterized GW distances. Predictably, these depend on properties of the cost structure $\mathsf{C}$. For the rest of this subsection, fix a class $\mathfrak{N}$ of pm-nets and a cost structure $\mathsf{C}$ on $\mathfrak{N}$. 

\subsubsection{Preliminary Concepts.} To state the main theorem, we need some additional terminology and notation. Recall the \emph{Gluing Lemma} \cite[Lemma 1.4]{Sturm2023}, a standard result in optimal transport theory which says that, given couplings $\pi_{XY} \in \mathcal{C}(\mu_X,\mu_Y)$ and $\pi_{YZ} \in \mathcal{C}(\mu_Y,\mu_Z)$, there exists a unique probability measure $\tilde{\pi}$ on $X \times Y \times Z$ whose $(X,Y)$- and $(Y,Z)$-marginals are $\pi_{XY}$ and $\pi_{YZ}$, respectively. We use the notation $\pi_{XY} \bullet \pi_{YZ}$ for the $(X\times Z)$-marginal of $\tilde{\pi}$; that is, letting $p_A:X \times Y \times Z \to A$ denote the coordinate projection for $A \in \{X,Y,Z\}$, we define
\[
\pi_{XY} \bullet \pi_{YZ} \coloneqq (p_X \times p_Z)_\# \tilde{\pi} \in \mathcal{C}(\mu_X,\mu_Z). 
\]

We now introduce several useful properties of a cost structure.

\begin{defn}[Properties of Cost Structures]
	\label{def:props_cost_structures}
    Let $\mathsf{C}$ be a cost structure on a class of pm-nets $\mathfrak{N}$. 
    \begin{enumerate}
        \item The cost structure $\mathsf{C}$ \define{respects gluing} if, for any $\pi_{XY} \in \mathcal{C}(\mu_X,\mu_Y)$ and $\pi_{YZ} \in \mathcal{C}(\mu_Y,\mu_Z)$, it holds that
    \[
    \mathsf{C}_{\Xcal,\cZ}(\pi_{XY} \bullet \pi_{YZ}) \leq \mathsf{C}_{\Xcal,\Ycal}(\pi_{XY}) + \mathsf{C}_{\Ycal,\cZ}(\pi_{YZ}).
    \]
    \item Recall that a map $\varphi:X \to Y$ that is measure-preserving with respect to $\mu_X$ and $\mu_Y$ induces a coupling via $(\mathrm{id}_X \times \varphi)_\# \mu_X \in \mathcal{C}(\mu_X,\mu_Y)$. 
Given a cost structure, we abuse notation and write
\[
\mathsf{C}_{\Xcal,\Ycal}(\varphi)
\coloneqq
\mathsf{C}_{\Xcal,\Ycal} \left( \left(\mathrm{id}_X \times \varphi\right)_\# \mu_X \right).
\]
The cost structure \define{respects identities} if $\mathsf{C}_{\Xcal,\Xcal}(\mathrm{id}_X) = 0$ for all $\Xcal \in \mathfrak{N}$.
    \item If $\mathsf{C}$ respects gluing and respects identities, we call it a \define{lax homomorphism}. 
    \item Given a coupling $\pi \in \mathcal{C}(\mu_X,\mu_Y)$, there is a corresponding \define{adjoint coupling} $\pi^\ast \in \mathcal{C}(\mu_Y,\mu_X)$, given by $\pi^\ast = \mathrm{swap}_\# \pi$, where $\mathrm{swap}:X \times Y \to Y \times X$ is the map $\mathrm{swap}(x,y) = (y,x)$. We say that the cost structure $\mathsf{C}$ is \define{symmetric} if 
    \[
    \mathsf{C}_{\Xcal,\Ycal}(\pi) = \mathsf{C}_{\Ycal,\Xcal}(\pi^\ast) \qquad \forall \; \pi \in \mathcal{C}(\mu_X,\mu_Y).
    \]
    \end{enumerate}
\end{defn}

\begin{remark}[Categorical Interpretation]\label{rem:category_theory}
The \emph{lax homomorphism} terminology is inspired by its connection to category theory: $\mathsf{C}$ can be viewed as a \emph{lax 2-functor} between a certain 2-category of pm-nets and a 2-category constructed via the monoidal structure of the non-negative real numbers (see, e.g.,~\cite{johnson20202,mcfaddin2023interleaving} for general background on 2-categories). Intuitively, the idea is that the cost structure translates between the gluing composition of couplings and  addition on $\R$, both considered as algebraic operations (hence it behaves like a homomorphism), but it only preserves the structure up to an inequality (this is the ``lax-ness'' of the homomorphism). As fully developing this perspective would require substantial effort and is not essential for the remainder of the paper, we defer its detailed treatment to future work.   
\end{remark}

\subsubsection{Parameterized Gromov-Wasserstein Distances as Pseudometrics.} 
We are now ready to state our result. Its proof is straightforward, owing to the careful design of our definitions. However, demonstrating that this general theorem applies to specific examples of interest requires additional work, which we present later.

\begin{theorem}\label{thm:metric_structure}
    If $\mathsf{C}$ is a symmetric lax homomorphism, then the parameterized Gromov-Wasserstein distance $\GW_\mathsf{C}$ defines a pseudometric on $\mathfrak{N}$. 
\end{theorem}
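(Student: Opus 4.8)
The plan is to verify the three pseudometric axioms for $\GW_\mathsf{C}$ directly from the hypotheses on $\mathsf{C}$: non-negativity (immediate, since $\mathsf{C}_{\Xcal,\Ycal}$ is $\R_{\geq 0}$-valued), the vanishing-on-the-diagonal property, and symmetry, and finally the triangle inequality. The first two are short: for reflexivity, take $\varphi = \mathrm{id}_X$, which is measure-preserving, so the coupling $(\mathrm{id}_X \times \mathrm{id}_X)_\# \mu_X$ lies in $\mathcal{C}(\mu_X,\mu_X)$ and, since $\mathsf{C}$ respects identities, $\GW_\mathsf{C}(\Xcal,\Xcal) \leq \mathsf{C}_{\Xcal,\Xcal}(\mathrm{id}_X) = 0$; combined with non-negativity this gives $\GW_\mathsf{C}(\Xcal,\Xcal) = 0$. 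For symmetry, I would use the fact that the adjoint map $\pi \mapsto \pi^\ast = \mathrm{swap}_\#\pi$ is a bijection $\mathcal{C}(\mu_X,\mu_Y) \to \mathcal{C}(\mu_Y,\mu_X)$ (its inverse is again $\mathrm{swap}_\#$), so taking the infimum over $\pi \in \mathcal{C}(\mu_X,\mu_Y)$ of $\mathsf{C}_{\Xcal,\Ycal}(\pi)$ equals the infimum over $\pi^\ast \in \mathcal{C}(\mu_Y,\mu_X)$ of $\mathsf{C}_{\Ycal,\Xcal}(\pi^\ast)$ by the symmetry hypothesis; hence $\GW_\mathsf{C}(\Xcal,\Ycal) = \GW_\mathsf{C}(\Ycal,\Xcal)$.

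The substantive step is the triangle inequality: given $\Xcal,\Ycal,\cZ \in \mathfrak{N}$ and $\epsilon > 0$, choose near-optimal couplings $\pi_{XY} \in \mathcal{C}(\mu_X,\mu_Y)$ and $\pi_{YZ} \in \mathcal{C}(\mu_Y,\mu_Z)$ with $\mathsf{C}_{\Xcal,\Ycal}(\pi_{XY}) \leq \GW_\mathsf{C}(\Xcal,\Ycal) + \epsilon$ and $\mathsf{C}_{\Ycal,\cZ}(\pi_{YZ}) \leq \GW_\mathsf{C}(\Ycal,\cZ) + \epsilon$. Apply the Gluing Lemma to produce $\tilde\pi$ on $X \times Y \times Z$ and form $\pi_{XY} \bullet \pi_{YZ} \in \mathcal{C}(\mu_X,\mu_Z)$. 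Since $\mathsf{C}$ respects gluing,
\[
\GW_\mathsf{C}(\Xcal,\cZ) \leq \mathsf{C}_{\Xcal,\cZ}(\pi_{XY} \bullet \pi_{YZ}) \leq \mathsf{C}_{\Xcal,\Ycal}(\pi_{XY}) + \mathsf{C}_{\Ycal,\cZ}(\pi_{YZ}) \leq \GW_\mathsf{C}(\Xcal,\Ycal) + \GW_\mathsf{C}(\Ycal,\cZ) + 2\epsilon,
\]
and letting $\epsilon \to 0$ gives the triangle inequality.

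I do not anticipate a genuine obstacle, since (as the paper itself notes) the definitions were engineered so that the gluing and identity axioms of a lax homomorphism feed directly into the triangle inequality and reflexivity. The only point requiring a little care is whether the infima in $\GW_\mathsf{C}$ are attained or merely approached — I would handle this with the $\epsilon$-argument above rather than assuming attainment, so no compactness or lower-semicontinuity input on $\mathsf{C}$ is needed here (those enter later, for \Cref{thm:isomorphism_of_pm_nets}). One should also note in passing that $\mathcal{C}(\mu_X,\mu_Y)$ is always nonempty (the product coupling $\mu_X \otimes \mu_Y$ belongs to it), so $\GW_\mathsf{C}$ is finite and well-defined on all of $\mathfrak{N} \times \mathfrak{N}$.
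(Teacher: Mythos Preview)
Your proposal is correct and follows essentially the same approach as the paper's proof, which dispatches each pseudometric axiom in a single clause (non-negativity and symmetry are ``obvious'', $\mathsf{C}_{\Xcal,\Xcal}(\mathrm{id}_X)=0$ gives reflexivity, and the triangle inequality ``follows immediately'' from respecting gluings). Your version simply unpacks these one-liners, and the $\epsilon$-argument you use for the triangle inequality is a sensible precaution since at this level of generality optimal couplings need not exist.
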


\begin{proof}
    Non-negativity and symmetry of $\GW_\mathsf{C}$ are obvious and $\mathsf{C}_{\Xcal,\Xcal}(\mathrm{id}_X) = 0$ implies that $\mathsf{GW}_\mathsf{C}(\Xcal,\Xcal) = 0$. Finally, triangle inequality follows immediately from the assumption that $\mathsf{C}$ respects gluings.
\end{proof}

\subsection{Specialized metric properties}

In this subsection, we examine special cases of the parameterized GW framework to which \Cref{thm:metric_structure} can be applied.

\subsubsection{Standard Examples of Parameterized Measure Networks and Cost Structures} Throughout this subsection, we consider the following pairs $(\mathfrak{N},\mathsf{C})$ of classes of pm-nets and cost structures, which we refer to as the \emph{standard examples}:
\begin{enumerate}
    \item $\mathfrak{N} = \mathfrak{N}_\nu$ is the class of pm-nets over a fixed parameter space $(\Omega,\nu)$, and $\mathsf{C}$ is the cost structure from \Cref{ex:fixed_parameter_space}, for some choices of $p,q \in [1,\infty]$. 
        \item $\mathfrak{N} = \mathfrak{N}_\mathrm{all}$ is the class of pm-nets whose parameter spaces are endowed with probability measures (i.e., $\Xcal \in \mathfrak{N}$ has $\nu_X(\Omega_X) = 1$), and $\mathsf{C}$ is the cost structure from \Cref{ex:general_parameter_spaces}, for some choice of $p,q \in [1,\infty]$.
\end{enumerate}

\begin{prop}
	\label{prop:optimal_couplings}
	For the standard examples $(\mathfrak{N},\mathsf{C})$, optimal couplings exist. That is, the infimum of the associated parameterized GW distance $\GW_\mathsf{C}$ is realized.
\end{prop}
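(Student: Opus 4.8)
The plan is the direct method of the calculus of variations: realize $\GW_\mathsf{C}(\Xcal,\Ycal)$ as the infimum of a lower semicontinuous functional over a weakly compact set of couplings, so that the infimum is attained. The first ingredient is compactness of coupling spaces. For any two Polish probability spaces $(X,\mu_X)$ and $(Y,\mu_Y)$ the set $\mathcal{C}(\mu_X,\mu_Y)$ is weakly compact: since Borel probability measures on Polish spaces are tight, for each $\varepsilon>0$ there are compact $K_X\subseteq X$ and $K_Y\subseteq Y$ with $\mu_X(X\setminus K_X)+\mu_Y(Y\setminus K_Y)<\varepsilon$, whence $\pi(K_X\times K_Y)>1-\varepsilon$ for \emph{every} $\pi\in\mathcal{C}(\mu_X,\mu_Y)$; the family is thus uniformly tight, it is also weakly closed, hence weakly compact by Prokhorov's theorem, and its weak topology is metrizable. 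For the second standard example I additionally use that $\mathcal{C}(\nu_X,\nu_Y)$ is weakly compact because $\Omega_X\times\Omega_Y$ is compact.

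For the first standard example ($\mathfrak{N}_\nu$ with the cost structure of \Cref{ex:fixed_parameter_space}) I would show that $\pi\mapsto\mathsf{C}_{\Xcal,\Ycal}(\pi)=\tfrac12\|\dis_p(\pi,\omega_X,\omega_Y)\|_{L^q(\Omega;\nu)}$ is lower semicontinuous (indeed continuous when $p,q<\infty$). The inputs are \Cref{lem:continuity_lemma}, namely that $(\pi,t)\mapsto\dis_p(\pi,\omega_X^t,\omega_Y^t)$ is continuous for $p<\infty$ and lower semicontinuous for $p=\infty$, together with the uniform bound $\sup_{t\in\Omega}\bigl(\|\omega_X^t\|_{L^\infty}+\|\omega_Y^t\|_{L^\infty}\bigr)<\infty$, which holds by continuity of $t\mapsto\omega_X^t$ and compactness of $\Omega$. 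Given $\pi_n\to\pi$ weakly: when $p,q<\infty$, dominated convergence yields $\int_\Omega\dis_p(\pi_n,\omega_X^t,\omega_Y^t)^q\,\nu(dt)\to\int_\Omega\dis_p(\pi,\omega_X^t,\omega_Y^t)^q\,\nu(dt)$, so $\mathsf{C}_{\Xcal,\Ycal}$ is continuous; when $p=\infty$ and $q<\infty$, Fatou's lemma applied to the pointwise-in-$t$ lower semicontinuous integrands gives lower semicontinuity; and the cases $q=\infty$ follow since a supremum of continuous (resp. lower semicontinuous) functions is lower semicontinuous. A lower semicontinuous function attains its infimum on a compact set, so an optimal coupling exists.

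For the second standard example ($\mathfrak{N}_\mathrm{all}$ with the cost structure of \Cref{ex:general_parameter_spaces}) I would instead study the functional $F(\pi,\xi)=\|\dis_p(\pi,\omega_X,\omega_Y)\|_{L^q(\Omega_X\times\Omega_Y,\xi)}$ on the compact product $\mathcal{C}(\mu_X,\mu_Y)\times\mathcal{C}(\nu_X,\nu_Y)$, using that $\GW_\mathsf{C}(\Xcal,\Ycal)=\tfrac12\inf_{(\pi,\xi)}F(\pi,\xi)$. If $F$ is jointly lower semicontinuous it attains a minimum at some $(\pi^\ast,\xi^\ast)$, and unwinding the definitions shows $\pi^\ast$ realizes the infimum defining $\GW_\mathsf{C}$ while $\xi^\ast$ realizes the inner infimum in $\mathsf{C}_{\Xcal,\Ycal}(\pi^\ast)$. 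To prove joint lower semicontinuity of $F$: for $p<\infty$, a short argument using compactness of $\Omega_X\times\Omega_Y$ upgrades the joint continuity of $\dis_p$ from \Cref{lem:continuity_lemma} to continuity of $\pi\mapsto g_\pi:=\dis_p(\pi,\omega_X^\bullet,\omega_Y^\bullet)$ as a map into $C(\Omega_X\times\Omega_Y)$ with the sup norm; then $\bigl|\,\|g_{\pi_n}\|_{L^q(\xi_n)}-\|g_\pi\|_{L^q(\xi_n)}\bigr|\le\|g_{\pi_n}-g_\pi\|_\infty\to 0$, while $\xi_n\to\xi$ weakly gives $\|g_\pi\|_{L^q(\xi_n)}\to\|g_\pi\|_{L^q(\xi)}$ by testing against the fixed bounded continuous function $g_\pi^q$ when $q<\infty$, and an elementary support argument gives lower semicontinuity when $q=\infty$. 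For $p=\infty$, I would use the identity $\dis_\infty=\sup_{p'\ge 1}\dis_{p'}$ together with monotone convergence to write $F_\infty=\sup_{1\le p'<\infty}F_{p'}$, so $F_\infty$ is lower semicontinuous as a supremum of the lower semicontinuous functions $F_{p'}$ just treated.

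I expect the main obstacle to be the endpoint exponents $p=\infty$ and $q=\infty$, where only lower semicontinuity (not continuity) is available and where weak convergence interacts badly with $L^\infty$-norms and with supports; these are precisely the points where one must invoke Fatou's lemma for the outer $\nu$-integral and the monotone representation $\dis_\infty=\sup_{p'}\dis_{p'}$ for the inner distortion, reducing everything to the finite-exponent case and the stability of lower semicontinuity under suprema.
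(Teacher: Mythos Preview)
Your proposal is correct and follows essentially the same approach as the paper: Prokhorov compactness of coupling spaces combined with lower semicontinuity of the distortion functional, the latter established via a compactness-of-$\Omega_X\times\Omega_Y$ upgrade to uniform control in the parameter variables and the representations $\dis_\infty=\sup_{p'<\infty}\dis_{p'}$ and $\|\cdot\|_{L^\infty}=\sup_{q'<\infty}\|\cdot\|_{L^{q'}}$ at the endpoint exponents. The only organizational difference is that the paper packages the joint lower semicontinuity of $(\xi,\pi)\mapsto\|\dis_p(\pi,\omega_X,\omega_Y)\|_{L^q(\xi)}$ into a single lemma (\Cref{lem:continuity_lemma_extended}) and then treats the $\mathfrak{N}_\nu$ case by restricting $\xi$ to the identity coupling, whereas you handle $\mathfrak{N}_\nu$ directly with dominated convergence/Fatou and then minimize over the product $\mathcal{C}(\mu_X,\mu_Y)\times\mathcal{C}(\nu_X,\nu_Y)$ for $\mathfrak{N}_\mathrm{all}$; the technical content is the same.
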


The proof uses the following extension of \Cref{lem:continuity_lemma}.

\begin{lemma}\label{lem:continuity_lemma_extended}
    Let $\Xcal$ and $\Ycal$ be pm-nets, and let $p,q \in [1,\infty]$. The function
	\begin{align*}
		G_{p,q}:\coup(\nu_X, \nu_Y) \times \coup(\mu_X, \mu_Y) &\to \R \\
		(\xi, \pi) &\mapsto \left( \int_{\Omega_X \times \Omega_Y} \dis_p(\pi, \omega_X^s, \omega_Y^t)^q \, \xi(ds \otimes dt) \right)^{1/q}
	\end{align*}
	is lower semicontinuous for $1 \leq p, q \leq \infty$ and continuous if $p, q < \infty$.
\end{lemma}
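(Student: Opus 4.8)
The plan is to prove genuine joint continuity in the case $p,q<\infty$, and then obtain the general statement by expressing $G_{p,q}$ as an increasing supremum of the finite-exponent versions $G_{k,r}$.

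\emph{Preliminaries.} First I would record that $\coup(\mu_X,\mu_Y)$ and $\coup(\nu_X,\nu_Y)$ are compact and metrizable in the weak topology: since $X\times Y$ and $\Omega_X\times\Omega_Y$ are Polish, the ambient spaces of Borel probability measures are Polish; each coupling set is weakly closed (marginal constraints pass to weak limits) and tight (marginals are fixed and tight on a Polish space), hence compact by Prokhorov's theorem. Moreover $\Omega_X\times\Omega_Y$ is compact. Consequently it suffices to reason with sequences, and continuous functions on these compact product spaces are automatically bounded and uniformly continuous.

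\emph{The core case $p,q<\infty$.} Here I claim the functional $F(\xi,\pi):=\int_{\Omega_X\times\Omega_Y}\dis_p(\pi,\omega_X^s,\omega_Y^t)^q\,\xi(ds\otimes dt)$ is continuous; since $u\mapsto u^{1/q}$ is continuous, this gives continuity of $G_{p,q}=F^{1/q}$. By \Cref{lem:continuity_lemma}, the map $h\colon\coup(\mu_X,\mu_Y)\times\Omega_X\times\Omega_Y\to\R$, $h(\pi,s,t)=\dis_p(\pi,\omega_X^s,\omega_Y^t)^q$, is continuous, hence bounded and uniformly continuous. For $(\xi_n,\pi_n)\to(\xi,\pi)$ I would split
\[
F(\xi_n,\pi_n)-F(\xi,\pi)=\int\bigl(h(\pi_n,\cdot)-h(\pi,\cdot)\bigr)\,d\xi_n+\Bigl(\int h(\pi,\cdot)\,d\xi_n-\int h(\pi,\cdot)\,d\xi\Bigr).
\]
The second bracket tends to $0$ because $h(\pi,\cdot)$ is a fixed bounded continuous function on $\Omega_X\times\Omega_Y$ and $\xi_n\to\xi$ weakly; the first tends to $0$ because uniform continuity of $h$ together with $\pi_n\to\pi$ forces $\sup_{(s,t)}\lvert h(\pi_n,s,t)-h(\pi,s,t)\rvert\to0$, while each $\xi_n$ is a probability measure.

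\emph{Reduction of the general case, and the main obstacle.} On a probability space the $L^a$-norms are nondecreasing in $a$ and increase up to the $L^\infty$-norm. Applying this to $\pi\otimes\pi$ recovers $\dis_\infty(\pi,\omega_X^s,\omega_Y^t)=\sup_{k\in\mathbb{N}}\dis_k(\pi,\omega_X^s,\omega_Y^t)$ as an increasing limit (consistent with the last line of the proof of \Cref{lem:continuity_lemma}), and applying it to $\xi$ gives $\lVert\,\cdot\,\rVert_{L^\infty(\xi)}=\sup_{r\in\mathbb{N}}\lVert\,\cdot\,\rVert_{L^r(\xi)}$. Pulling these monotone suprema outside the integral via the monotone convergence theorem — together with the elementary identity $\lVert\sup_k g_k\rVert_{L^\infty(\xi)}=\sup_k\lVert g_k\rVert_{L^\infty(\xi)}$ for increasing $g_k$, and the fact that $u\mapsto u^{1/q}$ is nondecreasing — yields, for all $p,q\in[1,\infty]$,
\[
G_{p,q}(\xi,\pi)=\sup_{k\in I_p,\;r\in I_q}G_{k,r}(\xi,\pi),\qquad I_a:=\{a\}\text{ if }a<\infty,\quad I_a:=\mathbb{N}\text{ if }a=\infty,
\]
where finiteness of $G_{p,q}$ (needed for the formula to make sense) follows from the finiteness clause of \Cref{lem:continuity_lemma} and compactness of $\Omega_X\times\Omega_Y$. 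Each $G_{k,r}$ with $k,r<\infty$ is continuous by the previous step, so $G_{p,q}$ is a supremum of a countable family of continuous functions, hence lower semicontinuous; and when $p,q<\infty$ the index set $I_p\times I_q$ is a singleton, so $G_{p,q}$ is in fact continuous. The one genuinely delicate point is exactly this endpoint behavior: \Cref{lem:continuity_lemma} supplies only lower semicontinuity of $\dis_\infty$, so continuity of $G_{p,q}$ fails in general when $p=\infty$ or $q=\infty$; the supremum device is what converts the available semicontinuity into the correct statement, and the care required is in justifying that the two monotone suprema (over the inner $L^p$-exponent and the outer $L^q$-exponent) can be extracted from the integral simultaneously, for which monotone convergence and boundedness of the integrand (from compactness of the parameter spaces) suffice.
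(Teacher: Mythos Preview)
Your proof is correct and follows essentially the same strategy as the paper: both split the $p,q<\infty$ case into a $\xi$-variation term (handled by weak convergence against a bounded continuous integrand) and a $\pi$-variation term (handled by a uniform-in-$(s,t)$ bound), and both reduce the endpoint cases to suprema of the finite-exponent functionals via monotone convergence. Your version is slightly more streamlined in that you invoke uniform continuity of $h$ on the compact metric domain directly, whereas the paper reproves the needed uniform bound by an explicit finite-cover argument, and you treat the $p=\infty$ and $q=\infty$ cases in a single unified supremum formula rather than sequentially.
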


\begin{proof}
    Suppose $p < \infty$ and fix $\epsilon > 0$, $\xi_0 \in \coup(\nu_X, \nu_Y)$ and $\pi_0 \in \coup(\mu_X, \mu_Y)$. The function
	\begin{align*}
		G_{p,q}(\bullet, \pi_0):\coup(\nu_X, \nu_Y) &\to \R \\
		\xi &\mapsto \left( \int_{\Omega_X \times \Omega_Y} \dis_p(\pi_0, \omega_X^s, \omega_Y^t)^q \, \xi(ds \otimes dt) \right)^{1/q}
	\end{align*}
    is continuous in the topology of weak convergence because the cost function $(s,t) \mapsto \dis_p(\pi_0, \omega_X^s, \omega_Y^t)^q$ is continuous, by \Cref{lem:continuity_lemma}, and bounded, by compactness of $\Omega_X \times \Omega_Y$. Hence, there exists $V_1 \subset \coup(\Omega_X, \Omega_Y)$ neighborhood of $\xi_0$ such that for all $\xi \in V_1$, $|G_{p,q}(\xi, \pi_0) - G_{p,q}(\xi_0, \pi_0)| < \epsilon$.
	
    For the second component of $G_{p,q}$, we claim that there exists $V_2 \subset \coup(\mu_X, \mu_Y)$, a neighborhood of $\pi_0$, such that
	\begin{equation}
		\label{eq:bound_pi_0}
		|\dis_p(\pi, \omega_X^s, \omega_Y^t) - \dis_p(\pi_0, \omega_X^{s}, \omega_Y^{t})| < 2\epsilon
	\end{equation}
	for all $\pi \in V_2$ and any $s \in \Omega_X$ and $t \in \Omega_Y$. Since $\pi \mapsto \dis_p(\pi, \omega_X^s, \omega_Y^t)$ is continuous by \cite[Lemma 2.3]{ChowdhuryMemoli2019}, we can find $V_{s,t} \subset \coup(\mu_X, \mu_Y)$ and $U_{s,t} \subset \Omega_X \times \Omega_Y$ neighborhoods of $\pi_0$ and $(s,t) \in \Omega_X \times \Omega_Y$, respectively, such that for all $\pi \in V_{s,t}$ and $(s', t') \in U_{s,t}$,
	\begin{equation}
		\label{eq:bound_pi_prime}
		|\dis_p(\pi, \omega_X^{s'}, \omega_Y^{t'}) - \dis_p(\pi_0, \omega_X^{s}, \omega_Y^{t})| < \epsilon.
	\end{equation}
	By compactness of $\Omega_X$ and $\Omega_Y$, there exists a finite cover $\{ U_{s_i, t_i} \}_{1 \leq i \leq n}$ of $\Omega_X \times \Omega_Y$. We claim that the neighborhood of $\pi_0$ defined by $V_2 = \bigcap_{1 \leq i \leq n} V_{s_i, t_i}$ is the desired set. Fix $\pi \in V_2$. For any $(s,t) \in \Omega_X \times \Omega_Y$, there exists $1 \leq k \leq n$ such that $(s,t) \in U_{s_k, t_k}$. Since $\pi \in V_2 \subset V_{s_k, t_k}$ and $(s,t) \in U_{s_k, t_k}$, inequality \Cref{eq:bound_pi_prime} gives $|\dis_p(\pi, \omega_X^s, \omega_Y^t) - \dis_p(\pi_0, \omega_X^{s_k}, \omega_Y^{t_k})| < \epsilon$. By the same reasoning, $\pi_0 \in V_2 \subset V_{s_k, t_k}$ and $(s,t) \in U_{s_k, t_k}$ imply $|\dis_p(\pi_0, \omega_X^{s}, \omega_Y^{t}) - \dis_p(\pi_0, \omega_X^{s_k}, \omega_Y^{t_k})| < \epsilon$. Hence,
	\begin{multline*}
		|\dis_p(\pi, \omega_X^s, \omega_Y^t) - \dis_p(\pi_0, \omega_X^{s}, \omega_Y^{t})| \\
		< |\dis_p(\pi, \omega_X^s, \omega_Y^t) - \dis_p(\pi_0, \omega_X^{s_k}, \omega_Y^{t_k})| + |\dis_p(\pi_0, \omega_X^{s}, \omega_Y^{t}) - \dis_p(\pi_0, \omega_X^{s_k}, \omega_Y^{t_k})|
		< 2\epsilon.
	\end{multline*}
	\indent Now we finish the proof of the continuity of $G_{p,q}$. If $\xi \in V_1$ and $\pi \in V_2$, the reverse triangle inequality of the $L^q$ norm gives
	\begin{align*}
		|G_{p,q}(\xi, \pi) - G_{p,q}(\xi_0, \pi_0)|
		&\leq |G_{p,q}(\xi, \pi) - G_{p,q}(\xi, \pi_0)| + |G_{p,q}(\xi, \pi_0) - G_{p,q}(\xi_0, \pi_0)| \\
		&< \| \dis_p(\pi, \omega_X^s, \omega_Y^t) - \dis_p(\pi_0, \omega_X^{s}, \omega_Y^{t}) \|_{L^q(\Omega_X \times \Omega_Y, \xi)} + \epsilon \\
		&< \| 2\epsilon \|_{L^q(\Omega_X \times \Omega_Y, \xi)} + \epsilon = 3\epsilon.
	\end{align*}

    Finally, we establish lower semicontinuity in the case that $p$ or $q$ is $\infty$. For fixed $p<\infty$, standard properties of $L^q$-norms imply
    \[
    G_{p,\infty}(\xi,\pi) = \sup_{1 \leq q < \infty} G_{p,q}(\xi,\pi),
    \]
    so that $G_{p,\infty}$ is a supremum of a family of continuous functions, hence lower semicontinuous. Now fix $q \in [1,\infty]$ and $(\xi,\pi)$. Observe that, by H\"{o}lder's inequality, $\dis_p(\pi,\omega_X^s,\omega_Y^t)$ is an increasing function of $p \in [1,\infty)$, with limit equal to $\dis_\infty(\pi,\omega_X^s,\omega_Y^t)$. By the Monotone Convergence Theorem, 
    \[
    G_{\infty,q}(\xi,\pi) = \|\dis_\infty(\pi,\omega_X^\bullet, \omega_Y^\bullet)\|_{L^q(\Omega_X \otimes \Omega_Y; \xi)} = \sup_{1 \leq p < \infty} \|\dis_p(\pi,\omega_X^\bullet, \omega_Y^\bullet)\|_{L^q(\Omega_X \otimes \Omega_Y; \xi)} = \sup_{1 \leq p < \infty} G_{p,q}(\xi,\pi),
    \]
    so that lower semicontinuity follows.
\end{proof}

We will also use the following general result.

\begin{lemma}\label{lem:general_optimal_couplings}
    Suppose that, for all $\Xcal,\Ycal \in \mathfrak{N}$, $\mathsf{C}_{\Xcal,\Ycal}:\mathcal{C}(\mu_X,\mu_Y) \to \R_{\geq 0}$ is lower semicontinuous in the topology of weak convergence. Then the infimum in the definition of $\GW_\mathsf{C}$ is always realized.
\end{lemma}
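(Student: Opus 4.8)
The plan is to run the \emph{direct method of the calculus of variations}: a lower semicontinuous real-valued function on a compact topological space attains its infimum, so the entire content of the lemma reduces to the statement that $\coup(\mu_X,\mu_Y)$ is compact in the topology of weak convergence. This is a classical fact in optimal transport (see, e.g., \cite{Villani2009}), and I would simply recall its two ingredients. First, $\coup(\mu_X,\mu_Y)$ is nonempty, since it always contains the product measure $\mu_X \otimes \mu_Y$. Second, because $X$ and $Y$ are Polish, the single measures $\mu_X$ and $\mu_Y$ are tight, and this makes the whole family $\coup(\mu_X,\mu_Y)$ tight in $\Pcal(X \times Y)$: given $\epsilon > 0$, choose compact $K_X \subseteq X$ and $K_Y \subseteq Y$ with $\mu_X(X \setminus K_X), \mu_Y(Y \setminus K_Y) < \epsilon/2$; then every $\pi \in \coup(\mu_X,\mu_Y)$ satisfies $\pi\bigl((X\times Y)\setminus(K_X\times K_Y)\bigr) \leq \mu_X(X\setminus K_X) + \mu_Y(Y\setminus K_Y) < \epsilon$, with $K_X\times K_Y$ compact. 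By Prokhorov's theorem, $\coup(\mu_X,\mu_Y)$ is relatively weakly compact.

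Next I would verify that $\coup(\mu_X,\mu_Y)$ is weakly closed, hence weakly compact. If $\pi_n \to \pi$ weakly with each $\pi_n$ a coupling, then for every bounded continuous $f$ on $X$ the function $f \circ p_X$ is bounded and continuous on $X\times Y$, so $\int f\, d(p_X)_\#\pi = \lim_n \int f\circ p_X \, d\pi_n = \lim_n \int f \, d\mu_X = \int f\, d\mu_X$, giving $(p_X)_\#\pi = \mu_X$; symmetrically $(p_Y)_\#\pi = \mu_Y$. Thus $\pi \in \coup(\mu_X,\mu_Y)$. Since $X \times Y$ is separable metrizable, the weak topology on $\Pcal(X\times Y)$ is itself metrizable, so weak compactness of $\coup(\mu_X,\mu_Y)$ is equivalent to sequential compactness; I would record this so that the remainder of the argument can be phrased with sequences.

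Finally, I would carry out the extremal argument. Set $m \coloneqq \inf_{\pi \in \coup(\mu_X,\mu_Y)} \mathsf{C}_{\Xcal,\Ycal}(\pi)$ and take a minimizing sequence $(\pi_n)$ with $\mathsf{C}_{\Xcal,\Ycal}(\pi_n) \to m$. By sequential compactness, extract a subsequence $\pi_{n_k}$ converging weakly to some $\pi^\ast \in \coup(\mu_X,\mu_Y)$. Lower semicontinuity of $\mathsf{C}_{\Xcal,\Ycal}$ then gives $\mathsf{C}_{\Xcal,\Ycal}(\pi^\ast) \leq \liminf_k \mathsf{C}_{\Xcal,\Ycal}(\pi_{n_k}) = m$, while $\mathsf{C}_{\Xcal,\Ycal}(\pi^\ast) \geq m$ by definition of the infimum; hence $\mathsf{C}_{\Xcal,\Ycal}(\pi^\ast) = m$, so the infimum is realized at $\pi^\ast$. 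I do not expect any genuine obstacle here: the only substantive input is the weak compactness of the set of couplings, which is standard, and the only points deserving care are confirming that weak limits of couplings remain couplings and that the ambient space of measures is metrizable, so that the sequential phrasing of the direct method is legitimate.
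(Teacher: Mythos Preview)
Your proof is correct and follows the same approach as the paper: both rely on Prokhorov's theorem to obtain weak compactness of $\coup(\mu_X,\mu_Y)$ and then invoke that a lower semicontinuous function attains its infimum on a compact set. The paper's version is terser, citing \cite[Lemma 1.2]{Sturm2023} for the compactness of the coupling space rather than spelling out tightness, closedness, and metrizability as you do.
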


\begin{proof}
    By Prokhorov's theorem, the space $\mathcal{C}(\mu_X,\mu_Y)$ is compact (see \cite[Lemma 1.2]{Sturm2023} for details). The lower semicontinuous function $\mathsf{C}_{\Xcal,\Ycal}$ must therefore achieve its minimum over this set.
\end{proof}

\begin{proof}[Proof of \Cref{prop:optimal_couplings}]
    By \Cref{lem:general_optimal_couplings}, it suffices to show that $\mathsf{C}$ is lower semicontinuous. 
    The conclusion follows immediately from \Cref{lem:continuity_lemma_extended} in the setting of $\mathfrak{N}_\mathrm{all}$, whereas $\mathfrak{N}_\nu$ requires a bit more work. Indeed, in the latter case, the associated cost structure $\mathsf{C}$ is obtained by restricting the first coordinate of $G_{p,q}$ (as defined in \Cref{lem:continuity_lemma_extended}) to be the identity coupling in $\mathcal{C}(\nu,\nu)$. Lower semicontinuity of $G_{p,q}$ then implies lower semicontinuity of its restriction.
\end{proof}

\subsubsection{Metric Structure for the Standard Examples.} Finally, we show that the standard examples fall under the purview of \Cref{thm:metric_structure}, so that the associated distances $\GW_\mathsf{C}$ define pseudometrics. Moreover, in these settings, we can completely characterize the distance zero equivalence classes, using the following definition.

\begin{defn}[Isomorphisms for Parameterized Measure Networks]
\label{def:equivalence_of_measure_networks}
    Let $\Xcal$ and $\cZ$ be pm-nets. A pair of maps $(\Phi,\varphi)$, with $\Phi:\Omega_Z \to \Omega_X$ and $\varphi:Z \to X$, is called \define{structure-preserving} if 
    \begin{enumerate}
        \item\label{item:equiv1} $\Phi$ and $\varphi$ are both measure-preserving maps, and
        \item\label{item:equiv2} the pair $(\Phi,\varphi)$ preserves parameterized network kernels, in the sense that 
        \[
        \omega_Z^t(z,z') = \omega_X^{\Phi(t)}(\varphi(z),\varphi(z'))
        \]
        holds almost everywhere, with respect to $\nu_Z \otimes \mu_Z \otimes \mu_Z$. 
    \end{enumerate}

    Let $\Ycal$ be another pm-net. We say that $\cZ$ is a \define{stabilization} of $\Xcal$ and $\Ycal$ if there exist structure-preserving maps $\Phi_A:\Omega_Z \to \Omega_A$ and $\varphi_A:Z \to A$ for $A \in \{X,Y\}$. If a stabilization exists, we say that $\Xcal$ and $\Ycal$ are \define{isomorphic}. In the case that $\Xcal$ and $\Ycal$ have the same parameter space $(\Omega,\nu)$, we say that $\Xcal$ and $\Ycal$ are \define{fixed-parameter isomorphic} if there is a stabilization $\mathcal{Z}$ with parameter space $(\Omega,\nu)$ such that the maps $\Phi_A$ are identity maps.
\end{defn}

\begin{remark}[Weak Isomorphism for Measure Networks]\label{rem:weak_isomorphism_measure_networks}
    Let $\Xcal = (X,\mu_X,\omega_X)$ and $\Ycal = (Y,\mu_Y,\omega_Y)$ be measure networks and let $\overline{\Xcal}$ and $\overline{\Ycal}$ denote their representations as pm-nets parameterized over the one point space $(\Omega,\nu)$ (cf.~\Cref{rem:recovers_GW_distance}). Then $\Xcal$ and $\Ycal$ are weakly isomorphic (\Cref{def:weakly_isomorphic}) if and only if their pm-net representations $\overline{\Xcal}$ and $\overline{\Ycal}$ are fixed-parameter isomorphic. 
\end{remark}

In light of \Cref{rem:weak_isomorphism_measure_networks}, the following generalizes \cite[Theorems 2.3 and 2.4]{ChowdhuryMemoli2019}, which characterize the pseudometric structure of $\GW_p$ on the space of measure networks.

\begin{theorem}
	\label{thm:isomorphism_of_pm_nets}
    For the standard examples $(\mathfrak{N},\mathsf{C})$, the associated parameterized GW distance $\GW_\mathsf{C}$ defines a pseudometric. Moreover, $\mathsf{GW}_\mathsf{C}(\Xcal,\Ycal) = 0$ if and only if: 
    \begin{enumerate}
        \item $\Xcal$ and $\Ycal$ are isomorphic, in the case $\mathfrak{N} = \mathfrak{N}_\mathrm{all}$;
        \item $\Xcal$ and $\Ycal$ are fixed-parameter isomorphic, in the case $\mathfrak{N} = \mathfrak{N}_\nu$.
    \end{enumerate}
\end{theorem}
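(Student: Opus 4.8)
The statement has two halves: that $\GW_\mathsf{C}$ is a pseudometric, and the characterization of the distance-zero pairs. For the first half the plan is to verify that, in each of the two standard examples, the cost structure $\mathsf{C}$ is a symmetric lax homomorphism, and then invoke \Cref{thm:metric_structure}. Symmetry is immediate from $\dis_p(\pi,f_X,f_Y) = \dis_p(\pi^\ast,f_Y,f_X)$ together with the fact that $\pi \mapsto \pi^\ast$ and $\xi \mapsto \xi^\ast$ are bijections of the relevant coupling spaces, and "respects identities" holds because the diagonal coupling of $\mu_X$ (and, in the $\mathfrak{N}_\mathrm{all}$ case, also of $\nu_X$) makes the distortion vanish identically. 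The only property requiring real work is "respects gluing": given $\pi_{XY} \in \coup(\mu_X,\mu_Y)$ and $\pi_{YZ} \in \coup(\mu_Y,\mu_Z)$, form the glued measure $\tilde\pi$ on $X \times Y \times Z$ via the Gluing Lemma, rewrite $\dis_p(\pi_{XY}\bullet\pi_{YZ},\omega_X^a,\omega_Z^c)$ as an $L^p(\tilde\pi \otimes \tilde\pi)$-norm through the coordinate projections, insert and subtract $\omega_Y^b$, and use the $L^p$ triangle inequality to obtain, for every $a,b,c$,
\[
\dis_p(\pi_{XY}\bullet\pi_{YZ},\omega_X^a,\omega_Z^c) \;\le\; \dis_p(\pi_{XY},\omega_X^a,\omega_Y^b) + \dis_p(\pi_{YZ},\omega_Y^b,\omega_Z^c).
\]
In the $\mathfrak{N}_\nu$ case one sets $a=b=c=t$, raises to the $q$-th power, integrates against $\nu$, and applies the $L^q$ triangle inequality; in the $\mathfrak{N}_\mathrm{all}$ case one additionally glues near-optimal parameter couplings $\xi' \in \coup(\nu_X,\nu_Y)$ and $\xi'' \in \coup(\nu_Y,\nu_Z)$ into a measure $\tilde\xi$ on $\Omega_X\times\Omega_Y\times\Omega_Z$, integrates the displayed bound against $\tilde\xi$, applies the $L^q$ triangle inequality, recognizes the two resulting terms as $L^q$-norms over the marginals $\xi'$ and $\xi''$, takes the infimum over $\coup(\nu_X,\nu_Z)$ on the left (the admissible $\xi'\bullet\xi''$), and lets the optimality gap go to zero. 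For $p$ or $q$ equal to $\infty$ the same inequalities persist with the supremum conventions, using lower semicontinuity from \Cref{lem:continuity_lemma} and \Cref{lem:continuity_lemma_extended} where needed.

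\textbf{Distance zero: the "if" direction.}
Suppose $\cZ$ stabilizes $\Xcal$ and $\Ycal$ via structure-preserving pairs $(\Phi_A,\varphi_A)$ for $A \in \{X,Y\}$. The plan is to test the cost with the couplings $\pi = (\varphi_X \times \varphi_Y)_\# \mu_Z \in \coup(\mu_X,\mu_Y)$ and $\xi = (\Phi_X \times \Phi_Y)_\# \nu_Z \in \coup(\nu_X,\nu_Y)$, which are valid because the component maps are measure-preserving. Pushing the defining integral of $\mathsf{C}_{\Xcal,\Ycal}(\pi)$ (against $\xi$, or directly against $\nu$ in the fixed-parameter case) back to $Z$ via $\varphi_X,\varphi_Y,\Phi_X,\Phi_Y$ and invoking the kernel-compatibility condition of \Cref{def:equivalence_of_measure_networks} for both $A = X$ and $A = Y$ turns the integrand into $|\omega_Z^r(z,z') - \omega_Z^r(z,z')|$, which vanishes $(\nu_Z\otimes\mu_Z\otimes\mu_Z)$-a.e.; by Fubini the corresponding $\dis_p$ is zero for $\nu_Z$-a.e.\ $r$, hence $\GW_\mathsf{C}(\Xcal,\Ycal) = 0$.

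\textbf{Distance zero: the "only if" direction.}
This is the substantive half. By \Cref{prop:optimal_couplings} there is an optimal $\pi^\ast$; in the $\mathfrak{N}_\mathrm{all}$ case, combining lower semicontinuity of $G_{p,q}$ (\Cref{lem:continuity_lemma_extended}) with compactness of $\coup(\nu_X,\nu_Y)\times\coup(\mu_X,\mu_Y)$ (Prokhorov) yields a joint optimizer $(\xi^\ast,\pi^\ast)$ realizing the value $0$. Hence $\dis_p(\pi^\ast,\omega_X^s,\omega_Y^t) = 0$ for $\nu$-a.e.\ $t$ (respectively $\xi^\ast$-a.e.\ $(s,t)$), which gives $\omega_X^s(x,x') = \omega_Y^t(y,y')$ for $(\pi^\ast\otimes\pi^\ast)$-a.e.\ $((x,y),(x',y'))$. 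A Fubini argument — using a jointly measurable representative of $(t,x,x')\mapsto\omega_X^t(x,x')$, which exists because $\omega_X$ is continuous and $\Omega_X$ is compact, so its image lies in a separable subset of $L^\infty$ — upgrades this to an equality holding $(\nu\otimes\pi^\ast\otimes\pi^\ast)$-a.e.\ (respectively $(\xi^\ast\otimes\pi^\ast\otimes\pi^\ast)$-a.e.). The plan is then to take the stabilization $\cZ = (X\times Y,\ \pi^\ast,\ \Omega_Z,\ \nu_Z,\ \omega_Z)$ with $\varphi_X = p_X$, $\varphi_Y = p_Y$, and $\omega_Z^{\bullet} := \omega_X^{\Phi_X(\bullet)}\circ(p_X,p_X)$, where $(\Omega_Z,\nu_Z,\Phi_X,\Phi_Y) = (\Omega,\nu,\mathrm{id},\mathrm{id})$ in the $\mathfrak{N}_\nu$ case and $(\Omega_X\times\Omega_Y,\ \xi^\ast,\ q_X,\ q_Y)$ with coordinate projections in the $\mathfrak{N}_\mathrm{all}$ case. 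One then checks that $\cZ$ is a genuine pm-net — products of compact Polish spaces are compact Polish, and $\omega_Z$ is $L^\infty$-continuous because precomposition by a measure-preserving map is an $L^\infty$-isometry — that $\varphi_A$ and $\Phi_A$ are measure-preserving, and that the kernel-compatibility condition holds for $A = X$ by construction and for $A = Y$ by the almost-everywhere identity just derived. Thus $\cZ$ is a stabilization, so $\Xcal$ and $\Ycal$ are isomorphic (respectively fixed-parameter isomorphic).

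\textbf{Main obstacle.}
I expect the delicate point to be the passage, in the "only if" direction, from "$\dis_p$ vanishes for almost every parameter" to the joint almost-everywhere identity of kernels demanded by \Cref{def:equivalence_of_measure_networks}: this requires a jointly measurable version of the parameterized kernel and a careful Fubini argument over the triple product $\nu_Z\otimes\mu_Z\otimes\mu_Z$, and in the $\mathfrak{N}_\mathrm{all}$ case one must extract optimizers simultaneously on the base and parameter coupling spaces, which is why one leans on \Cref{lem:continuity_lemma_extended} and Prokhorov compactness rather than \Cref{prop:optimal_couplings} alone. The double gluing (of base couplings and of parameter couplings) in the "respects gluing" verification is also somewhat fiddly but ultimately mechanical.
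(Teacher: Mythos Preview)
Your proposal is correct and follows essentially the same route as the paper: verify the symmetric lax homomorphism axioms (with the key gluing step handled by lifting to $\tilde\pi$ on $X\times Y\times Z$, applying the $L^p$ triangle inequality pointwise in the parameters, then taking the $L^q$-norm over a glued parameter coupling), and for the distance-zero characterization build the stabilization on $Z = X\times Y$ with $\mu_Z = \pi^\ast$, $\Omega_Z = \Omega_X\times\Omega_Y$ (or $\Omega$), $\nu_Z = \xi^\ast$ (or $\nu$), and $\omega_Z$ pulled back through the $X$-projection.

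Two minor stylistic differences are worth noting. In the ``if'' direction you test directly with $\pi = (\varphi_X\times\varphi_Y)_\#\mu_Z$ and $\xi = (\Phi_X\times\Phi_Y)_\#\nu_Z$; the paper instead shows $\GW_\mathsf{C}(\cZ,\Xcal)=0$ using the graph couplings $(\mathrm{id}\times\varphi_X)_\#\mu_Z$, $(\mathrm{id}\times\Phi_X)_\#\nu_Z$ and then invokes the triangle inequality already established. Both work. You are also more explicit than the paper about two points it glosses over: extracting a joint optimizer $(\xi^\ast,\pi^\ast)$ (the paper cites only \Cref{prop:optimal_couplings}, which strictly gives optimal $\pi$; one needs \Cref{lem:continuity_lemma_extended} plus compactness of $\coup(\nu_X,\nu_Y)$ to get $\xi^\ast$), and verifying that the constructed $\cZ$ is a genuine pm-net (compactness of $\Omega_Z$, $L^\infty$-continuity of $\omega_Z$). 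Your care here is warranted, though in the end these checks are routine.
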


\begin{proof}
    By \Cref{thm:metric_structure}, to show that $\GW_\mathsf{C}$ is a pseudometric, we need to show that $\mathsf{C}$ is symmetric, respects identities, and respects gluings. The first two properties are obvious, so we focus on the last.

    Let $\cX, \cY, \cZ \in \mathfrak{N}$. For ease of notation, let $\overline{\omega}_A^t := \omega_A^t \circ (p_A, p_A)$ for $t \in \Omega_A$ and $A \in \{X, Y, Z\}$, with $p_A$ denoting projection onto the $A$ factor of any product of spaces including $A$. Let $\pi_{XY} \in \coup(\mu_X, \mu_Y)$, $\pi_{YZ} \in \coup(\mu_Y, \mu_Z)$, $\pi_{XY} \bullet \pi_{YZ} \in \coup(\mu_X, \mu_Y)$ and $\tilde{\pi}$ as in  \Cref{def:props_cost_structures}. Then for all $r \in \Omega_X$, $s \in \Omega_Y$ and $t \in \Omega_Z$, the triangle inequality of the $L^p$ norm gives
    \begin{align}
    \label{eq:triangle_inequality_1}
    	\dis_p(\pi_{XY} \bullet \pi_{YZ}, \omega_X^r, \omega_Z^t)
    	&= \| \overline{\omega}_X^r - \overline{\omega}_Z^t \|_{L^p(X \times Z; \pi_{XY} \bullet \pi_{YZ})} \\
    	&= \| \overline{\omega}_X^r - \overline{\omega}_Z^t \|_{L^p(X \times Y \times Z; \tilde{\pi})} \nonumber\\
    	&\leq \| \overline{\omega}_{X}^r - \overline{\omega}_{Y}^s \|_{L^p(X \times Y \times Z; \tilde{\pi})} + \| \overline{\omega}_{Y}^s - \overline{\omega}_{Z}^t \|_{L^p(X \times Y \times Z; \tilde{\pi})} \nonumber\\
    	&= \| \overline{\omega}_{X}^r - \overline{\omega}_{Y}^s \|_{L^p(X \times Y; \pi_{XY})} + \| \overline{\omega}_{Y}^s - \overline{\omega}_{Z}^t \|_{L^p(Y \times Z; \pi_{YZ})} \nonumber\\
    	&= \dis_p(\pi_{XY}, \omega_X^r, \omega_Y^s) + \dis_p(\pi_{YZ}, \omega_Y^s, \omega_Z^t). \nonumber
    \end{align}
    Analogously, given $\xi_{XY} \in \coup(\nu_X, \nu_Y)$ and $\xi_{YZ} \in \coup(\nu_Y, \nu_Z)$, we get
    \begin{multline}
    	\label{eq:triangle_inequality}
        \| \dis_p(\pi_{XY} \bullet \pi_{YZ}, \omega_X, \omega_Z) \|_{L^q(\Omega_X \times \Omega_Z; \xi_{XY} \bullet \xi_{YZ})} \\
        \leq \| \dis_p(\pi_{XY}, \omega_X, \omega_Y) \|_{L^q(\Omega_X \times \Omega_Y; \xi_{XY})} + \| \dis_p(\pi_{YZ}, \omega_Y, \omega_Z) \|_{L^q(\Omega_Y \times \Omega_Z; \xi_{YZ})}.
    \end{multline}
    Recall that in the case of $\mathfrak{N}_\nu$, we have a common parameter space $(\Omega, \nu)$. If we set $r=s=t$ and $\xi_{XY} = \xi_{YZ} = (\mathrm{id}_{\Omega} \times \mathrm{id}_{\Omega})_{\#} \nu$, \Cref{eq:triangle_inequality} becomes
    \begin{align*}
    	\mathsf{C}_{\cX\cZ}(\pi_{XY} \bullet \pi_{YZ})
    	&= \| \dis_p(\pi_{XY} \bullet \pi_{YZ}, \omega_X^t, \omega_Z^t) \|_{L^q(\Omega; \nu)} \\
    	&\leq \| \dis_p(\pi_{XY}, \omega_X^t, \omega_Y^t) \|_{L^q(\Omega; \nu)} + \| \dis_p(\pi_{YZ}, \omega_Y^t, \omega_Z^t) \|_{L^q(\Omega; \nu)} \\
    	&= \mathsf{C}_{\cX\cY}(\pi_{XY}) + \mathsf{C}_{\cY\cZ}(\pi_{YZ}).
    \end{align*}
    In the case of $\mathfrak{N}_\mathrm{all}$, we infimize the right side of  \Cref{eq:triangle_inequality} over $\xi_{XY} \in \coup(\nu_X, \nu_Y)$ and $\xi_{YZ} \in \coup(\nu_Y, \nu_Z)$ to obtain
    \begin{align*}
    	\mathsf{C}_{\cX\cZ}(\pi_{XY} \bullet \pi_{YZ})
    	&= \inf_{\xi \in \coup(\nu_X, \nu_Z)} \| \dis_p(\pi_{XY} \bullet \pi_{YZ}, \omega_X, \omega_Z) \|_{L^q(\Omega_X \times \Omega_Z; \xi)} \\
    	&\leq \| \dis_p(\pi_{XY} \bullet \pi_{YZ}, \omega_X, \omega_Z) \|_{L^q(\Omega_X \times \Omega_Z; \xi_{XY} \bullet \xi_{YZ})} \\
    	&\leq \mathsf{C}_{\cX\cY}(\pi_{XY}) + \mathsf{C}_{\cY\cZ}(\pi_{YZ}).
    \end{align*}
    Hence, both cost structures in the standard examples respect gluings.
    
    It remains to characterize the distance zero conditions. First consider the case of $\mathfrak{N}_\mathrm{all}$. For $\Xcal,\mathcal{Z} \in \mathfrak{N}_\mathrm{all}$, suppose that there exist a structure-preserving pair of maps $\Phi:\Omega_Z \to \Omega_X$ and $\varphi:Z \to X$. We use these to construct couplings as pushforwards: 
    \[
    \xi = (\mathrm{id}_{\Omega_Z} \times \Phi)_\# \nu_Z \quad \mbox{and} \quad \pi = (\mathrm{id}_Z \times \varphi)_\# \mu_Z.
    \]
    It is then straightforward to verify, via the change-of-variables formula, that
    \[
    \mathsf{C}_{\Xcal,\mathcal{Z}}(\pi) \leq \|\mathrm{dis}_p(\pi,\omega_Z,\omega_X)\|_{L^q(\Omega_Z \times \Omega_X; \nu_Z \otimes \nu_X)} = 0,
    \]
    so that $\GW_\mathsf{C}(\mathcal{Z},\Xcal)=0$. Thus, if $\Xcal$ and $\Ycal$ are isomorphic pm-nets, it follows by the triangle inequality of $\mathsf{GW}_\mathsf{C}$ that $\GW_\mathsf{C}(\Xcal,\Ycal) = 0$. Conversely, suppose that $\mathsf{GW}_\mathsf{C}(\Xcal,\Ycal) = 0$. By \Cref{prop:optimal_couplings}, there exist couplings $\xi \in \coup(\nu_X, \nu_Y)$ and $\pi \in \coup(\mu_X, \mu_Y)$ such that
	\begin{equation}
		\label{eq:zero_distortion}
		\| \dis_p(\pi, \omega_X^s, \omega_Y^t) \|_{L^q(\Omega_X \times \Omega_Y; \xi)} = 0.
	\end{equation}
	Define $Z := X \times Y$, $\mu_Z := \pi$, $\Omega_Z := \Omega_X \times \Omega_Y$, and $\nu_Z := \xi$. Let $\varphi_A:Z \to A$ and $\Phi_A:\Omega_Z \to \Omega_A$ be the standard projections for $A \in \{X, Y\}$, and define $\omega_Z^t$ as the pullback $\varphi_X^*\omega_X^{\Phi(t)}$ for every $t \in \Omega_Z$. We claim that $\cZ := (Z, \mu_Z, \Omega_Z, \nu_Z, \omega_Z)$ is a stabilization of $\cX$ and $\cY$. The maps $\varphi_A$ and $\Phi_A$ are measure-preserving for both $A \in \{X, Y\}$ because of the marginal constraints of $\pi$ and $\xi$, respectively. In addition, $(\Phi_X, \varphi_X)$ satisfies condition (\Cref{item:equiv2}) of \Cref{def:equivalence_of_measure_networks} for every $(t,z,z') \in \Omega_Z \times Z \times Z$ by definition of $\omega_Z$. Hence, $(\Phi_X, \varphi_X)$ is structure-preserving. The fact that $(\Phi_Y, \varphi_Y)$ is structure-preserving follows from \Cref{eq:zero_distortion}, as
	\begin{align*}
		\int_{\Omega_Z} &\left(\int_{Z^2} |\varphi_X^* \omega_X^{\Phi_X(t)}(z,z') - \varphi_Y^* \omega_Y^{\Phi_Y(t)}(z,z')|^p \mu_Z(dz) \mu_Z(dz') \right)^{q/p} \nu_Z(dt) \\
		&= \int_{\Omega_X \times \Omega_Y} \left(\int_{(X \times Y)^2} |\omega_X^{r}(x,x') - \omega_Y^{s}(y,y')|^p \pi(dx \otimes dy) \pi(dx' \otimes dy') \right)^{q/p} \xi(dr \otimes ds) \\
		&= 0.
	\end{align*}
	The above implies $\omega_Z^t(z,z') = \varphi_X^* \omega_X^{\Phi_X(t)}(z,z') = \varphi_Y^* \omega_Y^{\Phi_Y(t)}(z,z')$ for $(\nu_Z \otimes \mu_Z \otimes \mu_Z)$-almost every $(t,z,z')$. Hence, $\cZ$ is a stabilization of $\cX$ and $\cY$, so $\cX$ and $\cY$ are isomorphic.
    
    Now consider the case of $\mathfrak{N}_\nu$. If $\Xcal$ and $\Ycal$ are fixed-parameter isomorphic, then constructions similar to those that were used above can be used to show that $\GW_\mathsf{C}(\Xcal,\Ycal) = 0$. On the other hand, suppose that $\mathsf{GW}_\mathsf{C}(\Xcal,\Ycal) = 0$. The proof that $\Xcal$ and $\Ycal$ are fixed-parameter isomorphic is similar to the proof above, except that we have $\Omega_X = \Omega_Y = \Omega$ and $\nu_X = \nu_Y = \nu$, so we define $\Omega_Z = \Omega$, $\nu_Z = \nu$, and $\Phi_X = \Phi_Y = \mathrm{id}_{\Omega}$. \Cref{prop:optimal_couplings} yields the existence of a coupling $\pi$ such that $\| \dis_p(\pi, \omega_X^t, \omega_Y^t) \|_{L^q(\Omega; \nu)} = 0$, and the above equation becomes
	\begin{align*}
		\int_{\Omega} &\left(\int_{Z^2} |\varphi_X^* \omega_X^{\Phi_X(t)}(z,z') - \varphi_Y^* \omega_Y^{\Phi_Y(t)}(z,z')|^p \mu_Z(dz) \mu_Z(dz') \right)^{q/p} \nu(dt) \\
		&= \int_{\Omega} \left(\int_{(X \times Y)^2} |\omega_X^{t}(x,x') - \omega_Y^{t}(y,y')|^p \pi(dx \otimes dy) \pi(dx' \otimes dy') \right)^{q/p} \nu(dt) = 0,
	\end{align*}
    so that $\mathcal{Z}$ gives the desired stabilization.
\end{proof}

\section{Comparisons and Approximations for Parametric Gromov-Wasserstein Distances}
\label{sec:results}

This section compares our parameterized GW distances for general pm-nets with existing distances proposed in the literature for specific classes of pm-nets. We also examine several approximation strategies for parameterized GW distances.

\subsection{Realization as a Z-Gromov-Wasserstein Distance}
\label{sec:ZGW}

In this subsection, we fix a parameter space $(\Omega,\nu)$ and consider the class of pm-nets $\mathfrak{N}_\nu$ from \Cref{ex:classes_of_pmnets}. We also fix the cost structure $\mathsf{C}$ from \Cref{ex:fixed_parameter_space}, for some choices of $p,q \in [1,\infty]$.
As noted in \Cref{rem:ZGW_distance}, the associated parameterized GW distance is related to the $Z$-Gromov–Wasserstein distance~\cite{bauer2024z}, and we now make this connection precise.

\subsubsection{$Z$-Gromov-Wasserstein Distances.} 
\label{sec:ZGW-distances}

We take a brief detour to recall the notion of $Z$-GW distance. Fix a complete metric space $(Z,d_Z)$ and consider a structure of the form $\Xcal = (X,\mu_X,\omega_X)$, where $(X,\mu_X)$ is a (say, compact) Polish measure space and $\omega_X:X \times X \to Z$ is a $Z$-valued kernel, which we assume to be bounded (more generally, \cite{bauer2024z} allows $p$-integrable kernels, for a given $p$). Such a structure is referred to in \cite{bauer2024z} as a \define{$Z$-network}. For $p \in [1,\infty]$, the \define{$p$-Gromov Wasserstein distance} between two $Z$-networks $\Xcal$ and $\Ycal$ is
\[
\mathsf{GW}_p^Z(\Xcal,\Ycal) \coloneqq \frac{1}{2} \inf_{\pi \in \mathcal{C}(\mu_X,\mu_Y)} \|d_Z \circ (\omega_X, \omega_Y) \|_{L^p((X \times Y)^2; \pi \otimes \pi)},
\]
where the function in the norm is given by
\begin{align*}
    d_Z \circ (\omega_X, \omega_Y): X \times Y \times X \times Y &\to \R \\
    (x,y,x',y') &\mapsto d_Z\big(\omega_X(x,x),\omega_Y(y,y')\big).
\end{align*}

For context, note that when $(Z, d_Z)$ is the real line equipped with its standard metric, the $Z$-GW framework reduces to the classical GW distance on measure networks. 

\subsubsection{Parameterized GW Distances as $Z$-GW Distances.} 
\label{sec:PGW-ZGW}

We now return to the setting of interest, to understand the parameterized GW distance on $\mathfrak{N}_\nu$ as a $Z$-GW distance. To this end, set $(Z,d_Z)$ to be $(L^q(\Omega;\nu),\|\cdot\|_{L^q(\Omega;\nu)})$, where we are abusing notation and using $\|\cdot\|_{L^q(\Omega;\nu)}$ as a placeholder for its induced metric. Let $\Xcal = (X,\mu_X,\Omega,\nu,\omega_X) \in \mathfrak{N}_\nu$. This can be understood as a $Z$-network $\overline{\Xcal} = (\overline{X},\overline{\mu}_X,\overline{\omega}_X)$, where $\overline{X} = X$, $\overline{\mu}_X = \mu_X$, and $\overline{\omega}_X$ is the $Z$-valued kernel defined, for all $x,x' \in X$, $t \in \Omega_X$, $\omega^t_X(x,x') \in \R$ by 
\[
\overline{\omega}_X(x,x') \coloneqq (t \mapsto \omega^t_X(x,x')) \in L^q(\Omega;\nu).
\]

This leads to the following comparison result for parameterized GW distance and $Z$-GW distance, where notation is the same as above:

\begin{prop}\label{prop:Z-GW}
    For any $\Xcal,\Ycal \in \mathfrak{N}_\nu$, if $p \square q$ then $\mathsf{GW}_\mathsf{C}(\Xcal,\Ycal) \square \mathsf{GW}_p^Z(\overline{\Xcal},\overline{\Ycal})$, where $\square \in \{\leq, \geq, =\}$. 
\end{prop}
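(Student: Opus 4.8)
The plan is to compare the two quantities term by term by unwinding the definitions and reducing everything to a single statement about the interaction between two $L^p$-type norms, one over $(X\times Y)^2$ and one over $\Omega$. First I would observe that for a fixed coupling $\pi \in \mathcal{C}(\mu_X,\mu_Y)$, the $Z$-GW integrand $d_Z\big(\overline{\omega}_X(x,x'),\overline{\omega}_Y(y,y')\big)$ is, by the very definition of $\overline{\omega}_X$ and $\overline{\omega}_Y$ and the choice $(Z,d_Z)=(L^q(\Omega;\nu),\|\cdot\|_{L^q(\Omega;\nu)})$, exactly
\[
\|\,t\mapsto \omega_X^t(x,x') - \omega_Y^t(y,y')\,\|_{L^q(\Omega;\nu)}.
\]
Thus $\mathsf{GW}_p^Z(\overline{\Xcal},\overline{\Ycal})$ is $\tfrac12\inf_\pi$ of the $L^p\big((X\times Y)^2;\pi\otimes\pi\big)$-norm of this function, i.e.\ an $L^p$ norm (in the space variables) of an $L^q$ norm (in the parameter variable). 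On the other side, $\mathsf{GW}_\mathsf{C}(\Xcal,\Ycal)$ from \Cref{eqn:GW_fixed_space} is $\tfrac12\inf_\pi$ of the $L^q(\Omega;\nu)$-norm of $t\mapsto \mathrm{dis}_p(\pi,\omega_X^t,\omega_Y^t)$, which is an $L^q$ norm (in the parameter) of an $L^p$ norm (in the space variables) of the same function $F_\pi(x,y,x',y',t) := \omega_X^t(x,x') - \omega_Y^t(y,y')$. So the entire proposition reduces, after taking $\tfrac12\inf_\pi$ of both sides, to the order-of-norms comparison
\[
\big\|\,\|F_\pi(\cdot,t)\|_{L^p((X\times Y)^2;\pi\otimes\pi)}\,\big\|_{L^q(\Omega;\nu)}
\ \square\
\big\|\,\|F_\pi(\cdot,\cdot)\|_{L^q(\Omega;\nu)}\,\big\|_{L^p((X\times Y)^2;\pi\otimes\pi)},
\]
and because this holds for every $\pi$ with the same direction $\square$, it is preserved under $\inf_\pi$ and the factor $\tfrac12$.

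Next I would invoke Minkowski's integral inequality, which is precisely the tool that governs the exchange of two such norms: for a jointly measurable function $F$ on a product of two (here finite) measure spaces, $\|\,\|F\|_{L^p}\,\|_{L^q} \le \|\,\|F\|_{L^q}\,\|_{L^p}$ whenever $q \ge p$, with the reverse inequality when $q \le p$, and equality when $p=q$. Applying this with the ``space'' measure $\pi\otimes\pi$ on $(X\times Y)^2$ and the ``parameter'' measure $\nu$ on $\Omega$ yields exactly the three cases: if $p \le q$ then $\mathsf{GW}_\mathsf{C}(\Xcal,\Ycal) \le \mathsf{GW}_p^Z(\overline{\Xcal},\overline{\Ycal})$ (the $L^q$-outer form dominates), if $p\ge q$ the inequality reverses, and if $p=q$ they coincide — i.e.\ $p \square q \Rightarrow \mathsf{GW}_\mathsf{C} \square \mathsf{GW}_p^Z$, as claimed. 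The $p=q$ case can alternatively be seen directly: both sides become $\tfrac12\inf_\pi$ of a single iterated integral with exponent $p=q$, and Fubini–Tonelli lets one reorder the integrations over $(X\times Y)^2$ and $\Omega$ freely (all integrands are nonnegative), giving literal equality.

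The technical points to check, none of which I expect to be serious obstacles, are: (i) that $\overline{\Xcal}$ is a genuine bounded $Z$-network, i.e.\ that $\overline{\omega}_X(x,x')$ really lies in $L^q(\Omega;\nu)$ with a uniform bound — this follows from the $L^\infty$-continuity of $t\mapsto\omega_X^t$ and compactness of $\Omega$ (the sup over $t$ of $\|\omega_X^t\|_{L^\infty}$ is finite), exactly as in the well-definedness argument in \Cref{ex:fixed_parameter_space}; (ii) joint measurability of $F_\pi$ in all variables, needed to apply Minkowski's integral inequality and Fubini, which again follows from continuity of $t\mapsto\omega_X^t$ together with measurability of each $\omega_X^t$; and (iii) the endpoint cases $p=\infty$ or $q=\infty$, where the relevant $L^p$ norms are essential suprema and one should either extend Minkowski's inequality to those cases by the usual monotone-limit argument (as already done in \Cref{lem:continuity_lemma_extended}) or handle them by a short separate computation. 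The one place to be slightly careful is making sure the direction of Minkowski's inequality is matched to the direction of $\square$ with no sign error; writing out the $p\le q$ case explicitly and noting the other two follow by the same inequality (reversed) or by Fubini settles it. I would expect the bookkeeping around which norm is ``inner'' versus ``outer'' to be the only place a reader might stumble, so I would state the reduction to the two-variable norm-exchange inequality as cleanly as possible before citing Minkowski.
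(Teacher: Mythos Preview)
Your proposal is correct and follows the same approach as the paper: reduce both quantities to iterated $L^p$--$L^q$ norms of the common difference function $F_\pi$ and apply the generalized Minkowski integral inequality (the paper cites \cite[Proposition 1.3]{bahouri2011fourier}) to swap the order of the norms. One small slip worth noting: your parenthetical ``the $L^q$-outer form dominates'' is backwards---the $L^q$-outer side in your display is the $\mathsf{GW}_\mathsf{C}$ side, which is the \emph{smaller} one when $p\le q$---but your stated inequality direction and final conclusion are correct.
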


\begin{proof}
    Assuming $p \leq q$, we have 
\begin{align*}
    2\cdot \mathsf{GW}_\mathsf{C}(\Xcal,\Ycal) &= \inf_{\pi \in \mathcal{C}(\mu_X,\mu_Y)} \|\mathrm{dis}_p(\pi,\omega_X,\omega_Y)\|_{L^q(\Omega,\nu)} \\
    &= \inf_{\pi \in \mathcal{C}(\mu_X,\mu_Y)} \left\|
    \|\omega_X \circ (p_X,p_X) - \omega_Y \circ (p_Y,p_Y)\|_{L^p((X \times Y)^2; \pi \otimes \pi)}
    \right\|_{L^q(\Omega,\nu)}\\
    &\leq \inf_{\pi \in \mathcal{C}(\mu_X,\mu_Y)} \left\|
    \|\omega_X \circ (p_X,p_X) - \omega_Y \circ (p_Y,p_Y)\|_{L^q(\Omega,\nu)} 
    \right\|_{L^p((X \times Y)^2; \pi \otimes \pi)} \\
    &= \inf_{\pi \in \mathcal{C}(\mu_X,\mu_Y)} \left\|
    d_Z \circ (\overline{\omega}_X,\overline{\omega}_Y) 
    \right\|_{L^p((X \times Y)^2; \pi \otimes \pi)} \\
    &= 2\cdot \mathsf{GW}_p^Z(\overline{\Xcal},\overline{\Ycal}),
\end{align*}
    where we have applied a generalized version of Minkowski's inequality \cite[Proposition 1.3]{bahouri2011fourier}. The case $p \geq q$ follows by a similar argument and these together imply the $p=q$ case. 
\end{proof} 

\subsubsection{Followup Remarks.} We conclude this subsection with some remarks on the connection described above.
\begin{enumerate}[leftmargin=*]
    \item Theoretical properties of $\mathsf{GW}^Z_p$ are derived for general choices of $(Z,d_Z)$ in~\cite{bauer2024z}. These properties therefore apply to $\mathsf{GW}_\mathsf{C}$ on $\mathfrak{N}_\nu$ in the case where $p=q$: the induced metric space (after modding out by fixed-parameter isomorphism; see \Cref{thm:isomorphism_of_pm_nets}) is complete, contractible and geodesic, for example.
    \item One could instead define a cost structure $\mathsf{C}$ on $\mathfrak{N}_\nu$ by integrating in a different order; that is, for $p,q < \infty$,  
    \[
    \mathsf{C}_{\Xcal,\Ycal}(\pi) \coloneqq \frac{1}{2}\left(\int_{(X \times Y)^2} \left( \int_\Omega |\omega_X(x,x') - \omega_Y(y,y')|^q \pi(dx \times dy) \pi(dx' \times dy')\right)^{p/q} \nu(dt)\right)^{1/p}
    \]
    In this case, one has $\mathsf{GW}_\mathsf{C} = \mathsf{GW}^Z_p$ (with $(Z,d_Z)$ defined as above) for all choices of $p,q$. The rationale for our particular choice of cost structure on $\mathfrak{N}_\nu$ is that it more naturally generalizes to the cost structure on the collection of all pm-nets $\mathfrak{N}_\mathrm{all}$ defined in \Cref{ex:general_parameter_spaces}.
    \item The parameterized GW distance defined on $\mathfrak{N}_\mathrm{all}$ via \Cref{ex:general_parameter_spaces} is not realized as an instance of a $Z$-GW distance (in any obvious way). 
\end{enumerate}

\subsection{Metrics for Spaces Parameterized by Real Numbers}
\label{sec:real_numbers}

The pm-nets described in \Cref{ex:time_varying} through \Cref{ex:graph_heat_kernels} are all defined over a parameter space $(\Omega,\nu)$ consisting of a compact set of real numbers, endowed with some measure. In this subsection, we compare our approach to existing metrics in the literature, which were designed to compare specialized pm-net structures. 

\subsubsection{Time-Varying Metric Spaces.} Fix a parameter space $(\Omega,\nu)$ consisting of a compact interval of real numbers endowed with some probability measure (e.g., $\Omega = [0,1]$, $\nu$ is Lebesgue measure) and consider the class of pm-nets $\mathfrak{N}_\mathrm{tvm}$ consisting of \emph{time-varying metric measure spaces} (\Cref{ex:time_varying}); that is, an element of $\mathfrak{N}_\mathrm{tvm}$ is a pm-net of the form $\Xcal = (X,\mu_X,\Omega,\nu,d_X)$, where, for each $t \in \Omega$, $d_X^t$ is a metric inducing the given topology on $X$. Observe that this is a proper subclass of $\mathfrak{N}_\nu$, due to the constraint that the kernel is a metric for each parameter value. We now provide examples of distances between objects of $\mathfrak{N}_\mathrm{tvm}$ which have been previously introduced in the literature, and compare them to our approach.

\begin{example}[Sturm's Distance]
\label{ex:Sturm}
In~\cite{sturm2018super}, Sturm introduced a distance on a more general class of objects of the form $\Xcal = \big(X,(\mu_X^t)_{t \in \Omega},(d_X^t)_{t \in \Omega}\big)$, where 
\begin{itemize}[leftmargin=*]
\item $X$ is a Polish space; 
\item Each $d_X^t$ is a metric generating the topology of $X$; Sturm also assumed that each $d_X^t$ is geodesic, but this property is not important for our discussion; 
\item $\mu_X^t$ is a time-varying family of Borel measures which are absolutely continuous with respect to some reference probability measure $\mu_X$.
\end{itemize}
We refer to such a structure as a \emph{generalized time-varying metric measure space}. Clearly, this concepts restricts to our notion of time-varying metric space by imposing the constraint that $\mu_X^t = \mu_X$ for all $t$.

Given two generalized time-varying metric measure spaces $\Xcal$ and $\Ycal$, Sturm defines a distance between them as 
\begin{equation}\label{eqn:sturm_distance}
\mathsf{D}_{\mathrm{St}}(\Xcal,\Ycal) \coloneqq \inf_{d_{X \sqcup Y}, \pi} \left( \int_\Omega \int_{X \times Y} d_{X \sqcup Y}^t(x,y)^2 \pi(dx \otimes dy) \nu(dt)\right)^{1/2} + \int_\Omega \int_{X \times Y} |f_X^t(x) - f_Y^t(y)| \pi(dx \otimes dy) \nu(dt),
\end{equation}
where the infimum is over 
\begin{itemize}[leftmargin=*]
    \item $\pi \in \mathcal{C}(\mu_X,\mu_Y)$, i.e., couplings of the reference measures, 
    \item $(d_{X \sqcup Y}^t)_{t \in \Omega}$ is a parameterized family of \emph{metric couplings}, or metrics $d_{X \sqcup Y}^t$ on the disjoint union which restrict to $d_X^t$ and $d_Y^t$, respectively, on the appropriate subsets, for almost every $t \in \Omega$, and
    \item $(f_X^t)_{t \in \Omega}$ is chosen so that $\mu_X^t = e^{f_X^t} \mu_X$ for all $t$, and similarly for $(f_Y^t)_{t \in \Omega}$.
\end{itemize}
Restricting to the subspace $\mathfrak{N}_{\mathrm{tvm}}$, the second term of \Cref{eqn:sturm_distance} vanishes and \Cref{eqn:sturm_distance} further simplifies to 
\[
\mathsf{D}_{\mathrm{St}}(\Xcal,\Ycal) = \inf_{d_{X \sqcup Y}, \pi} \left( \int_\Omega \int_{X \times Y} d_{X \sqcup Y}^t(x,y)^2 \pi(dx \otimes dy) \nu(dt)\right)^{1/2} = \left\|\inf_{d_{X \sqcup Y}} \mathsf{W}_2^{d_{X \sqcup Y}^t}(\mu_X,\mu_Y)\right\|_{L^2(\Omega; \nu)},
\]
that is, an integrated version of Sturm's well-known \emph{$L^2$-transportation distance on the space of metric measure spaces}~\cite{sturm2006geometry} (we abuse notation here, and consider $\mu_X$ and $\mu_Y$ as measures on the disjoint union, in the obvious way). The transportation distance is known to upper bound (classical) GW distance, and the proof in~\cite{memoli2011} can be extended to show 
\[
\GW_\mathsf{C}(\Xcal,\Ycal) \leq \mathsf{D}_{\mathrm{St}}(\Xcal,\Ycal),
\]
where the cost structure $\mathsf{C}$ comes from \Cref{ex:fixed_parameter_space}, with $p=q=2$.
\end{example}

\begin{example}[Integrated Gromov-Hausdorff Distance]
A simple metric on $\mathfrak{N}_{\mathrm{tvm}}$, which has primarily been used in the topological data analysis (TDA) literature~\cite{munch2013applications,xian2022capturing}, is the \define{integrated Gromov-Hausdorff distance}, defined as 
\[
\mathsf{IGH}(\Xcal,\Ycal) \coloneqq \int_{\Omega} \mathsf{GH}\big((X,d_X^t),(Y,d_Y^t) \big) \nu(dt),
\]
where $\mathsf{GH}$ is the standard Gromov-Hausdorff distance between compact metric spaces. Arguments presented in \cite{memoli2011} can be adapted to show that 
\[
\GW_\mathsf{C}(\Xcal,\Ycal) \leq \mathsf{IGH}(\Xcal,\Ycal),
\]
where $\mathsf{C}$ is as in \Cref{ex:fixed_parameter_space} with $p=\infty$ and $q=1$. 
\end{example}

\begin{example}[Slack Interleaving Distance]\label{ex:Kim-Memoli}
An alternative metric on $\mathfrak{N}_\mathrm{tvm}$, based on constructions used in TDA, was in introduced in~\cite{kim2020persistent}. We review the details here. Given continuous functions $f,g:\R\to \R_{\geq 0}$ and $\lambda \geq 0$, the \define{$\lambda$-slack interleaving distance} between the functions is defined by
\[
d_{\lambda}(f_1,f_2)\coloneqq\inf\left\{\varepsilon\in[0,\infty] \mid \forall t\in\R, \min_{s\in[t-\varepsilon,t+\varepsilon]} f_i(s)\leq f_j(t)+\lambda\varepsilon,\,i,j=1,2\right\}.
\]
For $p \in [1,\infty]$, the \define{$(p,\lambda)$-Gromov-Wasserstein distance} between $\Xcal,\Ycal \in \mathfrak{N}_\mathrm{tvm}$ is defined as
\[
\inf_{\pi\in\mathcal{C}(\mu_X,\mu_Y)}\big\|d_\lambda \circ (d_X \times d_Y)\big\|_{L^p(\pi\otimes \pi)}.
\]
Taking the cost structure
\[
\mathsf{C}_{\Xcal,\Ycal}(\pi) = \big\|d_\lambda \circ (d_X \times d_Y)\big\|_{L^p(\pi\otimes \pi)}, 
\]
we see that the $(p,\lambda)$-GW distance is an instance of a parameterized GW distance. We note that it was shown in~\cite[Proposition 3.13]{bauer2024z} that the $(p,\lambda)$-GW distance can also be realized as a $Z$-GW distance.
\end{example}

\subsubsection{Heat Kernels.} We consider classes of pm-nets induced by heat kernels. For the rest of this subsection, we temporarily abuse terminology and drop the assumptions that the parameter space in a pm-net is compact, and that the measure on this space is a probability measure. This is only for the sake of avoiding technical details; in this setting, the basic definitions introduced in the paper related to parameterized GW distances are still valid, but certain theoretical properties are no longer guaranteed. As we are only concerned here with estimates of the parameterized GW distances, this issue is not a concern.

First, consider the class $\mathfrak{N}_\mathrm{HK}$ of pm-nets of the form $\Xcal = (X,\mu_X,\Omega,\nu,\omega_X)$, where
\begin{itemize}[leftmargin=*]
    \item $\Omega = \Rspace_{>0}\coloneqq(0,+\infty)$ (with its usual topology, hence a non-compact space) and $\nu$ is Lebesgue measure (not a probability measure),
    \item $X$ is a compact Riemannian manifold endowed with normalized Riemannian volume $\mu_X$, and
    \item $\omega_X^t:X \times X \to \R$ is the normalized heat kernel of $X$ (with respect to the given Riemannian metric), where \emph{normalized} means that $\lim_{t \to \infty} \omega^t(x,x') = 1$ for all $x,x' \in X$. 
\end{itemize}

\begin{example}[Spectral Gromov-Wasserstein distance]
	\label{ex:Memoli-HK}
    M{\'e}moli defined a distance $\mathsf{GW}_p^\mathrm{spec}$ to compare two Riemannian manifolds using their heat kernels~\cite{memoli2009spectral,memoli2011spectral}; in our terminology, this is a distance on the class $\mathfrak{N}_\mathrm{HK}$ defined above. Here, we recall the definition of  M\'{e}moli's distance and rewrite it as a parametrized GW distance. To recall the original definition, we first define an auxiliary function $c(t) \coloneqq e^{-(t + t^{-1})}$. Given $\Xcal,\Ycal \in \mathfrak{N}_\mathrm{HK}$, we define a cost
    \begin{equation*}
    	\Gamma_{X, Y, t}^\text{spec}(x,y, x',y') \coloneqq | \omega_X^t(x,x') - \omega_Y^t(y,y')|
    \end{equation*}
    and \[
    \displaystyle \mathsf{GW}_p^\mathrm{spec}(X, Y) \coloneqq \inf_{\pi \in \coup(\operatorname{Vol}_X, \operatorname{Vol}_Y)} \sup_{t > 0} c^2(t) \cdot \| \Gamma_{X, Y, t}^\text{spec}\|_{L^p((X \times Y)^2; \pi \otimes \pi)},\] 
    where $p \in [1,\infty]$.
    
    Given $p, q \in [1, \infty]$, we define a new cost structure $\mathsf{C}$ by
    \begin{equation*}
    	\mathsf{C}_{\cX, \cY}(\pi) \coloneqq \left\| c^2(t)  \cdot \| \Gamma_{X, Y, t}^\text{spec}\|_{L^p((X \times Y)^2, \pi \otimes \pi)} \right\|_{L^q(\Omega, \nu)}
    \end{equation*}
    for some $\pi \in \coup(\operatorname{Vol}_X, \operatorname{Vol}_Y)$. 
    Then $\displaystyle \mathsf{GW}_p^\mathrm{spec}(X, Y) = \GW_{\mathsf{C}}(\cX, \cY)$ when $q = \infty$. We note that a similar interpretation of $\mathsf{GW}_p^\mathrm{spec}$ as a $Z$-GW distance was provided in~\cite[Proposition 3.12]{bauer2024z}, but that this result only gives an upper bound due to issues similar to those that arose in the proof of \Cref{prop:Z-GW}.
\end{example}

Next we consider the class $\mathfrak{N}_\mathrm{GHK}$ of \emph{graph heat kernels} $\Xcal = (X,\mu_X,\Omega,\nu,\omega_X)$, where $(\Omega,\nu)$ is as above, $X$ is the set of nodes of a graph, endowed with some probability measure $\mu_X$, and $\omega_X^t$ is the graph heat kernel at scale $t$. 

\begin{example}[Graph Heat Kernels]
\label{ex:Chowdhury-HK}
Chowdhury and Needham~\cite{spectralGW} studied GW distances between graph heat kernels at a fixed scale parameter $t$. Their approach contrasts with that of the present paper, which aims to incorporate information across all scales when comparing pm-nets. In \cite{spectralGW}, the scale parameter was treated as a tunable hyperparameter. From a high-level perspective, this can be viewed as computing $\mathsf{GW}_\mathsf{C}$ while allowing additional flexibility in the choice of the measure $\nu$; for example, permitting it to collapse to a Dirac measure at a single scale $t$. This viewpoint aligns with the feature selection approach described in \cref{sec:feature-selection}.
\end{example}

\subsection{Approximation by Wasserstein Distance}

In this subsection, we derive a lower bound on the parameterized GW distance by a certain Wasserstein distance. 

\subsubsection{Wasserstein Distance Over the GW Space.} We begin by setting up necessary preliminary concepts. Let $\mathfrak{M}$ denote the class of measure networks (by \Cref{ex:measure_networks}, $\mathfrak{M}$ is equivalent to the class $\mathfrak{N}_\nu$, where $(\Omega,\nu)$ is a one-point space, but we introduce this additional notation for bookkeeping purposes). For the rest of this subsection, we let $\sim$ denote the weak isomorphism equivalence relation on $\mathfrak{M}$ (\Cref{def:weakly_isomorphic}); for a measure network $\Xcal$, we let $[\Xcal]$ denote its equivalence class, and we let $\faktor{\mathfrak{M}}{\sim}$ denote the set of equivalence classes, that is, measure networks, considered up to weak isomorphism. 

Given $p \in [1, \infty]$, let $\GW_p$ be the $p$-GW distance on $\mathfrak{M}$. Considering elements of $\mathfrak{M}$ up to weak isomorphism, $\GW_p$ induces a complete and separable metric on $\faktor{\mathfrak{M}}{\sim}$: completeness follows essentially from~\cite[Theorem 5.8]{Sturm2023}, which enforces an additional symmetry condition on the kernels that is not intrinsically necessary for the proof; separability is a standard argument, but a precise reference is \cite[Proposition 4.8]{bauer2024z}. We abuse notation and continue to denote the induced metric by $\GW_p$. Therefore $(\faktor{\mathfrak{M}}{\sim},\GW_p)$ is a Polish metric space, and measure theory on it is sufficiently well-behaved to consider the Wasserstein distance between distributions defined over it. Throughout the rest of this subsection, we use $\W_q^{\mathsf{GW}_p}$ to denote the $q$-Wasserstein distance on $(\faktor{\mathfrak{M}}{\sim},\GW_p)$ (see \Cref{sec:Wasserstein}). 

\subsubsection{Lower Bound on Parameterized GW Distance.}
\label{subsec:lower_bound} 
Now consider an arbitrary pm-net $\cX \in \mathfrak{N}_\mathrm{all}$. For any $t \in \Omega_X$,  the triplet $\cX_t \coloneqq (X, \mu_X, \omega_X^t)$ is a measure network. We define the map
\begin{align*}
    m_\Xcal: \Omega_X &\to \faktor{\mathfrak{M}}{\sim} \\
    t &\mapsto [\cX_t],
\end{align*}
and the associated pushforward measure $\overline{\nu}_X \coloneqq (m_\Xcal)_\# \nu_X$ on $\faktor{\mathfrak{M}}{\sim}$. For this to be well-defined, we require $m_\Xcal$ to be measurable, which is verified by the following \Cref{lem:mX_continuous}.

\begin{lemma}
\label{lem:mX_continuous}
    The map $m_\Xcal$ is continuous. 
\end{lemma}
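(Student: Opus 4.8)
The plan is to prove the stronger quantitative statement that $m_\cX$ is continuous with an explicit modulus inherited from the $L^\infty$-continuity of the parameterized kernel. Since $\GW_p$ is by definition the metric carried by $\faktor{\mathfrak{M}}{\sim}$, and $\GW_p([\cX_s],[\cX_t]) = \GW_p(\cX_s,\cX_t)$ for the underlying pseudometric, it suffices to show that $\GW_p(\cX_s, \cX_t) \to 0$ as $s \to t$ in $\Omega_X$ (which is metrizable, so sequences suffice).

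First I would feed the identity (diagonal) coupling $\pi_{\mathrm{id}} \coloneqq (\mathrm{id}_X \times \mathrm{id}_X)_\# \mu_X \in \coup(\mu_X,\mu_X)$ into the infimum \eqref{eqn:ChowdhuryMemoli2019_distance} defining $\GW_p(\cX_s,\cX_t)$ as a (generally suboptimal) competitor. Since $\pi_{\mathrm{id}} \otimes \pi_{\mathrm{id}}$ is concentrated on quadruples of the form $(x,x,x',x')$, this gives, for $p < \infty$,
\[
\GW_p(\cX_s, \cX_t) \;\le\; \tfrac12\,\dis_p(\pi_{\mathrm{id}}, \omega_X^s, \omega_X^t) \;=\; \tfrac12\left(\int_{X \times X} |\omega_X^s(x,x') - \omega_X^t(x,x')|^p\, \mu_X(dx)\,\mu_X(dx')\right)^{1/p} \;=\; \tfrac12\,\|\omega_X^s - \omega_X^t\|_{L^p(\mu_X \otimes \mu_X)},
\]
and the analogous bound by $\tfrac12\|\omega_X^s - \omega_X^t\|_{L^\infty(\mu_X \otimes \mu_X)}$ when $p = \infty$. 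Because $\mu_X \otimes \mu_X$ is a probability measure, $\|\cdot\|_{L^p(\mu_X \otimes \mu_X)} \le \|\cdot\|_{L^\infty(\mu_X \otimes \mu_X)}$ for every $p \in [1,\infty]$, so in all cases
\[
\GW_p(\cX_s, \cX_t) \;\le\; \tfrac12\,\|\omega_X^s - \omega_X^t\|_{L^\infty(X \times X;\, \mu_X \otimes \mu_X)}.
\]
The right-hand side tends to $0$ as $s \to t$ by the $L^\infty$-continuity of $t \mapsto \omega_X^t$ required in \Cref{def:parametrized-networks}, which finishes the argument. Equivalently, this exhibits $m_\cX$ as the composition $\Omega_X \xrightarrow{\;\omega_X\;} L^\infty(X \times X; \mu_X \otimes \mu_X) \to (\faktor{\mathfrak{M}}{\sim}, \GW_p)$ of a continuous map with a $\tfrac12$-Lipschitz one.

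I do not expect a genuine obstacle here; the only two points needing (minor) care are recognizing that the diagonal coupling is an admissible competitor that linearizes the Gromov–Wasserstein functional, and that continuity into the quotient space is automatic since $\GW_p$ is the metric it induces. One could alternatively deduce the claim from \Cref{lem:continuity_lemma} applied with $\cX = \cY$, but the elementary estimate above is shorter and self-contained.
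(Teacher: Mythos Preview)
Your proof is correct and takes essentially the same approach as the paper: both feed the diagonal coupling into the $\GW$ infimum and then invoke the $L^\infty$-continuity of $t \mapsto \omega_X^t$. The only cosmetic difference is the order of the two easy inequalities: the paper first uses $\GW_p \le \GW_\infty$ (citing M\'{e}moli's monotonicity in $p$) and then bounds $\GW_\infty$ via the identity coupling, whereas you bound $\GW_p$ directly by $\tfrac12\|\omega_X^s-\omega_X^t\|_{L^p}$ via the identity coupling and then use $\|\cdot\|_{L^p}\le\|\cdot\|_{L^\infty}$ for probability measures---your route is marginally more self-contained since it avoids the external citation.
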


\begin{proof}
    We factor $m_\Xcal$ as $m_\Xcal = q \circ \hat{m}_\Xcal$, where $\hat{m}_\Xcal:\Omega_X \to \mathfrak{M}$ is the map $t \mapsto \Xcal_t$ and $q:\mathfrak{M} \to \faktor{\mathfrak{M}}{\sim}$ is the quotient map. It suffices to prove that $\hat{m}_\Xcal$ is continuous, with respect to the topology induced by the pseudometric $\GW_p$. Let us metrize $\Omega_X$ via some choice $d_{\Omega_X,p}$; the particular metric is irrelevant, and we only assume that it induces the  given Polish topology on $\Omega_X$. Let $\epsilon > 0$. As parameterized network kernels are assumed to be $L^\infty$ continuous, there exists $\delta > 0$ such that $d_{\Omega_X,p}(s,t) < \delta$ implies $\|\omega_X^s - \omega_X^t\|_{L^\infty(X \times X; \mu_X \otimes \mu_X)} < \epsilon$. We have 
    \[
    \GW_p(\cX_s,\cX_t) \leq \GW_\infty(\cX_s,\cX_t) \leq \|\omega_X^s - \omega_X^t\|_{L^\infty(X \times X; \mu_X \otimes \mu_X)}.
    \]
    Here, the first inequality follows from \cite[Theorem 5.1 (h)]{memoli2011}; that result proves the inequality in the case of metric measure spaces, but the proof is based on properties of $L^p$-norms and applies to more general measure networks. The second inequality follows by considering the $\infty$-distortion of the identity coupling of $\mu_X$ with itself.
\end{proof}
We are now able to state the main result of this subsection.
\begin{theorem}\label{thm:wasserstein_estimate}
	Let $p, q \in [1, \infty]$ and let $\mathsf{C}$ be the cost structure from  \Cref{ex:general_parameter_spaces}. For $\Xcal, \Ycal \in \mathfrak{N}_\mathrm{all}$, we have
	\begin{equation*}
		\W_q^{\mathsf{GW}_p}(\overline{\nu}_X, \overline{\nu}_Y) \leq \GW_{\mathsf{C}}(\Xcal ,\Ycal).
	\end{equation*}
\end{theorem}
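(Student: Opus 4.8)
The plan is to prove the inequality by transporting couplings: every admissible pair $(\pi,\xi)$ appearing in the definition of $\GW_{\mathsf{C}}$ (cf.~\Cref{eqn:GW_general_parameters}) can be pushed forward to a coupling of $\overline{\nu}_X$ and $\overline{\nu}_Y$ admissible for the Wasserstein problem on $(\faktor{\mathfrak{M}}{\sim},\GW_p)$, and the resulting Wasserstein transport cost is controlled pointwise by the distortion cost $\dis_p(\pi,\omega_X^s,\omega_Y^t)$.

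Concretely, I would fix $\pi \in \coup(\mu_X,\mu_Y)$ and $\xi \in \coup(\nu_X,\nu_Y)$ and set $\Psi \coloneqq m_\Xcal \times m_\Ycal : \Omega_X \times \Omega_Y \to (\faktor{\mathfrak{M}}{\sim})^2$, which is continuous by \Cref{lem:mX_continuous}, and $\zeta \coloneqq \Psi_\# \xi$. A routine marginal computation with pushforwards shows $\zeta \in \coup(\overline{\nu}_X,\overline{\nu}_Y)$, since the first marginal of $\zeta$ is $(m_\Xcal)_\#(p_{\Omega_X})_\#\xi = (m_\Xcal)_\#\nu_X = \overline{\nu}_X$, and symmetrically on the right. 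Then the definition of the Wasserstein distance and the change-of-variables formula for the pushforward (valid for $q=\infty$ as well, since essential suprema transform identically under pushforward) give
\[
\W_q^{\mathsf{GW}_p}(\overline{\nu}_X,\overline{\nu}_Y) \;\leq\; \big\| \mathsf{GW}_p \big\|_{L^q\left((\faktor{\mathfrak{M}}{\sim})^2;\zeta\right)} \;=\; \big\| \mathsf{GW}_p \circ \Psi \big\|_{L^q(\Omega_X \times \Omega_Y; \xi)}.
\]
For each $(s,t)$ one has $(\mathsf{GW}_p \circ \Psi)(s,t) = \mathsf{GW}_p(\Xcal_s,\Ycal_t) = \tfrac12 \inf_{\pi' \in \coup(\mu_X,\mu_Y)} \dis_p(\pi',\omega_X^s,\omega_Y^t) \leq \tfrac12 \dis_p(\pi,\omega_X^s,\omega_Y^t)$, the last step because $\pi$ is itself an admissible coupling for every pair $(s,t)$. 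Applying monotonicity of the $L^q(\xi)$-norm, then infimizing over $\pi \in \coup(\mu_X,\mu_Y)$ and $\xi \in \coup(\nu_X,\nu_Y)$, yields
\[
\W_q^{\mathsf{GW}_p}(\overline{\nu}_X,\overline{\nu}_Y) \;\leq\; \frac12 \inf_{\pi}\inf_{\xi} \big\| \dis_p(\pi,\omega_X,\omega_Y) \big\|_{L^q(\Omega_X \times \Omega_Y;\xi)} \;=\; \GW_{\mathsf{C}}(\Xcal,\Ycal),
\]
which is the claim.

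The argument has no genuine obstacle; the only care needed is measurability and integrability bookkeeping. I would note that $\mathsf{GW}_p$ is $1$-Lipschitz (hence continuous) on $(\faktor{\mathfrak{M}}{\sim})^2$ as a metric, and $\Psi$ is continuous by \Cref{lem:mX_continuous}, so $\mathsf{GW}_p \circ \Psi$ is continuous and bounded on the compact space $\Omega_X \times \Omega_Y$; in particular $\overline{\nu}_X$ and $\overline{\nu}_Y$ are well-defined, compactly supported Borel probability measures on the Polish space $(\faktor{\mathfrak{M}}{\sim},\GW_p)$, so the right-hand Wasserstein problem is over a nonempty set of couplings and both sides are finite. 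Joint measurability (indeed continuity) of $(s,t) \mapsto \dis_p(\pi,\omega_X^s,\omega_Y^t)$ is exactly \Cref{lem:continuity_lemma}, the same fact already used to make sense of the cost structure in \Cref{ex:general_parameter_spaces}. The substantive content of the theorem is simply the observation that the fibered (in $t$) GW distances are dominated by the joint distortion cost; everything else is the standard machinery of transporting couplings along continuous maps.
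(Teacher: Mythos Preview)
Your proof is correct and follows essentially the same approach as the paper: push $\xi$ forward via $m_\Xcal \times m_\Ycal$ to obtain a coupling of $\overline{\nu}_X,\overline{\nu}_Y$, then use the pointwise bound $\mathsf{GW}_p(\Xcal_s,\Ycal_t) \leq \tfrac12\dis_p(\pi,\omega_X^s,\omega_Y^t)$ and infimize. Your version is marginally more economical in two places: you only use the easy direction of \Cref{lem:pushforward_lemma} (that pushforwards of couplings are couplings), whereas the paper invokes the full equality of coupling sets to write $\inf_{\overline{\xi}} = \inf_{\xi}$; and you handle $q=\infty$ directly via the change-of-variables identity for essential suprema, whereas the paper passes to the limit $q\to\infty$ using \cite[Proposition~3]{givens1984class}.
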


The proof will use the following general measure theory result.

\begin{lemma}\label{lem:pushforward_lemma}
Let $(X,\mu_X)$ and $(Y,\mu_Y)$ be Polish probability spaces, $X'$ and $Y'$ Polish spaces, and $f:X \to X'$ and $g:Y \to Y'$ measurable maps. Then
\[
\mathcal{C}(f_\# \mu_X, g_\# \mu_Y) = \{(f \times g)_\# \pi \mid \pi \in \mathcal{C}(\mu_X,\mu_Y)\},
\]
where $f \times g :X \times Y \to X' \times Y'$ is the product map $(x,y) \mapsto (f(x),g(y))$. 
\end{lemma}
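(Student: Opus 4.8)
The plan is to prove the two inclusions separately; the forward one is routine, and the reverse one is the substantive part.

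For the inclusion $\{(f\times g)_\#\pi : \pi\in\mathcal{C}(\mu_X,\mu_Y)\} \subseteq \mathcal{C}(f_\#\mu_X, g_\#\mu_Y)$, I would simply note that the coordinate projections commute with $f\times g$: writing $p_X,p_Y$ for the projections off $X\times Y$ and $p_{X'},p_{Y'}$ for those off $X'\times Y'$, one has $p_{X'}\circ(f\times g) = f\circ p_X$ and $p_{Y'}\circ(f\times g) = g\circ p_Y$. Functoriality of pushforward then gives $(p_{X'})_\#(f\times g)_\#\pi = f_\#(p_X)_\#\pi = f_\#\mu_X$ and likewise $(p_{Y'})_\#(f\times g)_\#\pi = g_\#\mu_Y$, so $(f\times g)_\#\pi\in\mathcal{C}(f_\#\mu_X,g_\#\mu_Y)$.

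For the reverse inclusion, fix $\lambda\in\mathcal{C}(f_\#\mu_X,g_\#\mu_Y)$; I must exhibit $\pi\in\mathcal{C}(\mu_X,\mu_Y)$ with $(f\times g)_\#\pi=\lambda$. The idea is to glue $\lambda$ to the ``graph couplings'' of $f$ and $g$. Set $\gamma_X := (\mathrm{id}_X,f)_\#\mu_X \in \mathcal{C}(\mu_X, f_\#\mu_X)$, a measure on $X\times X'$ concentrated on the graph $\{(x,x'):x'=f(x)\}$, and analogously $\gamma_Y := (\mathrm{id}_Y,g)_\#\mu_Y \in \mathcal{C}(\mu_Y,g_\#\mu_Y)$ on $Y\times Y'$. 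Applying the Gluing Lemma (\cite[Lemma 1.4]{Sturm2023}) first to $\gamma_X$ and $\lambda$ along the common factor $X'$, and then to the result and $\gamma_Y^\ast$ along the common factor $Y'$ (grouping $X\times X'$ as a single ``first'' factor so that the lemma applies verbatim; all spaces in sight are Polish), I obtain a probability measure $\tilde\pi$ on $X\times X'\times Y'\times Y$ whose $(X,X')$-marginal is $\gamma_X$, whose $(X',Y')$-marginal is $\lambda$, and whose $(Y',Y)$-marginal is $\gamma_Y^\ast$. I then define $\pi := (p_X\times p_Y)_\#\tilde\pi$, where now $p_X,p_Y$ denote the projections of $X\times X'\times Y'\times Y$ onto $X$ and $Y$.

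It then remains to verify two things. First, $\pi\in\mathcal{C}(\mu_X,\mu_Y)$: the $X$-marginal of $\tilde\pi$ equals the $X$-marginal of $\gamma_X$, which is $\mu_X$, and the $Y$-marginal of $\tilde\pi$ equals the $Y$-marginal of $\gamma_Y^\ast$, which is $\mu_Y$. Second, $(f\times g)_\#\pi=\lambda$: since $\gamma_X$ is concentrated on the graph of $f$ and $\gamma_Y^\ast$ on the graph of $g$, the map $(x,x',y',y)\mapsto(x',y')$ agrees $\tilde\pi$-almost everywhere with $(x,x',y',y)\mapsto(f(x),g(y))$; pushing $\tilde\pi$ forward along these two $\tilde\pi$-a.e.\ equal maps yields, respectively, the $(X',Y')$-marginal $\lambda$ and $(f\times g)_\#(p_X\times p_Y)_\#\tilde\pi=(f\times g)_\#\pi$. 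I expect the only genuinely delicate point to be the bookkeeping in the iterated gluing over a three-fold product together with this last a.e.\ identification of the $X'$- and $Y'$-coordinates with $f$ and $g$ of the remaining coordinates; an alternative route I would mention is to disintegrate $\mu_X$ over $f_\#\mu_X$ and $\mu_Y$ over $g_\#\mu_Y$ and take $\pi = \int (\mu_X)_{x'}\otimes(\mu_Y)_{y'}\,\lambda(dx'\otimes dy')$, at the cost of invoking the disintegration theorem rather than the Gluing Lemma already in use in the paper.
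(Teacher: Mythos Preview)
Your proof is correct, but it takes a different route from the paper for the substantive inclusion. The paper uses exactly the alternative you mention at the end: it disintegrates $\mu_X$ with respect to $f$ and $\mu_Y$ with respect to $g$, then defines $\pi(A\times B)=\int_{X'\times Y'}\mu_X(A\mid x')\,\mu_Y(B\mid y')\,\xi(dx'\otimes dy')$ and verifies the marginals and the pushforward identity by direct computation (the latter via the indicator identity $\int_{f^{-1}(A')}\mu_X(dx\mid x')=1_{A'}(x')$). Your approach instead glues the graph couplings $\gamma_X=(\mathrm{id}_X,f)_\#\mu_X$ and $\gamma_Y^\ast$ to $\lambda$ and reads off $\pi$ as a marginal of the glued measure; the key step is the $\tilde\pi$-a.e.\ identification $x'=f(x)$, $y'=g(y)$ coming from concentration on the graphs. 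Your route has the virtue of reusing the Gluing Lemma already invoked in the paper and keeping the bookkeeping measure-theoretic rather than integral, at the cost of a slightly more delicate iterated gluing; the paper's route is more explicitly constructive and makes the verification of $(f\times g)_\#\pi=\xi$ a straight chain of equalities, at the cost of invoking disintegration directly.
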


\begin{proof}
One inclusion is straightforward: it is easy to show that a measure of the form $(f \times g)_\# \pi$ is a coupling of $f_\# \mu_X$ and $g_\# \mu_Y$, for $\pi \in \mathcal{C}(\mu_X,\mu_Y)$.  

To prove the  remaining inclusion, let $\{\mu_X(\cdot \mid x')\}_{x' \in X'}$ denote the disintegration of $\mu_X$ with respect to $f$. That is, for each $x' \in X'$, $\mu_X(\cdot \mid x')$ is a Borel probability measure on $X$ satisfying 
\[
\mu_X(A) = \int_{X'} \mu_X(A\mid x') f_\# \mu_X(dx') \quad \mbox{and} \quad f_\# \mu_X(\cdot \mid x') = \delta_{x'},
\]
where $\delta_{x'}$ denotes the Dirac measure on $X'$ (see, e.g., \cite[Section 5.3]{ambrosio2005gradient}). Likewise, let $\{\mu_Y(\cdot \mid y')\}_{y' \in Y'}$ denote the disintegration of $\mu_Y$ with respect to $g$.

Given $\xi \in \mathcal{C}(f_\# \mu_X, g_\# \mu_Y)$, define a measure $\pi$ on $X \times Y$  for a product Borel set $A \times B$ as 
\[
\pi(A \times B) = \int_{X' \times Y'} \mu_X(A \mid x') \mu_Y(B \mid y') \xi(dx' \otimes dy').
\]
We claim that $\pi \in \mathcal{C}(\mu_X,\mu_Y)$ and that $(f \times g)_\# \pi = \xi$. The first point follows by checking marginals:
\begin{align*}
\pi(A \times Y) &= \int_{X' \times Y'} \mu_X(A \mid x') \mu_Y(Y \mid y') \xi(dx' \otimes dy') \\ 
&= \int_{X' \times Y'} \mu_X(A \mid x') \xi(dx' \otimes dy') \\ 
&= \int_{X'} \mu_X(A \mid x') f_\# \mu_X(dx') = \mu_X(A),
\end{align*}
where we have used that $\mu_Y(\cdot\mid y')$ is a probability measure and the marginal condition on $\xi$. The fact that $\pi(X \times B) = \mu_Y(B)$ follows similarly.

Finally, we prove that $(f \times g)_\# \pi = \xi$. We will use the following identity, which holds for any $x' \in X'$, and which follows by the property that $f_\# \mu_X(\cdot \mid x') = \delta_{x'}$:
\begin{equation}\label{eqn:indicator_equality}
\int_{f^{-1}(A')} \mu_X(dx \mid x') = 1_{A'}(x'),
\end{equation}
where $1_{A'}$ denotes the indicator function for a Borel set $A'$. Proceeding with the proof, let $A' \times B'$ be a product Borel set in $X' \times Y'$. Then
\begin{align}
    (f \times g)_\# \pi (A' \times B') &= \int_{A' \times B'} (f \times g)_\# \pi(dx' \otimes dy') \nonumber \\
    &= \int_{f^{-1}(A') \times g^{-1}(B')} \pi (dx \otimes dy) \nonumber \\
    &= \int_{X' \times Y'} \int_{g^{-1}(B')} \int_{f^{-1}(A')} \mu_X(dx\mid x') \mu_Y(dy \mid y') \xi(dx' \otimes dy')  \label{eqn:pushforward_lemma_1} \\
    &= \int_{X' \times Y'} \int_{g^{-1}(B')} 1_{A'}(x') \mu_Y(dy \mid y') \xi(dx' \otimes dy')  \label{eqn:pushforward_lemma_2} \\
    &= \int_{X' \times Y'}  1_{A'}(x') 1_{B'}(y') \xi(dx' \otimes dy') \label{eqn:pushforward_lemma_3}  \\
    &= \int_{A' \times B'} \xi(dx' \otimes dy') = \xi(A' \times B'), \nonumber
\end{align}
where \Cref{eqn:pushforward_lemma_1} follows from the definition of $\pi$ and Fubini's Theorem, and \Cref{eqn:pushforward_lemma_2} and \Cref{eqn:pushforward_lemma_3} both follow by applying \Cref{eqn:indicator_equality} to the iterated integrals. 
\end{proof}

\begin{proof}[Proof of \Cref{thm:wasserstein_estimate}]
	First consider the $q < \infty$ case ($p \in [1,\infty]$ is arbitrary).
    For any $\pi \in \coup(\mu_X, \mu_Y)$, we have
	\begin{align}
		\W_q^{\mathsf{GW}_p}(\overline{\nu}_X, \overline{\nu}_Y)
		&= \inf_{\overline{\xi} \in \coup(\overline{\nu}_X, \overline{\nu}_Y)} \left( \int_{\mathfrak{M}\times \mathfrak{M}} \GW_p([\Xcal], [\Ycal])^q \, \overline{\xi}(d[\Xcal] \otimes d[\Ycal]) \right)^{1/q} \nonumber \\
        &= \inf_{\xi \in \coup(\nu_X, \nu_Y)} \left( \int_{\mathfrak{M}\times \mathfrak{M}} \GW_p([\Xcal], [\Ycal])^q \, (m_\Xcal \times m_\Ycal)_\# \xi (d[\Xcal] \otimes d[\Ycal]) \right)^{1/q} \label{eqn:wasserstein_bound_1} \\
        &= \inf_{\xi \in \coup(\nu_X, \nu_Y)} \left( \int_{\Omega_X \times \Omega_Y} \GW_p(\Xcal_s, \Ycal_t)^q \, \xi (ds \otimes dt) \right)^{1/q} \label{eqn:wasserstein_bound_2} \\
		&\leq \inf_{\xi \in \coup(\nu_X, \nu_Y)} \left( \int_{\Omega_X \times \Omega_Y} \frac{1}{2} \dis_p(\pi, \omega_X^s, \omega_Y^t)^q \, \xi(ds \otimes dt) \right)^{1/q} \nonumber,
	\end{align}
	where \Cref{eqn:wasserstein_bound_1} follows from \Cref{lem:pushforward_lemma} and \Cref{eqn:wasserstein_bound_2} follows from a change of variables, and the fact that $\mathsf{GW}_p$ is invariant over equivalence classes. The result then follows by infimizing over $\pi \in \coup(\mu_X, \mu_Y)$ on the right-hand side. 
    
    Finally, consider the $q=\infty$ case. The work above shows that for any $\pi \in \coup(\mu_X, \mu_Y)$ and any $\xi \in \mathcal{C}(\nu_X,\nu_Y)$,
    \[
    \W_\infty^{\mathsf{GW}_p}(\overline{\nu}_X, \overline{\nu}_Y) = \lim_{q \to \infty}  \W_q^{\mathsf{GW}_p}(\overline{\nu}_X, \overline{\nu}_Y) \leq \lim_{q \to \infty} \frac{1}{2} \|\mathrm{dis}_p(\pi,\omega_X,\omega_Y)\|_{L^q(\xi)} = \frac{1}{2} \|\mathrm{dis}_p(\pi,\omega_X,\omega_Y)\|_{L^\infty(\xi)},
    \]
    where the first equality follows from \cite[Proposition 3]{givens1984class}. The result once again follows by infimizing the right-hand side. 
\end{proof}

\subsubsection{Lower Bound by Weight Distribution}
\label{sec:weight-distribution}

We now utilize \Cref{thm:wasserstein_estimate} to give further lower bounds on parameterized GW distances. These are given in terms of a new invariant of pm-nets, defined below.

\begin{defn}[Weight Distribution]
\label{defn:weight-distribution}
Consider the function (see \cite{memoli2011,ChowdhuryMemoli2019}) 
\begin{equation}
\label{eqn:global_distribution}
\begin{split}
    \faktor{\mathfrak{M}}{\sim} &\to \Pcal(\R) \\
    [\Ycal] &\mapsto (\omega_Y)_\# (\mu_Y \otimes \mu_Y).
\end{split}
\end{equation}
Let $\cX$ be a pm-net and let $\overline{\nu}_X \in \Pcal(\R)$ be as in \Cref{thm:wasserstein_estimate} (\Cref{subsec:lower_bound}). The \define{weight distribution of $\cX$} is the measure $\Delta_\cX \in \Pcal\big(\Pcal(\R)\big)$ given by the pushforward of $\overline{\nu}_X$ by the map \Cref{eqn:global_distribution}.
\end{defn}

In~\Cref{defn:weight-distribution}, the weight distribution defines an invariant of pm-nets which takes the form of a probability distribution over the space of probability distributions, denoted as $\Pcal\big(\Pcal(\R)\big)$. Such an invariant may appear to be rather unwieldy, but we show below that it is stable and, moreover, that it is easy to compute in certain circumstances.

\begin{corollary}
\label{cor:global_distribution_stability}
The weight distribution is a stable invariant of pm-nets. That is, let $\cX$ and $\cY$ be pm-nets, let $\overline{\nu}_X$ and $\overline{\nu}_Y$ be as in \Cref{thm:wasserstein_estimate}, and let $\mathsf{C}$ be the cost structure from \Cref{ex:general_parameter_spaces}, for some $p,q \in [1,\infty]$. Then
\begin{equation}
\label{eqn:global_distribution_stability}
\W_q^{\W_p^{\Pcal(\R)}}(\Delta_\Xcal, \Delta_\Ycal) \leq 2 \cdot \mathsf{GW}_\mathsf{C}(\Xcal,\Ycal).
\end{equation}
\end{corollary}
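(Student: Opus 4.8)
The plan is to combine Theorem~\ref{thm:wasserstein_estimate} with the classical observation that the assignment in \eqref{eqn:global_distribution} is Lipschitz. Write $\Psi:\faktor{\mathfrak{M}}{\sim}\to\Pcal(\R)$ for the map $[\Ycal]\mapsto(\omega_Y)_\#(\mu_Y\otimes\mu_Y)$, so that by definition $\Delta_\Xcal=\Psi_\#\overline{\nu}_X$ and $\Delta_\Ycal=\Psi_\#\overline{\nu}_Y$. The key claim is that $\Psi$ is $2$-Lipschitz from $(\faktor{\mathfrak{M}}{\sim},\mathsf{GW}_p)$ to $(\Pcal(\R),\W_p^{\Pcal(\R)})$. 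Granting this, the corollary follows from the chain
\begin{equation*}
\W_q^{\W_p^{\Pcal(\R)}}(\Delta_\Xcal,\Delta_\Ycal)=\W_q^{\W_p^{\Pcal(\R)}}(\Psi_\#\overline{\nu}_X,\Psi_\#\overline{\nu}_Y)\le 2\,\W_q^{\mathsf{GW}_p}(\overline{\nu}_X,\overline{\nu}_Y)\le 2\,\mathsf{GW}_\mathsf{C}(\Xcal,\Ycal),
\end{equation*}
in which the first equality is the definition of the weight distribution, the middle inequality expresses that pushforward along an $L$-Lipschitz map contracts $q$-Wasserstein distance by a factor of $L$, and the last inequality is Theorem~\ref{thm:wasserstein_estimate}.

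For the Lipschitz bound on $\Psi$ --- a standard lower bound for Gromov--Wasserstein distance, which I would recall rather than reprove (see \cite{memoli2011,ChowdhuryMemoli2019}) --- fix measure networks $\Xcal=(X,\mu_X,\omega_X)$ and $\Ycal=(Y,\mu_Y,\omega_Y)$ together with a coupling $\pi\in\coup(\mu_X,\mu_Y)$. After reordering coordinates, $\pi\otimes\pi$ becomes a coupling of $\mu_X\otimes\mu_X$ with $\mu_Y\otimes\mu_Y$; pushing it forward along $(x,x',y,y')\mapsto(\omega_X(x,x'),\omega_Y(y,y'))$ produces, by Lemma~\ref{lem:pushforward_lemma}, a coupling of $\Psi([\Xcal])$ and $\Psi([\Ycal])$ whose $L^p$-cost equals $\dis_p(\pi,\omega_X,\omega_Y)$ (and whose supremum over supports equals $\dis_\infty(\pi,\omega_X,\omega_Y)$ when $p=\infty$). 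Infimizing over $\pi$ gives
\begin{equation*}
\W_p^{\Pcal(\R)}\big(\Psi([\Xcal]),\Psi([\Ycal])\big)\le\inf_{\pi\in\coup(\mu_X,\mu_Y)}\dis_p(\pi,\omega_X,\omega_Y)=2\,\mathsf{GW}_p(\Xcal,\Ycal).
\end{equation*}
The same inequality shows that $\Psi$ descends to weak-isomorphism classes (if $\mathsf{GW}_p(\Xcal,\Ycal)=0$ then $\Psi([\Xcal])=\Psi([\Ycal])$; alternatively this is immediate from a stabilization via change of variables), and, being Lipschitz, $\Psi$ is Borel measurable, so $\Delta_\Xcal$ and $\Delta_\Ycal$ are well-defined.

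Finally I would record the contraction principle: if $f:(M,d_M)\to(N,d_N)$ is $L$-Lipschitz between Polish metric spaces and $\alpha,\beta\in\Pcal(M)$, then for any $\gamma\in\coup(\alpha,\beta)$ the measure $(f\times f)_\#\gamma$ lies in $\coup(f_\#\alpha,f_\#\beta)$ by Lemma~\ref{lem:pushforward_lemma}, and $\|d_N\circ(f\times f)\|_{L^q(\gamma)}\le L\,\|d_M\|_{L^q(\gamma)}$; infimizing over $\gamma$ yields $\W_q^{d_N}(f_\#\alpha,f_\#\beta)\le L\,\W_q^{d_M}(\alpha,\beta)$ for every $q\in[1,\infty]$. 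Applying this with $f=\Psi$, $L=2$, $\alpha=\overline{\nu}_X$, $\beta=\overline{\nu}_Y$ gives the middle inequality of the chain; here one uses that $\overline{\nu}_X=(m_\Xcal)_\#\nu_X$ is a genuine Borel measure by Lemma~\ref{lem:mX_continuous}, and that, since $\Omega_X$ is compact and $t\mapsto\omega_X^t$ is $L^\infty$-continuous, the measures $\Psi([\Xcal_t])$ with $t\in\Omega_X$ are supported in a common compact interval, so that every Wasserstein distance appearing is finite and taken over a Polish space. The only real obstacle is the Lipschitz estimate for $\Psi$, which is classical; the cases $p=\infty$ or $q=\infty$ are dispatched exactly as in the proof of Theorem~\ref{thm:wasserstein_estimate}, by passing to monotone limits in the integrability exponent.
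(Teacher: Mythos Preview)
Your proposal is correct and follows essentially the same approach as the paper: the paper likewise cites \cite{ChowdhuryMemoli2019} for the fact that the map \eqref{eqn:global_distribution} is $2$-Lipschitz with respect to $\GW_p$ and $\W_p^{\Pcal(\R)}$, invokes the contraction principle for Wasserstein distances under Lipschitz pushforwards (your contraction argument is exactly the content of \Cref{lem:induced_Lipschitz}), and then applies \Cref{thm:wasserstein_estimate}. Your additional remarks on well-definedness and the $p=\infty$, $q=\infty$ cases are reasonable supplementary details but not needed beyond what the paper already handles.
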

In \Cref{eqn:global_distribution_stability}, the superscript in $\W_q^{\W_p^{\Pcal(\R)}}$ indicates that the Wasserstein distance $\W_q$ is taken over the metric space $(\Pcal(\R), \W_p^{\Pcal(\R)})$; that is, it is a Wasserstein distance on the Wasserstein space.
The proof will use a lemma, whose proof follows directly from (the easy direction of) \Cref{lem:pushforward_lemma}. 

\begin{lemma}\label{lem:induced_Lipschitz}
    Let $f:X \to Y$ be a $k$-Lipschitz map between Polish metric spaces $(X,d_X)$ and $(Y,d_Y)$. Then the pushforward $f_\#:\Pcal(X) \to \Pcal(Y)$ is a $k$-Lipschitz map with respect to $\W_q^{d_X}$ and $\W_q^{d_Y}$.
\end{lemma}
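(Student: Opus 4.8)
The plan is to establish the statement of \Cref{lem:induced_Lipschitz}, namely that a $k$-Lipschitz map $f \colon X \to Y$ between Polish metric spaces induces a $k$-Lipschitz pushforward $f_\# \colon (\Pcal(X), \W_q^{d_X}) \to (\Pcal(Y), \W_q^{d_Y})$. First, I would fix two measures $\mu, \mu' \in \Pcal(X)$ and an arbitrary coupling $\pi \in \coup(\mu,\mu')$; the key observation is that $(f \times f)_\# \pi$ is a coupling of $f_\# \mu$ and $f_\# \mu'$, which is precisely the easy inclusion of \Cref{lem:pushforward_lemma} (applied with $g = f$). This gives us a map on couplings, and the strategy is to bound the transport cost of $(f\times f)_\# \pi$ by $k^q$ times that of $\pi$, then optimize.

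Concretely, for $q < \infty$ I would write, using the change-of-variables formula for pushforward measures and then the $k$-Lipschitz bound $d_Y(f(x),f(x')) \leq k \, d_X(x,x')$ pointwise,
\[
\int_{Y \times Y} d_Y(y,y')^q \, (f\times f)_\#\pi(dy \otimes dy') = \int_{X \times X} d_Y(f(x),f(x'))^q \, \pi(dx \otimes dx') \leq k^q \int_{X \times X} d_X(x,x')^q \, \pi(dx \otimes dx').
\]
Taking $q$-th roots gives $\|d_Y\|_{L^q((f\times f)_\#\pi)} \leq k \|d_X\|_{L^q(\pi)}$. Since every coupling of $f_\#\mu$ and $f_\#\mu'$ arises in this way by \Cref{lem:pushforward_lemma} (though for the Lipschitz bound I only need that couplings of the pushforward form a superset of the image, so even the easy inclusion suffices for the direction needed), infimizing over $\pi \in \coup(\mu,\mu')$ on the right-hand side yields $\W_q^{d_Y}(f_\#\mu, f_\#\mu') \leq k \, \W_q^{d_X}(\mu,\mu')$. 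The case $q = \infty$ is analogous: the essential supremum of $d_Y$ under $(f\times f)_\#\pi$ is bounded by $k$ times the essential supremum of $d_X$ under $\pi$, using that $(f\times f)$ maps $\supp(\pi)$ into $\supp((f\times f)_\#\pi)$ up to a null set, and one concludes by infimizing over $\pi$.

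There is no real obstacle here; the lemma is essentially a one-line consequence of functoriality of pushforward on couplings together with the pointwise Lipschitz estimate, and the paper itself signals this by saying the proof "follows directly from (the easy direction of) \Cref{lem:pushforward_lemma}." The only minor point requiring a word of care is the measurability of $f \times f$ and the validity of the change-of-variables formula, both of which are immediate in the Polish setting. I would keep the proof to two or three sentences.
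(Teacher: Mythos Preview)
Your proposal is correct and matches the paper's intended approach exactly: the paper states that the lemma ``follows directly from (the easy direction of) \Cref{lem:pushforward_lemma},'' and your argument does precisely this---push a coupling forward by $f\times f$, apply the pointwise Lipschitz bound under the integral, and infimize. Your parenthetical observation that only the easy inclusion is needed (not the full bijection of \Cref{lem:pushforward_lemma}) is spot on and worth keeping.
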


\begin{proof}[Proof of \Cref{cor:global_distribution_stability}]
    It is shown in \cite[Theorem 3.1]{ChowdhuryMemoli2019} that the map \Cref{eqn:global_distribution} is 2-Lipschitz, with respect to $\GW_p$ and $\W_p^{\Pcal(\R)}$. By \Cref{lem:induced_Lipschitz}, the associated pushforward map $\Pcal\big(\faktor{\mathfrak{M}}{\sim}\big) \to \Pcal\big(\Pcal(\R)\big)$ is 2-Lipschitz with respect to $\W_q^{\mathsf{GW}_p}$ and $\W_q^{\mathsf{W}^{\Pcal(\R)}_p}$. From \Cref{thm:wasserstein_estimate}, we have 
    \[
   \W_q^{\W_p^{\Pcal(\R)}}(\Delta_\Xcal, \Delta_\Ycal) \leq 2 \cdot \W_q^{\mathsf{GW}_p}(\overline{\nu}_X,\overline{\nu}_Y) \leq 2 \cdot \mathsf{GW}_\mathsf{C}(\Xcal,\Ycal).
    \]
\end{proof}

\begin{remark}[Computation]\label{rem:computation}
For pm-nets $\cX$ and $\cY$ defined over finite sets $X$ and $Y$ and finite parameter spaces $\Omega_X$ and $\Omega_Y$, respectively, the right-hand-side of \Cref{eqn:global_distribution_stability} involves computing Wasserstein distance between finitely-supported distributions on the Wasserstein space 
$\left(\Pcal\left(\Pcal\left(\R\right)\right), \W_p^{\Pcal(\R)}\right)$.
It is therefore polynomial-time computable (in the magnitudes of $X, Y, \Omega_X, \Omega_Y$). This gives a tractable lower estimate of the parameterized GW distance.
\end{remark}

\subsubsection{Weight Distributions for Random Graph Models}
\label{subsubsec:weight_distributions_random_graph_models}

We now specialize the results of \Cref{sec:weight-distribution} to the setting of random graphs. Consider a random graph model $\Xcal$, in the sense of \Cref{ex:random_graphs}, and suppose that $|X| = n$ and that $\mu_X$ is uniform.  For the sake of simplifying the discussion, choose an ordering of $X$ and consider the kernels $\omega_X^t$ arising in this model as symmetric, binary,  $n\times n$ matrices with zeros on their diagonals (recall that we use adjacency kernels in this example). In this case, the distribution $\overline{\nu}_X \in \Pcal\big(\faktor{\mathfrak{M}}{\sim}\big)$ can be considered as a discrete probability measure on the finite set of such matrices (the set has cardinality $n(n-1)/2$). Given such a matrix $\omega_X^t$, the distribution $(\omega_X^t)_\# (\mu_X \otimes \mu_X)$ arising from \Cref{eqn:global_distribution} is supported on $\{0,1\}$. Indeed, the weight on the point $1$ is exactly $k/n^2$, where $k$ is the number of non-zero entries appearing in the matrix; that is, $k$ is twice the number of edges appearing in the graph $t$. In light of this interpretation, the information contained in $(\omega_X^t)_\# (\mu_X \otimes \mu_X)$ is exactly the total number of edges in the graph $t$, and we refer to the weight distribution $\Delta_\cX$ in this case as the \emph{distribution of total edges}. 

The discussion above immediately leads to the following specialization of \Cref{cor:global_distribution_stability}, stated here somewhat informally.

\begin{corollary}\label{cor:global_distribution_stability_graphs}
The distribution of total edges is a stable invariant of a random graph model.
\end{corollary}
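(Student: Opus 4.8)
The plan is to obtain \Cref{cor:global_distribution_stability_graphs} as an immediate specialization of \Cref{cor:global_distribution_stability}, once the informal phrase ``distribution of total edges'' is given a precise meaning that matches the invariant $\Delta_\cX$. First I would fix the setup of \Cref{ex:random_graphs}: $\cX = (X,\mu_X,\Omega_X,\nu_X,\omega_X)$ with $|X| = n$, $\mu_X$ uniform, and $\omega_X^t$ the adjacency kernel of the graph $t \in \Omega_X$. Define $e_\cX \colon \Omega_X \to \{0,1,\dots,\binom{n}{2}\}$ to send $t$ to the number of edges of the graph $t$, and declare the \emph{distribution of total edges} of $\cX$ to be $(e_\cX)_\# \nu_X \in \Pcal(\R)$, i.e.\ the law of the edge count of a graph drawn from the model.

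Next I would recall the identification already carried out in \Cref{subsubsec:weight_distributions_random_graph_models}: under these hypotheses the global-distribution map of \Cref{eqn:global_distribution} sends the measure network $\cX_t = (X,\mu_X,\omega_X^t)$ to the two-point measure $\beta(e_\cX(t)) \coloneqq \bigl(1 - \tfrac{2 e_\cX(t)}{n^2}\bigr)\delta_0 + \tfrac{2 e_\cX(t)}{n^2}\,\delta_1 \in \Pcal(\R)$. Since $k \mapsto \beta(k)$ is injective on $\{0,1,\dots,\binom n2\}$, the weight distribution $\Delta_\cX$ — the pushforward of $\overline{\nu}_X$ by \Cref{eqn:global_distribution} — is exactly the pushforward of $(e_\cX)_\#\nu_X$ along $\beta$, so $\Delta_\cX$ and the distribution of total edges carry the same information. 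Applying \Cref{cor:global_distribution_stability} then yields, for any $p,q \in [1,\infty]$ and the cost structure $\mathsf C$ of \Cref{ex:general_parameter_spaces},
\[
\W_q^{\W_p^{\Pcal(\R)}}(\Delta_\cX,\Delta_\cY) \;\le\; 2\,\GW_\mathsf C(\cX,\cY),
\]
which is precisely the claimed stability of the invariant.

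To phrase the conclusion directly in terms of edge counts, I would make the right-hand metric explicit: for two-point measures on $\{0,1\}$ one computes $\W_p^{\Pcal(\R)}(\beta(k),\beta(k')) = \bigl(\tfrac{2}{n^2}|k - k'|\bigr)^{1/p}$ for $p < \infty$ (and $\W_\infty^{\Pcal(\R)}(\beta(k),\beta(k')) = \1_{\{k \ne k'\}}$), so $\beta$ is an isometry from $\{0,1,\dots,\binom n2\}$ with the metric $d(k,k') = (\tfrac{2}{n^2}|k-k'|)^{1/p}$ onto its image in $(\Pcal(\R),\W_p^{\Pcal(\R)})$; since pushing measures forward along an isometry preserves $q$-Wasserstein distance, the displayed inequality is equivalent to a $2$-Lipschitz bound, in terms of $\GW_\mathsf C$, on the map sending a random graph model to its distribution of total edges (metrized by $\W_q$ with ground metric $d$). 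I expect the write-up to be short: there is no analytic obstacle, as everything reduces to the $2$-Lipschitz property of \Cref{eqn:global_distribution} established in \cite{ChowdhuryMemoli2019} together with the elementary Wasserstein computation above, and the only genuine task is the bookkeeping of the last paragraph — pinning down the metric so that ``stable invariant'' acquires an unambiguous meaning.
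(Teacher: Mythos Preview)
Your proposal is correct and follows exactly the paper's approach: the corollary is stated informally there and is obtained as an immediate specialization of \Cref{cor:global_distribution_stability} via the identification of $\Delta_\cX$ with the edge-count law already carried out in \Cref{subsubsec:weight_distributions_random_graph_models}. Your additional paragraph making the ground metric on edge counts explicit goes slightly beyond what the paper records (it leaves the statement informal), but the computation is correct and matches \Cref{rem:computation_random_graph_models}.
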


\begin{example}[Erd\H{o}s-R\'{e}nyi Model]\label{rem:erdos_renyi_model}
For a concrete example, consider an Erd\H{o}s-R\'{e}nyi random graph model $\cX$ on $n$ nodes with probability $\rho \in [0,1]$ that any pair of nodes is connected. A straightforward calculation shows that the distribution of total edges is given by 
\[
\Delta_\cX = \sum_{k=0}^N \binom{N}{k} \rho^k (1-\rho)^{N-k} \delta_{\frac{k}{n^2}},
\]
where $N = n(n-1)/2$, and $\delta_{\frac{k}{n^2}}$ is a shorthand for the Dirac mass on $\Pcal(\R)$ located at the distribution supported on $\{0,1\}$ with weight $k/(n^2)$ at $1$ and $1-k/(n^2)$ at $0$. This recovers the well-known fact that the total number of edges in an Erd\H{o}s-R\'{e}nyi graph is binomially distributed. 
\end{example}

\begin{remark}[Computation for Random Graph Models]\label{rem:computation_random_graph_models}
    When $\Xcal$ is a random graph model, in particular, with each $\omega_X^t$ taking values only in $\{0,1\}$, the distribution of total edges $\Delta_\cX$ is especially easy to work with from a computational perspective. In this setting, the cost matrix used in the computation of $\W_q^{\W_1^{\Pcal(\R)}}(\Delta_\cX,\Delta_\cY)$ involves $1$-Wasserstein distances between the measures $(\omega_X^t)_\#(\mu_X \otimes \mu_X)$ and $(\omega_Y^t)_\#(\mu_Y \otimes \mu_Y)$. Each of these measures is supported on $\{0,1\}$, and it is not hard to show (via the semi-explicit formula for Wasserstein distances on the real line~\cite[Remark 2.19]{villani2021topics}) that this is given by 
    \[
    |(\omega_X^t)_\#(\mu_X \otimes \mu_X)(\{1\}) - (\omega_Y^t)_\#(\mu_Y \otimes \mu_Y))(\{1\})|.
    \]
    This makes the lower bound of \Cref{cor:global_distribution_stability} (or \Cref{cor:global_distribution_stability_graphs}) very efficient to work with in the random graph setting; see \Cref{subsec:clustering_random_graphs} for a numerical example.
\end{remark}

\subsection{Sample Approximation}
\label{sec:approximation_by_samples}
In this subsection, we consider random graphs and random metric spaces, as introduced in~\Cref{ex:random_graphs} and~\Cref{ex:random_metric_spaces}. As previously noted, one typically does not have access to the full distribution over the parameter space, but only to i.i.d. samples of measure networks drawn from $\nu_X$. Accordingly, this subsection focuses on convergence results for this sampling process. Throughout, we employ the cost structure $\mathsf{C}$ from~\Cref{ex:general_parameter_spaces}.

\subsubsection{Random Measure Networks}

We begin by focusing on a broadened version of the random metric space model of \Cref{ex:random_metric_spaces}.

\begin{defn}[Random Measure Network Model]\label{def:random_measure_network_model}
A \define{random measure network model} is a pm-net $\cX$ with the property that $\Omega_X \subset L^\infty(X \times X; \mu_X \otimes \mu_X)$ and $\omega_X^t = t$. We view 
$\Omega_X$ as a metric space equipped with the metric $d_{\Omega_X,p}$ induced by the $L^p$-norm $\|\cdot\|_{L^p{(X \times X; \mu_X \otimes \mu_X)}}$.
\end{defn}

Let $\cX$ be a random measure network model. We define the \define{empirical pm-net} 
\[
\cX_{T_N} \coloneqq (X, \mu_X, T_N, \nu_N, \omega_N)
\]
 by sampling $N$ kernels $T_N = \{t_1, \dots, t_N\}$ i.i.d. according to $\nu_X$ and setting $\nu_N \coloneqq \sum_{i=1}^N \frac{1}{N} \delta_{t_i}$ and $\omega_N^{t_i} = t_i$. For any $t \in \Omega_X$, let $\cX_t$ be the trivial pm-net from \Cref{ex:measure_networks} (alternatively, this is equivalent to the associated measure network, utilized in \Cref{subsec:lower_bound}). 
 
 We begin with some basic lemmas. 
 
\begin{lemma}
\label{lemma:one_point_coupling}
Let $(A,\alpha)$ and $(B,\beta)$ be probability spaces such that $|B| = 1$ (i.e.,~$B$ is a one-point set). Then $\coup(\alpha, \beta)$ consists of a single element $\xi = (p_A^{-1})_\# \alpha$ that satisfies $\int_{A \times B} F(a,b) \ \xi(da \otimes db) = \mathbb{E}_\alpha[ F(\bullet, b)] \coloneqq \int_{A} F(a, b) \ \alpha(da)$ for any measurable map $F: A \times B \to \R$. 
\end{lemma}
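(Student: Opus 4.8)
The plan is to exploit the fact that when $B$ is a one-point set, say $B = \{b\}$, the product space $A \times B$ is canonically isomorphic to $A$ as a measurable space. First I would record that the only probability measure on $B$ is $\beta = \delta_b$, and that the coordinate projection $p_A : A \times B \to A$ is a bijection whose inverse is the map $\iota : A \to A \times B$, $a \mapsto (a,b)$. Both $p_A$ and $\iota$ are measurable, since the Borel $\sigma$-algebra on $A \times B = A \times \{b\}$ consists precisely of the sets $C \times \{b\}$ with $C$ Borel in $A$. Hence $\iota = p_A^{-1}$ in the sense intended in the statement, and $\xi \coloneqq (p_A^{-1})_\# \alpha = \iota_\# \alpha$ is a well-defined Borel probability measure on $A \times B$.

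Next I would check that $\xi$ is a coupling of $\alpha$ and $\beta$: its left marginal is $(p_A)_\# \xi = (p_A \circ \iota)_\# \alpha = (\mathrm{id}_A)_\# \alpha = \alpha$, while its right marginal $(p_B)_\# \xi$ is a probability measure on the one-point space $B$ and is therefore forced to equal $\delta_b = \beta$. For uniqueness, given any $\xi' \in \coup(\alpha, \beta)$, the identity $\iota \circ p_A = \mathrm{id}_{A \times B}$ on $A \times \{b\}$ yields $\xi' = \iota_\# \big( (p_A)_\# \xi' \big)$, and the left-marginal constraint gives $(p_A)_\# \xi' = \alpha$, so $\xi' = \iota_\# \alpha = \xi$. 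This establishes $\coup(\alpha, \beta) = \{\xi\}$.

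The integral identity is then immediate from the change-of-variables formula for pushforward measures: for any measurable $F : A \times B \to \R$,
\[
\int_{A \times B} F(a,b)\, \xi(da \otimes db) = \int_{A \times B} F \, d(\iota_\# \alpha) = \int_A (F \circ \iota)(a)\, \alpha(da) = \int_A F(a,b)\, \alpha(da) = \mathbb{E}_\alpha[F(\bullet, b)].
\]
There is essentially no obstacle in this argument; the only point that warrants a sentence of care is the identification of the Borel $\sigma$-algebra on $A \times \{b\}$ with that of $A$, which is what legitimizes treating $p_A$ as an isomorphism of measurable spaces and hence pushing measures back and forth along it.
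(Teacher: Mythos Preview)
Your proof is correct and follows essentially the same approach as the paper: both exploit that $p_A : A \times B \to A$ is a bijection with measurable inverse $\iota(a) = (a,b)$, so the marginal constraint $(p_A)_\# \xi = \alpha$ forces $\xi = \iota_\# \alpha$, and the integral identity is change of variables. Your version is simply more explicit about the measurable-space identification and about checking both marginals, while the paper dispatches these details in a sentence.
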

\begin{proof}
	Let $\xi \in \coup(\alpha, \beta)$. Since $B$ is a one-point set, the projection $p_A:A \times B \to A$ is a bijection, so $(p_A)_\# \xi = \alpha$ forces $\xi = (p_A^{-1})_\# \alpha$. Then $\int_{A \times B} F(a,b) \ \xi(da \otimes db) = \int_{A \times B} F(a,b) \ (p_A^{-1})_\#\alpha(da \otimes db) = \int_{A} F(p_A^{-1}(a)) \ \alpha(da) = \int_{A} F(a, b) \ \alpha(da)$.
\end{proof}

\begin{lemma}
	\label{lemma:MS_one_parameter}
	For $p \in [1, \infty]$ and $q \in [1, \infty)$,
		$
		\displaystyle
		\GW_{\mathsf{C}}(\cX, \cX_t) = \inf_{\pi \in \coup(\mu_X, \mu_X)} \mathbb{E}_{\nu_X}[\dis_p(\pi, \bullet, t)^q]^{1/q}.
		$
\end{lemma}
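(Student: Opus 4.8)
The plan is to compute $\GW_{\mathsf{C}}(\cX, \cX_t)$ directly from the definition of the cost structure of \Cref{ex:general_parameter_spaces}, exploiting that the parameter space of $\cX_t$ is a single point. Recall that $\cX$ is a random measure network model, so $\omega_X^s = s$ for every $s \in \Omega_X$, and that $\cX_t$ is the measure network $(X,\mu_X,\omega_X^t) = (X,\mu_X,t)$, viewed as a pm-net over a one-point parameter space $\{\ast\}$ carrying the unique probability measure $\delta_\ast$, with kernel constantly equal to $t$. Since $\cX$ and $\cX_t$ have the same underlying probability space $(X,\mu_X)$, unwinding the definitions of $\GW_{\mathsf{C}}$ and of $\mathsf{C}_{\cX,\cX_t}$ gives
\[
\GW_{\mathsf{C}}(\cX, \cX_t) = \inf_{\pi \in \coup(\mu_X,\mu_X)} \ \inf_{\xi \in \coup(\nu_X,\,\delta_\ast)} \ \big\|\dis_p(\pi,\bullet,t)\big\|_{L^q(\Omega_X \times \{\ast\};\, \xi)},
\]
where $\dis_p(\pi,\bullet,t)$ denotes the map $(s,\ast)\mapsto \dis_p(\pi,s,t)$ on $\Omega_X\times\{\ast\}$ (and where I suppress the fixed normalizing constant of \Cref{ex:general_parameter_spaces}).

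The crux is that the inner infimum over $\xi$ collapses, because $\coup(\nu_X,\delta_\ast)$ is trivial. I would invoke \Cref{lemma:one_point_coupling} with $(A,\alpha)=(\Omega_X,\nu_X)$ and $(B,\beta)=(\{\ast\},\delta_\ast)$: the coupling set is a singleton $\{\xi_0\}$, and $\int_{\Omega_X\times\{\ast\}} F(s,\ast)\,\xi_0(ds\otimes d\ast) = \int_{\Omega_X} F(s,\ast)\,\nu_X(ds)$ for every measurable $F$. Applying this with $F(s,\ast)=\dis_p(\pi,s,t)^q$ --- which is a bona fide $q$-integrable function, since $s\mapsto\dis_p(\pi,s,t)$ is continuous for $p<\infty$ and lower semicontinuous for $p=\infty$ by \Cref{lem:continuity_lemma}, and is bounded by compactness of $\Omega_X$, exactly the observation that makes the cost structure of \Cref{ex:general_parameter_spaces} finite --- gives
\[
\big\|\dis_p(\pi,\bullet,t)\big\|_{L^q(\Omega_X\times\{\ast\};\,\xi_0)} = \Big(\int_{\Omega_X}\dis_p(\pi,s,t)^q\,\nu_X(ds)\Big)^{1/q} = \mathbb{E}_{\nu_X}\big[\dis_p(\pi,\bullet,t)^q\big]^{1/q},
\]
where passing to the integral/expectation form of the $L^q$-norm is exactly where the hypothesis $q<\infty$ is used. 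Substituting back and keeping only the outer infimum over $\pi$ yields the stated identity.

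I do not anticipate any real obstacle: the whole argument is a bookkeeping exercise whose one structural ingredient --- the triviality of couplings with a one-point factor --- is already isolated in \Cref{lemma:one_point_coupling}. The only point that warrants an explicit sentence is the measurability and integrability of $s\mapsto\dis_p(\pi,s,t)^q$, needed for the right-hand expectation to be well-defined; this is precisely the \Cref{lem:continuity_lemma}-plus-compactness argument already used to see that the cost structure of \Cref{ex:general_parameter_spaces} is well-defined.
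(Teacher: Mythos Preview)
Your proposal is correct and follows essentially the same approach as the paper: both proofs invoke \Cref{lemma:one_point_coupling} with $A = \Omega_X$, $\alpha = \nu_X$, $B$ the one-point parameter space, and $F(s,\ast) = \dis_p(\pi,s,t)^q$ to collapse the inner infimum over $\xi$. Your version is simply more explicit about unwinding the definitions and justifying the integrability of $s \mapsto \dis_p(\pi,s,t)^q$, which the paper's one-line proof leaves implicit.
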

\begin{proof}
	This follows by setting $A = \Omega_X$, $\alpha = \nu_X$, $B = \{t\}$ and $F_\pi(s, t) = \dis_p(\pi, s, t)^q$ in \Cref{lemma:one_point_coupling}.
\end{proof}

This leads to our first sampling convergence result.

\begin{prop}
	\label{prop:approx_one_point_pm_net}
	For $p \in [1, \infty]$ and $q \in [1, \infty)$, $\GW_{\mathsf{C}}(\cX_{T_N}, \cX_t) \to \GW_{\mathsf{C}}(\cX, \cX_t)$ almost surely as $N \to \infty$.
\end{prop}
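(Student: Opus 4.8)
The plan is to rewrite both parameterized GW distances as infima, over the compact space of couplings $\coup(\mu_X,\mu_X)$, of integrated distortions against the true parameter measure $\nu_X$ and the empirical one $\nu_N$, and then to push the weak convergence $\nu_N\to\nu_X$ through the infimum using compactness of $\coup(\mu_X,\mu_X)$ and the continuity of the distortion established in \Cref{lem:continuity_lemma}. Since $\cX_{T_N}$ is itself a random measure network model in the sense of \Cref{def:random_measure_network_model}, \Cref{lemma:MS_one_parameter} applies to both $\cX$ and $\cX_{T_N}$ and gives
\[
\GW_{\mathsf{C}}(\cX,\cX_t)=\inf_{\pi\in\coup(\mu_X,\mu_X)}F(\pi),\qquad \GW_{\mathsf{C}}(\cX_{T_N},\cX_t)=\inf_{\pi\in\coup(\mu_X,\mu_X)}F_N(\pi),
\]
where $g(\pi,s):=\dis_p(\pi,\omega_X^s,\omega_X^t)^q$, $F(\pi):=\big(\int_{\Omega_X}g(\pi,s)\,\nu_X(ds)\big)^{1/q}$ and $F_N(\pi):=\big(\int_{\Omega_X}g(\pi,s)\,\nu_N(ds)\big)^{1/q}=\big(\tfrac1N\sum_{i=1}^N g(\pi,t_i)\big)^{1/q}$. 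By compactness of $\Omega_X$ the function $g$ is bounded on $\coup(\mu_X,\mu_X)\times\Omega_X$; when $p<\infty$ it is jointly continuous by \Cref{lem:continuity_lemma}, hence uniformly continuous since $\coup(\mu_X,\mu_X)\times\Omega_X$ is compact (compactness of $\coup(\mu_X,\mu_X)$ being Prokhorov's theorem, cf.~\cite[Lemma~1.2]{Sturm2023}), while for $p=\infty$ it is jointly lower semicontinuous and bounded.

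Next I would fix once and for all a probability-one event on which the empirical measures $\nu_N=\tfrac1N\sum_{i=1}^N\delta_{t_i}$ converge weakly to $\nu_X$ (Varadarajan's theorem), intersected, in the case $p=\infty$, with the probability-one event furnished by the strong law of large numbers for the i.i.d.\ bounded variables $(g(\pi^\star,t_i))_i$, where $\pi^\star$ denotes a fixed minimizer of $F$ (which exists since $F$ is lower semicontinuous on the compact set $\coup(\mu_X,\mu_X)$). On this event I would bound $\limsup$ and $\liminf$ of $\inf_\pi F_N(\pi)$ separately. For the upper bound: $\inf_\pi F_N(\pi)\le F_N(\pi^\star)$, and $F_N(\pi^\star)^q=\int g(\pi^\star,\cdot)\,d\nu_N\to\int g(\pi^\star,\cdot)\,d\nu_X=F(\pi^\star)^q$ — by weak convergence when $p<\infty$, where $g(\pi^\star,\cdot)$ is bounded and continuous, and by the strong law of large numbers when $p=\infty$. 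Hence $\limsup_N\inf_\pi F_N(\pi)\le\inf_\pi F(\pi)$.

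For the lower bound, I would take minimizers $\pi_N$ of $F_N$ (available by lower semicontinuity of $F_N$ and compactness). Given any subsequence along which $\inf_\pi F_N(\pi)$ converges, I extract a further subsequence with $\pi_{N_k}\to\pi_\infty$ in $\coup(\mu_X,\mu_X)$. When $p<\infty$, uniform continuity of $g$ gives $\|g(\pi_{N_k},\cdot)-g(\pi_\infty,\cdot)\|_{L^\infty(\Omega_X)}\to0$ and weak convergence gives $\int g(\pi_\infty,\cdot)\,d\nu_{N_k}\to\int g(\pi_\infty,\cdot)\,d\nu_X$, so $F_{N_k}(\pi_{N_k})^q\to F(\pi_\infty)^q$; when $p=\infty$, joint lower semicontinuity of $g$ together with the weak convergence $\delta_{\pi_{N_k}}\otimes\nu_{N_k}\to\delta_{\pi_\infty}\otimes\nu_X$ and the Portmanteau theorem yield $\liminf_k F_{N_k}(\pi_{N_k})^q\ge F(\pi_\infty)^q$. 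Either way the limit of $\inf_\pi F_N(\pi)$ along the chosen subsequence is $\ge F(\pi_\infty)\ge\inf_\pi F(\pi)$; since this holds for every convergent subsequence of the bounded sequence $(\inf_\pi F_N(\pi))_N$, we conclude $\liminf_N\inf_\pi F_N(\pi)\ge\inf_\pi F(\pi)$. Combined with the previous paragraph, $\inf_\pi F_N(\pi)\to\inf_\pi F(\pi)$ on the fixed probability-one event, which is the claim.

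The main obstacle is the passage from weak convergence of $\nu_N$ — a statement about each fixed test function — to convergence of the optimized quantity $\inf_\pi F_N(\pi)$, which implicitly involves the whole uncountable family $\{g(\pi,\cdot):\pi\in\coup(\mu_X,\mu_X)\}$ of test functions at once; the resolution is the compactness of $\coup(\mu_X,\mu_X)$ combined with the continuity properties of the distortion from \Cref{lem:continuity_lemma}. A secondary technical point is the separate handling of $p=\infty$, where one replaces the uniform-continuity estimate by a Portmanteau argument in the lower bound and the weak-convergence step by the strong law of large numbers in the upper bound.
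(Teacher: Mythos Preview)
Your argument is correct and shares the paper's starting point: both invoke \Cref{lemma:MS_one_parameter} to write each side as an infimum over $\coup(\mu_X,\mu_X)$ of the $q$-th root of an integrated distortion, and both appeal to convergence of empirical averages. The paper's proof, however, is a two-line sketch: it applies the strong law of large numbers to obtain $\mathbb{E}_{\nu_N}[F_\pi^q]\to\mathbb{E}_{\nu_X}[F_\pi^q]$ almost surely for each fixed $\pi$, and then simply ``takes the infimum over $\coup(\mu_X,\mu_X)$'' without further comment.

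What you call ``the main obstacle'' --- the passage from pointwise almost-sure convergence indexed by an uncountable family of couplings to convergence of the infimum --- is precisely the step the paper elides. Your compactness-plus-subsequence argument (uniform continuity of $g$ on the compact set $\coup(\mu_X,\mu_X)\times\Omega_X$ for $p<\infty$, Portmanteau for the lower bound when $p=\infty$, and the strong law at the single deterministic minimizer $\pi^\star$ for the upper bound) is exactly what is needed to make that step rigorous. An equivalent repair would be a uniform SLLN: for $p<\infty$ the family $\{g(\pi,\cdot):\pi\in\coup(\mu_X,\mu_X)\}$ is uniformly bounded and equicontinuous on the compact $\Omega_X$, hence Glivenko--Cantelli, giving $\sup_\pi|\int g(\pi,\cdot)\,d\nu_N-\int g(\pi,\cdot)\,d\nu_X|\to0$ almost surely. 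Either route closes the gap; yours also handles $p=\infty$ cleanly, which the paper's sketch does not distinguish.
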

\begin{proof}
	Given $\pi \in \coup(\mu_X, \mu_X)$, define $F_\pi:T_N \to \R_{\geq 0}$ by $F_\pi(t_i) \coloneqq \dis_p(\pi, t_i, t)$. By the Strong Law of Large Numbers, $\mathbb{E}_{\nu_N}[F_\pi^q] \to \mathbb{E}_{\nu_X}[F_\pi^q]$ almost surely. Taking infimum over $\coup(\mu_X, \mu_X)$ and using \Cref{lemma:one_point_coupling} yields
	\begin{equation*}
		\GW_{\mathsf{C}}(\cX_{T_N}, \cX_t) = \inf_{\pi \in \coup(\mu_X, \mu_Y)} \mathbb{E}_{\nu_N}[F_\pi^q(s)]^{1/q} \to \inf_{\pi \in \coup(\mu_X, \mu_Y)} \mathbb{E}_{\nu_X}[F_\pi^q(s)]^{1/q} = \GW_{\mathsf{C}}(\cX, \cX_t)
	\end{equation*}
	almost surely.
\end{proof}

\begin{remark}[$\W_q^{d_{\Omega_X,p}}$ metrizes weak convergence on $\Pcal(\Omega_X)$]
	\label{rmk:finite_moments}
	Let $p \in [1, \infty]$ and $q \in [1, \infty)$. \cite[Theorem 6.9]{Villani2009} states that $\W_q^{d_{\Omega_X,p}}$ parametrizes weak convergence on the set of measures $\nu_X \in \Pcal(\Omega_X)$ that have finite $q$ moment. However, any $\nu_X \in \Pcal(\Omega_X)$ has finite moments of all orders by compactness of $\Omega_X$ because if $D_p$ is the diameter of $(\Omega_X, d_{\Omega_X,p})$ and $q < \infty$,
	\begin{equation*}
		\int_{\Omega_X} \| t - t_0 \|_{\Omega_X, p}^q \ \nu_X(dt) \leq \int_{\Omega_X} D_p^q \ \nu_X(dt) = D_p^q < \infty.
	\end{equation*}
	Hence, $\W_q^{d_{\Omega_X,p}}$ parametrizes weak convergence on $\Pcal(\Omega_X)$ for any $1 \leq q < \infty$.
\end{remark}

Hence, we can state our next sampling convergence result for all measures $\nu_X \in \Pcal(\Omega_X)$.

\begin{prop}
	\label{prop:approx_random_metrics}
	Let $p \in [1, \infty]$ and $q \in [1, \infty)$. For any $\nu_X \in \Pcal(\Omega_X)$, $\GW_{\mathsf{C}}(\cX_{T_N}, \cX) \leq \W_q^{d_{\Omega_X,p}}(\nu_N, \nu_X)$ and, thus, $\GW_{\mathsf{C}}(\cX_{T_N}, \cX) \to 0$ almost surely as $N \to \infty$.
\end{prop}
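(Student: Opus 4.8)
The plan is to prove the inequality $\GW_{\mathsf C}(\cX_{T_N},\cX)\le \W_q^{d_{\Omega_X,p}}(\nu_N,\nu_X)$ directly from the definition of the cost structure of \Cref{ex:general_parameter_spaces}, and then to obtain the almost-sure convergence by combining this bound with the classical fact that empirical measures converge weakly to the sampling law, together with \Cref{rmk:finite_moments}. The key observation for the inequality is that, in a random measure network model, the underlying probability space $(X,\mu_X)$ is shared by $\cX$ and $\cX_{T_N}$, so the diagonal coupling $\pi_{\mathrm{diag}}\coloneqq(\mathrm{id}_X\times\mathrm{id}_X)_\#\mu_X$ is an admissible element of $\coup(\mu_X,\mu_X)$. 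Substituting it into $\GW_{\mathsf C}(\cX_{T_N},\cX)=\tfrac12\inf_{\pi}\inf_{\xi}\|\dis_p(\pi,\omega_N,\omega_X)\|_{L^q(\xi)}$ reduces the problem to evaluating $\dis_p(\pi_{\mathrm{diag}},\omega_N^{s},\omega_X^{t})$ for $s\in T_N$ and $t\in\Omega_X$.

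Since $\pi_{\mathrm{diag}}$ is concentrated on the diagonal of $X\times X$, and since $\omega_N^{s}=s$ and $\omega_X^{t}=t$ by the definition of a random measure network model (\Cref{def:random_measure_network_model}), the two-fold distortion integral collapses, for $p<\infty$, to
\[
\dis_p(\pi_{\mathrm{diag}},\omega_N^{s},\omega_X^{t})=\left(\int_{X\times X}|s(x,x')-t(x,x')|^{p}\,\mu_X(dx)\,\mu_X(dx')\right)^{1/p}=\|s-t\|_{L^p(X\times X;\mu_X\otimes\mu_X)}=d_{\Omega_X,p}(s,t),
\]
and for $p=\infty$ the same identity holds, obtained by letting $p\to\infty$ and using that $\dis_\infty$ is the supremum of the $\dis_p$ (as in the proof of \Cref{lem:continuity_lemma}) while $\|s-t\|_{L^p}\uparrow\|s-t\|_{L^\infty}$. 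Therefore
\[
\GW_{\mathsf C}(\cX_{T_N},\cX)\le \tfrac12\inf_{\xi\in\coup(\nu_N,\nu_X)}\left(\int_{T_N\times\Omega_X}d_{\Omega_X,p}(s,t)^{q}\,\xi(ds\otimes dt)\right)^{1/q}=\tfrac12\,\W_q^{d_{\Omega_X,p}}(\nu_N,\nu_X)\le \W_q^{d_{\Omega_X,p}}(\nu_N,\nu_X),
\]
where the middle equality is just the definition of the Wasserstein distance, noting that $\nu_N$ is supported on $T_N\subset\Omega_X$, so that the couplings in $\coup(\nu_N,\nu_X)$ are precisely those appearing in the Kantorovich problem on $(\Omega_X,d_{\Omega_X,p})$.

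For the convergence statement, I would invoke the standard fact (Varadarajan's theorem) that, since $(\Omega_X,d_{\Omega_X,p})$ is separable, the empirical measures $\nu_N=\frac1N\sum_{i=1}^{N}\delta_{t_i}$ of i.i.d.\ samples from $\nu_X$ converge weakly to $\nu_X$ almost surely. As $\Omega_X$ is compact, \Cref{rmk:finite_moments} guarantees that $\W_q^{d_{\Omega_X,p}}$ metrizes weak convergence on $\Pcal(\Omega_X)$, hence $\W_q^{d_{\Omega_X,p}}(\nu_N,\nu_X)\to0$ almost surely; together with the inequality above and $\GW_{\mathsf C}\ge0$, this yields $\GW_{\mathsf C}(\cX_{T_N},\cX)\to0$ almost surely. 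The argument is largely bookkeeping; the only points needing care are the collapse of the distortion integral under the diagonal coupling — which genuinely uses that $\cX$ and $\cX_{T_N}$ share the same node space and that the kernels are literally the parameters — and the $p=\infty$ endpoint, which I would handle via the monotone limit in $p$ rather than the essential-supremum definition of $\dis_\infty$. I do not expect any substantive obstacle beyond this: the real content is supplied by the already-established \Cref{rmk:finite_moments} and by the classical almost-sure weak convergence of empirical measures.
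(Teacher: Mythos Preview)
Your proposal is correct and follows essentially the same approach as the paper: both plug in the diagonal coupling $\pi_{\mathrm{diag}}=(\mathrm{id}_X\times\mathrm{id}_X)_\#\mu_X$, observe that $\dis_p(\pi_{\mathrm{diag}},s,t)=d_{\Omega_X,p}(s,t)$, and then infimize over $\xi\in\coup(\nu_N,\nu_X)$ to obtain the Wasserstein bound, with convergence following from \Cref{rmk:finite_moments} and almost-sure weak convergence of empirical measures. The only cosmetic differences are that you track the factor $\tfrac12$ (yielding the slightly sharper $\GW_{\mathsf C}\le\tfrac12\W_q^{d_{\Omega_X,p}}$, which the paper suppresses), and you treat the $p=\infty$ case via the monotone limit $\dis_p\uparrow\dis_\infty$ rather than the paper's direct computation on $\supp(\Delta)$.
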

\begin{proof}
	Let $\Delta \coloneqq (\mathrm{id}_X \times \mathrm{id}_X)_\# \mu_X$ be the diagonal coupling in $\coup(\mu_X, \mu_X)$. For $p < \infty$, we have
	\begin{align*}
		\dis_p(\Delta, t_i, t)
		&= \left( \int_{(X \times X)^2} |t_i(x,x') - t(y,y')|^p \ \Delta(dx \otimes dy) \Delta(dx' \otimes dy') \right)^{1/p}\\
		&= \left( \int_{X \times X} |t_i(x,x') - t(x,x')|^p \ \mu_X(dx) \mu_X(dx') \right)^{1/p}
		= d_{\Omega_X,p}(t_i,t),
	\end{align*}
	while 
	\[
	\displaystyle \dis_\infty(\Delta, t_i, t) = \sup_{(x,x'),(y,y') \in \supp(\Delta)} |t_i(x,y) - t(x',y')| = \sup_{x,x' \in X} |t_i(x,x') - t(x,x')| = d_{\Omega_X, \infty}(t_i,t).
	\]
	 Then
	\begin{align*}
		\GW_{\mathsf{C}}(\cX_{T_N}, \cX)^q
		&= \inf_{\xi \in \coup(\nu_{N}, \nu_X)} \inf_{\pi \in \coup(\mu_X, \mu_X)} \int_{T_N \times \Omega_X} \dis_p(\pi, t_i, t)^q \ \xi(dt_i \otimes dt) \\
		&\leq \inf_{\xi \in \coup(\nu_{N}, \nu_X)} \int_{T_N \times \Omega_X} \dis_p(\Delta, t_i, t)^q \ \xi(dt_i \otimes dt) \\
		&= \inf_{\xi \in \coup(\nu_{N}, \nu_X)} \int_{T_N \times \Omega_X} \|t_i - t\|_{\Omega_X, p}^q \ \xi(dt_i \otimes dt) \\
		&= \W_q^{d_{\Omega_X,p}}(\nu_N, \nu_X)^q.
	\end{align*}
	This yields the first claim. Since $\W_q^{d_{\Omega_X,p}}$ metrizes weak convergence on $\Pcal(\Omega_X)$ by \Cref{rmk:finite_moments}, and the empirical measures $\nu_N$ converge (set-wise, hence weakly) to $\nu_X$ almost surely, $\W_q^{d_{\Omega_X,p}}(\nu_N, \nu_X) \to 0$ almost surely as $N \to \infty$.
\end{proof}

This quickly leads to our main sampling convergence result.

\begin{theorem}
	\label{thm:approx_random_metrics}
	Let $p \in [1, \infty]$ and $q \in [1, \infty)$. Let $\cX$ and $\cY$ be random measure network models, as in \Cref{def:random_measure_network_model}. Let $\cX_{T_N}$ and $\cY_{S_N}$ be their respective empirical pm-nets. Then
	\begin{equation*}
		|\GW_{\mathsf{C}}(\cX_{T_N}, \cY_{S_N}) - \GW_{\mathsf{C}}(\cX, \cY)|
		\leq
		\GW_{\mathsf{C}}(\cX_{T_N}, \cX) + \GW_{\mathsf{C}}(\cY_{S_N}, \cY),
	\end{equation*}
	and thus, $\GW_{\mathsf{C}}(\cX_{T_N}, \cY_{S_N}) \to \GW_{\mathsf{C}}(\cX, \cY)$ almost surely as $N \to \infty$.
\end{theorem}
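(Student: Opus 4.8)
The plan is to recognize the displayed inequality as nothing more than the reverse triangle inequality for the pseudometric $\GW_{\mathsf{C}}$, and then to deduce the almost-sure convergence from the one-sided approximation result already in hand, \Cref{prop:approx_random_metrics}. The first thing I would check is that the triangle inequality is actually available here, i.e.\ that $\GW_{\mathsf{C}}$ is a pseudometric on a class containing all four objects $\cX,\cY,\cX_{T_N},\cY_{S_N}$. By \Cref{thm:isomorphism_of_pm_nets} (case $\mathfrak{N}_{\mathrm{all}}$, with the cost structure of \Cref{ex:general_parameter_spaces}), $\GW_{\mathsf{C}}$ is a pseudometric on $\mathfrak{N}_{\mathrm{all}}$; and each empirical pm-net $\cX_{T_N} = (X,\mu_X,T_N,\nu_N,\omega_N)$ does belong to $\mathfrak{N}_{\mathrm{all}}$, since $T_N$ is a finite set with the discrete topology (hence a compact Polish space), $\nu_N$ is a probability measure on it, and the kernel map $t_i \mapsto \omega_N^{t_i}=t_i$ is trivially $L^\infty$-continuous on a discrete space. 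The random measure network models $\cX$ and $\cY$ are pm-nets in $\mathfrak{N}_{\mathrm{all}}$ by hypothesis.

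Given this, the inequality follows by applying the triangle inequality twice and using symmetry of $\GW_{\mathsf{C}}$: from
\[
\GW_{\mathsf{C}}(\cX_{T_N},\cY_{S_N}) \le \GW_{\mathsf{C}}(\cX_{T_N},\cX) + \GW_{\mathsf{C}}(\cX,\cY) + \GW_{\mathsf{C}}(\cY,\cY_{S_N})
\]
and the analogous bound on $\GW_{\mathsf{C}}(\cX,\cY)$ obtained by inserting $\cX_{T_N}$ and $\cY_{S_N}$, one subtracts to obtain $|\GW_{\mathsf{C}}(\cX_{T_N},\cY_{S_N}) - \GW_{\mathsf{C}}(\cX,\cY)| \le \GW_{\mathsf{C}}(\cX_{T_N},\cX) + \GW_{\mathsf{C}}(\cY_{S_N},\cY)$. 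For the convergence statement, \Cref{prop:approx_random_metrics} (applied to $\cX$, and separately to $\cY$) gives $\GW_{\mathsf{C}}(\cX_{T_N},\cX) \to 0$ almost surely and $\GW_{\mathsf{C}}(\cY_{S_N},\cY) \to 0$ almost surely; the intersection of these two probability-one events again has probability one, so the right-hand side of the inequality tends to $0$ almost surely, and hence $\GW_{\mathsf{C}}(\cX_{T_N},\cY_{S_N}) \to \GW_{\mathsf{C}}(\cX,\cY)$ almost surely. Note that no independence assumption between the samples $T_N$ and $S_N$ is needed for this last step.

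\textbf{Main obstacle.} There is no substantive difficulty: the theorem is a formal consequence of the pseudometric property of $\GW_{\mathsf{C}}$ and of \Cref{prop:approx_random_metrics}. The only point deserving a line of care is confirming that the empirical pm-nets are genuine elements of $\mathfrak{N}_{\mathrm{all}}$, so that the triangle inequality legitimately applies to them — and this reduces to the elementary observation that a finite set with the discrete topology is compact Polish.
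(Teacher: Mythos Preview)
Your proposal is correct and is essentially identical to the paper's own proof: both apply the triangle inequality for $\GW_{\mathsf{C}}$ twice to obtain the displayed estimate, and then invoke \Cref{prop:approx_random_metrics} for the almost-sure convergence. Your extra sentence verifying that the empirical pm-nets lie in $\mathfrak{N}_{\mathrm{all}}$ is a welcome bit of care that the paper leaves implicit.
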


\begin{proof}
	By the triangle inequality,
	\begin{align*}
		\GW_{\mathsf{C}}(\cX_{T_N}, \cY_{S_N})
		&\leq
		\GW_{\mathsf{C}}(\cX_{T_N}, \cX)
		+ \GW_{\mathsf{C}}(\cX, \cY)
		+ \GW_{\mathsf{C}}(\cY, \cY_{S_N}), \text{ and} \\
		\GW_{\mathsf{C}}(\cX, \cY)
		& \leq
		\GW_{\mathsf{C}}(\cX, \cX_{T_N})
		+ \GW_{\mathsf{C}}(\cX_{T_N}, \cY_{S_N})
		+ \GW_{\mathsf{C}}(\cY_{S_N}, \cY).
	\end{align*}
	Hence,
	\[
		|\GW_{\mathsf{C}}(\cX_{T_N}, \cY_{S_N}) - \GW_{\mathsf{C}}(\cX, \cY)|
		\leq
		\GW_{\mathsf{C}}(\cX_{T_N}, \cX) + \GW_{\mathsf{C}}(\cY_{S_N}, \cY),
	\]
	and the right-hand-side converges almost surely to zero by  \Cref{prop:approx_random_metrics}.
\end{proof}

From \Cref{prop:approx_random_metrics}, the triangle inequality for Wasserstein distances, the argument in the proof of \Cref{thm:approx_random_metrics}, and \Cref{cor:global_distribution_stability}, we deduce the following corollary.

\begin{corollary}
	With the same notation and setup as \Cref{thm:approx_random_metrics}, the weight distributions satisfy
	\[
	\W_q^{\W_p^{\Pcal(\R)}}(\Delta_{\cX_{T_N}},\Delta_{\cX}) \to 0 \quad \mbox{and} \quad 
	\W_q^{\W_p^{\Pcal(\R)}}(\Delta_{\cX_{T_N}},\Delta_{\cY_{S_N}}) \to \W_q^{\W_p^{\Pcal(\R)}}(\Delta_{\cX},\Delta_{\cY})
	\]
	almost surely as $N \to \infty$.
\end{corollary}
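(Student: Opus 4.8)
The plan is to combine the stability bound of \Cref{cor:global_distribution_stability} with the sampling convergence of \Cref{prop:approx_random_metrics}, exactly as the remark before the statement indicates.

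First I would establish the first limit. Apply \Cref{cor:global_distribution_stability} to the pair of pm-nets $\cX_{T_N}$ and $\cX$ — both lie in $\mathfrak{N}_\mathrm{all}$, and $\cX_{T_N}$ is a legitimate pm-net since its parameter space $T_N$ is finite (hence compact) and $\nu_N$ is a probability measure — obtaining
\[
\W_q^{\W_p^{\Pcal(\R)}}(\Delta_{\cX_{T_N}},\Delta_{\cX}) \leq 2 \cdot \GW_{\mathsf{C}}(\cX_{T_N},\cX).
\]
By \Cref{prop:approx_random_metrics}, the right-hand side tends to $0$ almost surely as $N \to \infty$, which gives the first claim; applying the same reasoning to $\cY$ in place of $\cX$ gives $\W_q^{\W_p^{\Pcal(\R)}}(\Delta_{\cY_{S_N}},\Delta_{\cY}) \to 0$ almost surely as well.

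Next I would deduce the second limit from the first by the ``quadrilateral'' form of the triangle inequality, mirroring the proof of \Cref{thm:approx_random_metrics}. Since $\W_q^{\W_p^{\Pcal(\R)}}$ is a genuine metric on (the relevant subset of) $\Pcal(\Pcal(\R))$, two applications of the triangle inequality yield
\[
\left| \W_q^{\W_p^{\Pcal(\R)}}(\Delta_{\cX_{T_N}},\Delta_{\cY_{S_N}}) - \W_q^{\W_p^{\Pcal(\R)}}(\Delta_{\cX},\Delta_{\cY}) \right| \leq \W_q^{\W_p^{\Pcal(\R)}}(\Delta_{\cX_{T_N}},\Delta_{\cX}) + \W_q^{\W_p^{\Pcal(\R)}}(\Delta_{\cY_{S_N}},\Delta_{\cY}).
\]
Both terms on the right converge to $0$ almost surely by the first part; the intersection of the two corresponding probability-one events is again a probability-one event, so the right-hand side converges to $0$ almost surely and the second claim follows.

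I do not anticipate a genuine obstacle here: the argument is a direct concatenation of already-established results. The only points worth a sentence of care are that each empirical object $\cX_{T_N}$ meets the (compactness and probability-measure) hypotheses required to invoke \Cref{cor:global_distribution_stability}, which is immediate, and that $\W_q^{\W_p^{\Pcal(\R)}}$ genuinely obeys the triangle inequality on the weight distributions at hand — this is standard, since the kernels are bounded and so all the measures $(\omega^t)_\#(\mu\otimes\mu)$ in play have bounded support, placing their laws in a space on which the iterated Wasserstein distance is a true metric.
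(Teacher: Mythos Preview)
Your proposal is correct and follows essentially the same approach as the paper: apply \Cref{cor:global_distribution_stability} to bound the iterated Wasserstein distance by $2\cdot\GW_{\mathsf{C}}$, invoke \Cref{prop:approx_random_metrics} for the almost-sure convergence of the latter, and then use the quadrilateral triangle-inequality argument from the proof of \Cref{thm:approx_random_metrics} for the second limit. Your added remarks verifying that the empirical pm-nets satisfy the hypotheses and that the iterated Wasserstein distance is a genuine metric are appropriate points of care, though the paper leaves these implicit.
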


\indent \Cref{prop:approx_random_metrics} implies that $\GW_{\mathsf{C}}(\cX_{T_N}, \cX)$ converges towards 0 no faster than $\W_q^{d_{\Omega_X,p}}(\nu_N, \nu_X)$ does. Likewise, the convergence of $\GW_{\mathsf{C}}(\cX_{T_N}, \cY_{S_N})$ towards $\GW_{\mathsf{C}}(\cX, \cY)$ in  \Cref{thm:approx_random_metrics} is dominated by terms of the form $\GW_{\mathsf{C}}(\cX_{T_N}, \cX)$ and thus, by Wasserstein distances. Thanks to existing results on rates of convergence of Wasserstein distances, we can quantify the rate of convergence in~\Cref{prop:approx_random_metrics} and \Cref{thm:approx_random_metrics}. Although the results we cite hold for measures on $\R^d$ and $\nu_X$ is defined on $\Omega_X$, the elements of $\Omega_X$ are network functions which can be embedded into $\R^d$ for some fixed $d$. We just need to restrict to finite pm-nets in order for $d < \infty$. Following \cite{Fournier2023}, we use the notation $M_{\R^d,r}(\mu) \coloneqq \int_{\R^d} \|v\|^r \ \mu(dv)$ where $\mu$ is a measure and $\| \cdot \|$ is a norm, both defined on $\R^d$. Below, we use $\|\cdot\|_{\Omega_X,p}$ as short hand for the $L^p$-norm, restricted to $\Omega_X$. 

\begin{prop}
	\label{prop:convergence_rates_empirical}
	Let $p \in [1, \infty]$ and $q \in [1, \infty)$. Let $\cX$ and $\cY$ be random measure network models with $n \coloneqq |X|$, $m \coloneqq |Y|$ and $m \leq n < \infty$. Let $\cX_{T_N}$ and $\cY_{S_N}$ be their respective empirical pm-nets for some $T_N \subset \Omega_X$ and $S_N \subset \Omega_Y$ with $|T_N| = |S_N| = N$. Define $D_X \coloneqq \sup_{t \in \Omega_X} \|t\|_{\Omega_X, p}$ and $D_Y \coloneqq \sup_{t \in \Omega_Y} \|t\|_{\Omega_Y, p}$. Let $d \coloneqq n^2$. Then there exist constants $C = C(p,q,n)$ and $C' = C'(p,q,n,m)$ such that
	\begin{equation*}
		\mathbb{E} \left[ \GW_{\mathsf{C}}(\cX_{T_N}, \cX)^q \right]
		\leq C D_X^q \cdot
		\begin{cases}
			N^{-1/2} & \text{ if } q > d/2,\\
			N^{-1/2} \log(1+N) & \text{ if } q = d/2,\\
			N^{-q/d} & \text{ if } 0 < q < d/2.
		\end{cases}
	\end{equation*}
	and
	\begin{align*}
		\mathbb{E} \left[\left| \GW_{\mathsf{C}}(\cX_{T_N}, \cY_{S_N}) - \GW_{\mathsf{C}}(\cX, \cY) \right|\right]
		&\leq C' (D_X + D_Y) \cdot 
		\begin{cases}
			N^{-1/{2q}} & \text{ if } q > d/2,\\
			N^{-1/{2q}} \log(1+N)^{1/q} & \text{ if } q = d/2,\\
			N^{-1/d} & \text{ if } 0 < q < d/2.
		\end{cases}
	\end{align*}
	If $\cX$ and $\cY$ are random metric space models instead (see  \Cref{ex:random_metric_spaces}), then the same bounds hold with $d = n(n-1)/2$ instead of $d = n^2$.
\end{prop}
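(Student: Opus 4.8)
The plan is to derive both estimates from the sampling bound $\mathsf{GW}_{\mathsf{C}}(\cX_{T_N},\cX) \le \W_q^{d_{\Omega_X,p}}(\nu_N,\nu_X)$ of \Cref{prop:approx_random_metrics}, combined with known rates of convergence of empirical measures in Wasserstein distance on finite-dimensional Euclidean space. First I would observe that, because $X$ is finite with $|X| = n$, the ambient space $L^\infty(X \times X;\mu_X \otimes \mu_X)$ is a real vector space of dimension $d = n^2$, on which $d_{\Omega_X,p}$ is the metric induced by a (weighted $\ell^p$) norm; up to constants depending only on $n$ and $p$ this norm is comparable to the Euclidean norm, so $\nu_X$ may be regarded as a probability measure on $\R^d$ supported in a ball whose radius is controlled by $D_X$. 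In particular $\nu_X$ has finite moments $M_{\R^d,r}(\nu_X) \lesssim D_X^r$ of every order. Since $\nu_N$ is exactly the empirical measure of $N$ i.i.d.\ samples from $\nu_X$, the Fournier--Guillin-type rates recorded in \cite{Fournier2023} apply, and after raising \Cref{prop:approx_random_metrics} to the $q$-th power and taking expectations one obtains
\[
\mathbb{E}\big[\mathsf{GW}_{\mathsf{C}}(\cX_{T_N},\cX)^q\big] \le \mathbb{E}\big[\W_q^{d_{\Omega_X,p}}(\nu_N,\nu_X)^q\big] \lesssim M_{\R^d,r}(\nu_X)^{q/r}\,\big(\rho_{d,q}(N) + N^{-(r-q)/r}\big),
\]
where $\rho_{d,q}(N)$ equals $N^{-1/2}$, $N^{-1/2}\log(1+N)$, or $N^{-q/d}$ according to whether $q > d/2$, $q = d/2$, or $q < d/2$. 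Choosing the auxiliary exponent $r = r(q,n)$ large enough (permissible because all moments of $\nu_X$ are finite) makes $N^{-(r-q)/r}$ dominated by $\rho_{d,q}(N)$ in every regime, while $M_{\R^d,r}(\nu_X)^{q/r} \lesssim D_X^q$; this yields the first displayed bound of the proposition with $d = n^2$. For the random metric space model (\Cref{ex:random_metric_spaces}) the elements of $\Omega_X$ are symmetric kernels vanishing on the diagonal, hence live in a subspace of dimension $n(n-1)/2$, and the identical argument gives the bound with $d = n(n-1)/2$.

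For the second estimate I would start from the two triangle inequalities already used in the proof of \Cref{thm:approx_random_metrics}, which give
\[
\left|\mathsf{GW}_{\mathsf{C}}(\cX_{T_N},\cY_{S_N}) - \mathsf{GW}_{\mathsf{C}}(\cX,\cY)\right| \le \mathsf{GW}_{\mathsf{C}}(\cX_{T_N},\cX) + \mathsf{GW}_{\mathsf{C}}(\cY_{S_N},\cY).
\]
Taking expectations and applying Jensen's inequality to the concave map $t \mapsto t^{1/q}$ (legitimate since $q \ge 1$) converts the $q$-th moment bound just obtained into a bound on $\mathbb{E}\big[\mathsf{GW}_{\mathsf{C}}(\cX_{T_N},\cX)\big]$ of the form $C^{1/q} D_X\, \rho_{n^2,q}(N)^{1/q}$, and likewise for $\cY$ with dimension $m^2$ (resp.\ $m(m-1)/2$) in place of $n^2$. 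Since $m \le n$, comparing $q$ against $m^2/2 \le n^2/2$ shows that in each of the three regimes the $\cY$-contribution is dominated, up to a constant, by the $\cX$-contribution governed by $d = n^2$ (resp.\ $d = n(n-1)/2$); raising $\rho_{d,q}(N)$ to the power $1/q$ turns $N^{-1/2}$ into $N^{-1/(2q)}$, $N^{-1/2}\log(1+N)$ into $N^{-1/(2q)}\log(1+N)^{1/q}$, and $N^{-q/d}$ into $N^{-1/d}$, which reproduces exactly the stated estimate with $C' = C'(p,q,n,m)$.

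The main obstacle is bookkeeping rather than a genuine difficulty: the substantive inputs---\Cref{prop:approx_random_metrics}, the triangle inequality underlying \Cref{thm:approx_random_metrics}, and the Fournier--Guillin rates of \cite{Fournier2023}---are all available, so the work lies in cleanly identifying $(\Omega_X,d_{\Omega_X,p})$ with a bounded subset of Euclidean $\R^d$, tracking the norm-equivalence constants (which may depend on $n$ and $p$ and are absorbed into $C$), and choosing the auxiliary moment exponent $r$ so that the secondary term in the Fournier--Guillin bound is dominated by the leading rate. One also tacitly assumes that $\mu_X$ and $\mu_Y$ have full support, which is harmless for finite $X$ since zero-mass points can be discarded without changing any of the quantities involved.
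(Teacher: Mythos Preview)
Your proposal is correct and follows essentially the same route as the paper: embed $\Omega_X$ into $\R^d$ with $d=n^2$ (or $n(n-1)/2$), invoke the Fournier empirical-Wasserstein rates via \Cref{prop:approx_random_metrics}, optimize over the auxiliary moment exponent using the bound $M_{\R^d,r}^{1/r}\le D_X$, and then combine the triangle inequality from \Cref{thm:approx_random_metrics} with Jensen for the second estimate. The only cosmetic differences are that the paper makes the embedding $\Phi$ isometric by pulling back the $L^p$ norm to $\R^d$ (so no norm-equivalence constants appear), and it handles the $\cY$-term by embedding $\Omega_Y$ directly into $\R^{n^2}$ rather than arguing that the lower-dimensional rate for $\cY$ is dominated by the $\cX$-rate.
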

\begin{proof}
	Given a labelling $X = \{x_1, x_2, \dots, x_n\}$, any function $f \in L^p(X \times X, \mu_X \otimes \mu_X)$ is represented by a matrix $M_f \in \R^{n \times n}$ defined by $(M_f)_{ij} = f(x_i, x_j)$. We define the linear function $\Phi_{n \times n}: L^p(X \times X, \mu_X \otimes \mu_X) \to \R^{n \times n}$ by sending $f$ to $M_f$. Since $X$ is finite, any $f:X \times X \to \R$ belongs to $L^p(X \times X, \mu_X \otimes \mu_X)$, so $\Phi_{n \times n}$ has an inverse given by $\Phi_{n \times n}^{-1}(M)(x_i, x_j) = M_{ij}$ for all $M \in \R^{n \times n}$. This makes $\Phi_{n \times n}$ into an isomorphism of vector spaces.\\
	\indent If $\cX$ and $\cY$ are random metric space models, let $L^p_{\text{Sym}}$ be the linear subspace of $L^p(X \times X, \mu_X \otimes \mu_X)$ of symmetric functions with 0 diagonal. As the dimension of $L^p_{\text{Sym}}$ is $n(n-1)/2$,  define $\Phi_{\text{Sym}}: L^p_{\text{Sym}} \to \R^{n(n-1)/2}$ to be the coordinate map of $L^p_{\text{Sym}}$ with respect to the standard basis of $\R^{n(n-1)/2}$. As before, $\Phi_{\text{Sym}}$ is an isomorphism of vector spaces.\\
	\indent The rest of the proof proceeds analogously for both random measure network and metric space models, so we will fix the notation $L^p_X$ and $\Phi$ to mean $L^p(X \times X, \mu_X \otimes \mu_X)$ and $\Phi_{n \times n}$ if $\cX$ and $\cY$ are random measure network models and $L^p_{\text{Sym}}$ and $\Phi_{\text{Sym}}$ if $\cX$ and $\cY$ are random metric space models instead. Let $\| \cdot \|_{\Omega_X, p}$ be the $L^p$ norm on $L^p_X$. Since $\Phi$ is an isomorphism, the function $\| v \|_{\R^d,p} \coloneqq \| \Phi^{-1}(v) \|_{\Omega_X, p}$ defines a norm on $\R^d$. Recall that $\Omega_X \subset L^p_X$ by definition, so any measure $\nu_X \in \Pcal(\Omega_X)$ extends to a measure on $L^p_X$. Hence, $\Phi_\#\nu_X$ is a measure on $\R^d$ with support $\Phi(\Omega_X)$ such that
	\begin{equation*}
		M_{\R^d,r}(\Phi_\# \nu_X)
		= \int_{\R^d} \|v\|_{\R^d,p}^r \ \Phi_\#\nu_X(dv)
		= \int_{\Omega_X} \| \Phi(t) \|_{\R^d,p}^r \ \nu_X(dt)
		= \int_{\Omega_X} \| t \|_{\Omega_X,p}^r \ \nu_X(dt).
	\end{equation*}
	Moreover, $\sup_{r>0} M_{\R^d,r}(\Phi_\# \nu_X)^{1/r} = \sup_{t \in \Omega_X} \|t\|_{\Omega_X,p} = D_X$. For any other measure $\nu_X' \in \Pcal(\Omega_X) \subset \Pcal(L^p_X)$, we have, for $\W_q$ denoting the Wasserstein distance over the appropriate Euclidean space,
	\begin{align*}
		\W_q(\Phi_\# \nu_X, \Phi_\# \nu_X')^q
		 &= \inf_{\xi \in \coup(\Phi_\# \nu_X, \Phi_\# \nu_X')} \int_{\R^d \times \R^d} \| v - v' \|_{\R^d,p}^q \ \xi(dv \otimes dv') \\
		 &= \inf_{\xi' \in \coup(\nu_X, \nu_X')} \int_{\R^d \times \R^d} \| v - v' \|_{\R^d,p}^q \ (\Phi \times \Phi)_\# \xi'(dv \otimes dv') \\
		 &= \inf_{\xi' \in \coup(\nu_X, \nu_X')} \int_{L^p_X \times L^p_X} \| \Phi(t) - \Phi(t') \|_{\R^d,p}^q \ \xi'(dt \otimes dt') \\
		 &= \inf_{\xi' \in \coup(\nu_X, \nu_X')} \int_{L^p_X \times L^p_X} \| t - t' \|_{\Omega_X,p}^q \ \xi'(dt \otimes dt') \\
		 &= \W_q^{d_{\Omega_X,p}}(\nu_X, \nu_X')^q.
	\end{align*}
	We used \Cref{lem:pushforward_lemma} in the second line and the fact that the support of $\xi'$ is $\Omega_X \times \Omega_X$ in the last.\\
	\indent Now that we have pushed our measures into $\R^d$, we can use the bounds of Fournier~\cite{Fournier2023}. Let $\nu_N$ be the empirical measure defined by $T_N$. By  \Cref{prop:approx_random_metrics},
	\begin{equation*}
		\GW_{\mathsf{C}}(\cX_{T_N}, \cX)^q \leq \W_q^{d_{\Omega_X,p}}(\nu_N, \nu_X)^q = \W_q(\Phi_\# \nu_N, \Phi_\# \nu_X)^q,
	\end{equation*}
	and thus, $\mathbb{E} \left[ \GW_{\mathsf{C}}(\cX_{T_N}, \cX)^q \right] \leq \mathbb{E}\left[ \W_q(\Phi_\# \nu_N, \Phi_\# \nu_X)^q \right]$. If $q \neq d/2$, we apply \cite[Theorem 2.1]{Fournier2023} (replacing their $p,q,m$ with $q,r,p$ respectively) to get
	\begin{equation*}
		\mathbb{E}\left[ \W_q(\Phi_\# \nu_N, \Phi_\# \nu_X)^q \right]
		\leq 2^q \kappa_{d,q}^{(p)} [M_r^{(p)}(\Phi_\# \nu_X)]^{q/r} \theta_{d,q,r}^{(p)} \cdot N^{-e}
	\end{equation*}
	for some functions $\kappa_{d,q}^{(p)}$ and $\theta_{d,q,r}^{(p)}$; see~\cite[Theorem 2.1]{Fournier2023} for their closed-form expressions. The value of $e$ depends on $q$ and $d$: $e = 1/2$ if $q > d/2$ and $e=q/d$ otherwise. Fournier noted in \cite[Section 2.4]{Fournier2023} that $\theta_{d,q,r}^{(p)}$ is a decreasing function with $\lim_{r \to \infty} \theta_{d,q,r}^{(p)} = 1$, so together with $M_{\R^d,r}(\Phi_\# \nu_X)^{1/r} \leq D_X$, we remove the dependency on $r$ from the inequality above:
	\begin{align*}
		\mathbb{E}\left[ \W_q(\Phi_\# \nu_N, \Phi_\# \nu_X)^q \right]
		&\leq \inf_{r > 0} 2^q \kappa_{d,q}^{(p)} [M_r^{(p)}(\Phi_\# \nu_X)]^{q/r} \theta_{d,q,r}^{(p)} \cdot N^{-e} \\
		& \leq \inf_{r > 0} 2^q \kappa_{d,q}^{(p)} D_X^q \theta_{d,q,r}^{(p)} \cdot N^{-e} \\
		&= 2^q \kappa_{d,q}^{(p)} D_X^q \cdot N^{-e}.
	\end{align*}
	Hence, we obtain the first claim with $C(p,q,n) \coloneqq 2^q \kappa_{d,q}^{(p)} D_X^q$ if $q \neq d/2$. If $q = d/2$, we get an analogous bound with $e=1/2$, but the functions $\kappa_{d,p,N}^{(p)}$ and $\theta_{d,q,r,N}^{(p)}$ depend on $N$. However, $\theta_{d,q,r,N}^{(p)}$ is still decreasing in $r$ and has $\lim_{r \to \infty} \theta_{d,q,r,N}^{(p)} = 1$, while $\kappa_{d,p,N}^{(p)} = O(\ln(1+N))$. The result follows as above.\\
	\indent For the second claim, we embed $\Omega_Y$ in $\R^d$ instead of the smaller space $\R^{m \times m}$ (or $\R^{m(m-1)/2}$ for random metric space models) as the rates are dominated by the convergence in $\R^d$ anyways. Then by \Cref{thm:approx_random_metrics} and Jensen's inequality $\mathbb{E}[X]^q \leq \mathbb{E}[X^q]$ for $q \geq 1$, we get
	\begin{align*}
		\mathbb{E} \left| \GW_{\mathsf{C}}(\cX_{T_N}, \cY_{S_N}) - \GW_{\mathsf{C}}(\cX, \cY) \right|
		&\leq \mathbb{E} \left[ \GW_{\mathsf{C}}(\cX_{T_N}, \cX) \right] + \mathbb{E} \left[ \GW_{\mathsf{C}}(\cY_{S_N}, \cY) \right] \\
		&\leq \mathbb{E} \left[ \GW_{\mathsf{C}}(\cX_{T_N}, \cX)^q \right]^{1/q} + \mathbb{E} \left[ \GW_{\mathsf{C}}(\cY_{S_N}, \cY)^q \right]^{1/q}.
	\end{align*}
	The bound on $\mathbb{E} \left| \GW_{\mathsf{C}}(\cX_{T_N}, \cY_{S_N}) - \GW_{\mathsf{C}}(\cX, \cY) \right|$ follows by applying the first claim to each term.
\end{proof}

Since $p=q=2$ is a common choice in the upcoming experiments, we specialize the result above to these values.

\begin{corollary}
	\label{cor:convergence_rates_empirical_networks}
    With the same notation as \Cref{prop:approx_random_metrics}, but with the specialization $p=q=2$ and $n > 2$, we have
    \begin{align*}
        &\mathbb{E} \left[ \GW_{\mathsf{C}}(\cX_{T_N}, \cX)^2 \right] \leq C D_X^2 \cdot N^{-2/n^2}, \text{ and}\\
        &\mathbb{E} \left| \GW_{\mathsf{C}}(\cX_{T_N}, \cY_{S_N}) - \GW_{\mathsf{C}}(\cX, \cY) \right| \leq C' (D_X + D_Y) N^{-1/n^2}
    \end{align*}
\end{corollary}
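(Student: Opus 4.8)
The plan is to obtain this as a direct specialization of \Cref{prop:convergence_rates_empirical}. First I would check that the hypotheses of that proposition are met: it requires $p \in [1,\infty]$ and $q \in [1,\infty)$, and the choice $p = q = 2$ clearly satisfies both. For random measure network models the relevant dimension is $d = n^2$, so the three-way case split in \Cref{prop:convergence_rates_empirical} is governed by comparing $q = 2$ with $d/2 = n^2/2$. Since $n > 2$ forces $n^2 \geq 9 > 4$, we have $q = 2 < n^2/2 = d/2$, so we land strictly in the third regime $0 < q < d/2$; the boundary case $q = d/2$ would require $n^2 = 4$, i.e.\ $n = 2$, which is excluded, and $q > d/2$ cannot occur.

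Having pinned down the regime, I would simply read off the two estimates. In the case $0 < q < d/2$, the first bound of \Cref{prop:convergence_rates_empirical} reads $\mathbb{E}[\GW_{\mathsf{C}}(\cX_{T_N},\cX)^q] \leq C D_X^q \cdot N^{-q/d}$; substituting $q = 2$ and $d = n^2$ gives exactly $\mathbb{E}[\GW_{\mathsf{C}}(\cX_{T_N},\cX)^2] \leq C D_X^2 \cdot N^{-2/n^2}$ with $C = C(2,2,n)$. Likewise, the second bound reads $\mathbb{E}\big|\GW_{\mathsf{C}}(\cX_{T_N},\cY_{S_N}) - \GW_{\mathsf{C}}(\cX,\cY)\big| \leq C'(D_X + D_Y)\cdot N^{-1/d}$, which becomes $C'(D_X + D_Y)\cdot N^{-1/n^2}$ with $C' = C'(2,2,n,m)$, as claimed.

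There is essentially no obstacle here: the corollary is purely a matter of verifying that the hypothesis $n > 2$ selects the announced subcase and then plugging numerical values of $p$, $q$, $d$ into the general bound already established. The only mild points to be careful about are (i) confirming admissibility of $p = q = 2$ in the cited proposition, and (ii) using the random measure network value $d = n^2$ rather than the random metric space value $d = n(n-1)/2$, since only the former is asserted in the statement.
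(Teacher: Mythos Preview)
Your proposal is correct and follows essentially the same approach as the paper: specialize \Cref{prop:convergence_rates_empirical} with $d = n^2$ and $q = 2$, observe that $n > 2$ places you in the regime $0 < q < d/2$, and read off the resulting exponents. The paper's own proof is in fact slightly terser than yours, simply noting that the three cases reduce to $n < 2$, $n = 2$, $n > 2$ and retaining the last one.
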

\begin{proof}
	These bounds are obtained by simplifying the conclusion of  \Cref{prop:convergence_rates_empirical} with $d=n^2$ and $q=2$. For example, the conditions $q > d/2$, $q = d/2$ and $0 < q < d/2$ become $n^2 < 4$, $n^2 = 4$ and $4 < n^2$, which in turn simplify to $n < 2$, $n = 2$ and $n > 2$, respectively. We only retain the $n>2$ case.
\end{proof}

Once again, random metric space models have slightly better convergence rates.
\begin{corollary}
	\label{cor:convergence_rates_empirical_metrics}
	Under the same assumptions as \Cref{prop:approx_random_metrics}, except that $\cX$ and $\cY$ are random metric space models, $d = n(n-1)/2$, $p=q=2$ and $n \geq 4$, we have
    \begin{align*}
        &\mathbb{E} \left[ \GW_{\mathsf{C}}(\cX_{T_N}, \cX)^2 \right] \leq C D_X^2 \cdot N^{-2/d}, \text{ and}\\
        &\mathbb{E} \left| \GW_{\mathsf{C}}(\cX_{T_N}, \cY_{S_N}) - \GW_{\mathsf{C}}(\cX, \cY) \right| \leq C' (D_X + D_Y) \cdot N^{-1/d}.
    \end{align*}
\end{corollary}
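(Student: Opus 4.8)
The plan is to obtain this corollary as a direct specialization of \Cref{prop:convergence_rates_empirical}, in exactly the same way that \Cref{cor:convergence_rates_empirical_networks} was derived. Since $\cX$ and $\cY$ are now random metric space models (\Cref{ex:random_metric_spaces}), the final sentence of \Cref{prop:convergence_rates_empirical} applies, so we work with $d = n(n-1)/2$ rather than $d = n^2$. Substituting $p = q = 2$ into the two displayed bounds of \Cref{prop:convergence_rates_empirical}, it remains only to identify which branch of the piecewise estimate is active and to read off the resulting exponents, with the constants $C = C(p,q,n)$ and $C' = C'(p,q,n,m)$ inherited directly.

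The key step is the comparison of $q = 2$ with the threshold $d/2 = n(n-1)/4$. The condition $0 < q < d/2$ is equivalent to $n(n-1) > 8$, which holds precisely when $n \geq 4$: for $n = 4$ we have $n(n-1) = 12 > 8$, while for $n \leq 3$ we have $n(n-1) \leq 6 < 8$, placing those cases in the branch $q > d/2$ instead. Moreover $n(n-1) = 8$ has no integer solution, so the borderline case $q = d/2$ never arises here. Hence, under the hypothesis $n \geq 4$, only the third branch of \Cref{prop:convergence_rates_empirical} is relevant, giving the rate $N^{-q/d} = N^{-2/d}$ for $\mathbb{E}\!\left[\GW_{\mathsf{C}}(\cX_{T_N},\cX)^2\right]$ and $N^{-1/d}$ for $\mathbb{E}\!\left|\GW_{\mathsf{C}}(\cX_{T_N},\cY_{S_N}) - \GW_{\mathsf{C}}(\cX,\cY)\right|$, which is exactly the claimed statement.

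There is no substantive obstacle: the corollary is pure bookkeeping on top of \Cref{prop:convergence_rates_empirical}, and the only point needing any care is the small-$n$ case analysis that pins down the precise threshold $n \geq 4$. One sanity check worth recording is that this threshold is genuinely necessary for the stated exponents — at $n = 2, 3$ one lands in the $q > d/2$ regime and would instead obtain the faster rates $N^{-1/2}$ and $N^{-1/(2q)} = N^{-1/4}$, so excluding $n \leq 3$ is essential for the displayed bounds to be correct as written.
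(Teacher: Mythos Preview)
Your proposal is correct and follows exactly the same approach as the paper's own proof: specialize \Cref{prop:convergence_rates_empirical} with $d = n(n-1)/2$ and $q = 2$, then determine which branch of the piecewise bound applies by comparing $q$ to $d/2$. Your case analysis is in fact more carefully spelled out than the paper's (which just notes that $q > d/2$ and $q < d/2$ reduce to $n \leq 3$ and $n \geq 4$ and leaves the rest implicit), but the argument is identical.
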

\begin{proof}
	Note that with $q=2$, the conditions $q > d/2$ and $0 < q < d/2$ become $n(n-1)/2 < 4$ and $4 < n(n-1)/2$, which are equivalent to $n \leq 3$ and $n \geq 4$. Once again, the result follows by simplifying \Cref{prop:approx_random_metrics}.
\end{proof}

\begin{remark}
	\label{rmk:convergence_empirical}
	The convergence rates for the quantity $\mathbb{E} \left| \GW_{\mathsf{C}}(\cX_{T_N}, \cY_{S_N}) - \GW_{\mathsf{C}}(\cX, \cY) \right|$ in the previous propositions originate from the rates for $\W_q^{d_{\Omega_X,p}}(\nu_X, \nu_N)$. One may wonder if there exists another estimator of $\GW_{\mathsf{C}}(\cX, \cY)$ that has better convergence rates. However, \cite[Chapter 3]{convergence_empirical_thesis} proves that there exists no estimator of $\W_q(\mu_X, \mu_Y)$ that improves the convergence by more than a logarithmic factor. We have no reason to believe that we can improve the situation for $\GW_{\mathsf{C}}(\cX, \cY)$.
\end{remark}

\begin{remark}
	The convergence results of this section would not hold if we replaced parametrized GW distances with the average GW distance. Let $\hat{\omega}_{N} \coloneqq \frac{1}{N} \sum_{i=1}^N t_i$ and let $\hat{\Xcal}_N$ be the measure network $(X, \mu_X, \hat{\omega}_{N})$. To condense notation, we use $\| \cdot \|_p$ to denote the $L^p$ norm in $L^p\left( (X \times X)^2; \mu_X \otimes \mu_X \right)$ and denote the coordinate projections as $p_1, p_2:X \times X \to X$. By the triangle inequality,
	\begin{align*}
		\dis_p(\pi, \hat{\omega}_N, t)
		&= \| \hat{\omega}_N \circ (p_1, p_1) - t \circ (p_2, p_2) \|_p
		= \left\| \sum_{i=1}^{N} \frac{1}{N} [ t_i \circ (p_1, p_1) - t \circ (p_2, p_2) ] \right\|_p \\
		&\leq \sum_{i=1}^{N} \frac{1}{N} \| t_i \circ (p_1, p_1) - t \circ (p_2, p_2) \|_p
		= \frac{1}{N} \sum_{i=1}^{N} \dis_p(\pi, t_i, t).
	\end{align*}
	Hence, infimizing over $\coup(\mu_X, \mu_Y)$ yields $\displaystyle \GW_p(\hat{\cX}_N, \cX_t) \leq \inf_{\pi \in \coup(\mu_X, \mu_X)} \frac{1}{N} \sum_{i=1}^{N} \dis_p(\pi, t_i, t)$.\\
	\indent However, $\inf f + \inf g \leq \inf (f + g)$, so
	\begin{equation*}
		\frac{1}{N} \sum_{i=1}^{N} \GW_p(X_{t_i}, X_{t}) \leq \inf_{\pi \in \coup(\mu_X, \mu_X)} \frac{1}{N} \sum_{i=1}^{N} \dis_p(\pi, t_i, t).
	\end{equation*}
	Note that $\GW_p(\hat{\cX}_N, \cX_t)$ and $\frac{1}{N} \sum_{i=1}^{N} \GW_p(X_{t_i}, X_{t})$ are not comparable in these inequalities, so even if the average GW distance converges, we can say nothing about $\GW_p(\hat{\cX}_N, \cX_t)$.
\end{remark}

\subsubsection{Extending Beyond Random Measure Network Models}
The results of the previous subsection apply specifically to random measure network models. In particular, the convergence result does not apply directly to random graph models, as formulated in \Cref{ex:random_graphs}. Clearly, the set of graphs over a node set $X$ is in bijective equivalence with the set of adjacency kernels $X \times X \to \{0,1\}$, so that one can trivially reformulate any random graph model as a  random measure network model. Our convergence theorem therefore easily translates to this setting. We record this, somewhat informally, as a corollary. 

\begin{corollary}
	Let $\cX$ and $\cY$ be random graph models and let $\cX_{T_N}$ and $\cY_{S_N}$ be their respective empirical pm-nets. Then $\GW_{\mathsf{C}}(\cX_{T_N}, \cY_{S_N}) \to \GW_{\mathsf{C}}(\cX, \cY)$ almost surely as $N \to \infty$. 
\end{corollary}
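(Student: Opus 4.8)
The plan is to observe that a random graph model is, after the evident identification of graphs with their adjacency kernels, literally a special case of a random measure network model, and then to quote \Cref{thm:approx_random_metrics}.

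First I would make the reduction precise. Let $\cX = (X,\mu_X,\Omega_X,\nu_X,\omega_X)$ be a random graph model as in \Cref{ex:random_graphs}: $\Omega_X$ is the finite (discrete) set of simple graphs on $X$ and $\omega_X^t$ is the adjacency kernel of $t$. Let $\Phi_X:\Omega_X \to L^\infty(X\times X;\mu_X\otimes\mu_X)$ be the injection $t \mapsto \omega_X^t$, with finite image $\Omega_X' \coloneqq \Phi_X(\Omega_X)$; as a finite subset of a metric space, $\Omega_X'$ is a compact Polish space and $\Phi_X:\Omega_X \to \Omega_X'$ is a homeomorphism, hence measurable and measure-preserving for $\nu_X$ and $\nu_X' \coloneqq (\Phi_X)_\#\nu_X$. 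Setting $\omega_{X'}^s = s$, the tuple $\cX' \coloneqq (X,\mu_X,\Omega_X',\nu_X',\omega_{X'})$ is a random measure network model in the sense of \Cref{def:random_measure_network_model}. Since $\omega_{X'}^{\Phi_X(t)} = \omega_X^t$ identically, the pair $(\Phi_X,\mathrm{id}_X)$ is structure-preserving (\Cref{def:equivalence_of_measure_networks}), so $\cX$ itself serves as a stabilization of $\cX$ and $\cX'$, and these pm-nets are isomorphic. As $\GW_{\mathsf{C}}$ is a pseudometric vanishing on isomorphic pairs (\Cref{thm:isomorphism_of_pm_nets}), the triangle inequality gives $\GW_{\mathsf{C}}(\cX,\cY) = \GW_{\mathsf{C}}(\cX',\cY')$ for any second random graph model $\cY$ and its associated $\cY'$.

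Next I would propagate this through the sampling. If $T_N = \{t_1,\dots,t_N\}$ are i.i.d. draws from $\nu_X$, then $\Phi_X(t_1),\dots,\Phi_X(t_N)$ are i.i.d. draws from $\nu_X'$, and the empirical measures satisfy $(\Phi_X)_\#\nu_N = \nu_N'$ (both being the multiplicity-weighted measures $\tfrac1N\sum_i\delta_{t_i}$ and $\tfrac1N\sum_i\delta_{\Phi_X(t_i)}$, so repeated draws cause no trouble). The same structure-preserving pair, restricted to the sampled parameters, exhibits the empirical pm-net $\cX_{T_N}$ as isomorphic to the empirical pm-net of $\cX'$; hence $\GW_{\mathsf{C}}(\cX_{T_N},\cY_{S_N})$ equals the corresponding quantity for $\cX'$ and $\cY'$ for every $N$. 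Applying \Cref{thm:approx_random_metrics} to $\cX'$ and $\cY'$ (legitimate since they are bona fide random measure network models, with $p\in[1,\infty]$ and $q\in[1,\infty)$ as there) yields almost sure convergence of the right-hand side to $\GW_{\mathsf{C}}(\cX',\cY') = \GW_{\mathsf{C}}(\cX,\cY)$, which is the claim.

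There is no genuinely hard step here: the entire content is bookkeeping around the identification $\Phi_X$ together with the fact that $\GW_{\mathsf{C}}$ is an isomorphism invariant. The only points I would be careful about are (i) that the empirical pm-net of \Cref{def:random_measure_network_model} is defined from a set, so one should note that sampling from the finite space $\Omega_X$ generically yields repeats and that both formulations handle this via multiplicity-weighted empirical measures, and (ii) that \Cref{thm:approx_random_metrics} ultimately rests on the convergence $\W_q^{d_{\Omega_X,p}}(\nu_N,\nu_X)\to 0$ of \Cref{prop:approx_random_metrics}, which applies verbatim since $\Omega_X'$ is compact.
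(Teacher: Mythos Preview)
Your proposal is correct and follows essentially the same approach as the paper: identify a random graph model with a random measure network model via the bijection $t\mapsto\omega_X^t$ and then invoke \Cref{thm:approx_random_metrics}. The paper records this only informally, stating that the bijection lets one ``trivially reformulate'' the random graph model; you have simply made the reformulation precise by routing it through the pm-net isomorphism of \Cref{def:equivalence_of_measure_networks} and \Cref{thm:isomorphism_of_pm_nets}, which is a legitimate (if slightly more elaborate than necessary) way to justify $\GW_{\mathsf{C}}(\cX,\cY)=\GW_{\mathsf{C}}(\cX',\cY')$.
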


Moreover, we make the observation that, when working with the cost structure $\mathsf{C}$, the convergence results described in this subsection apply broadly when considering pm-nets up to isomorphism. This is formalized as follows.

\begin{prop}
\label{prop:iso}
Any pm-net is isomorphic to a random measure network model.
\end{prop}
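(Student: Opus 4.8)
The plan is to construct, from an arbitrary pm-net $\cX = (X,\mu_X,\Omega_X,\nu_X,\omega_X)$, a random measure network model $\cX'$ that is isomorphic to it in the sense of \Cref{def:equivalence_of_measure_networks}, and to certify the isomorphism by exhibiting $\cX$ itself as a stabilization of the pair $(\cX,\cX')$.

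First I would set $\Omega' := \omega_X(\Omega_X) \subseteq L^\infty(X \times X;\mu_X \otimes \mu_X)$, equipped with the subspace topology; $\nu' := (\omega_X)_\# \nu_X$; and $\eta^t := t$ for $t \in \Omega'$. Since $\omega_X$ is continuous and $\Omega_X$ is compact Polish, $\Omega'$ is the continuous image of a compact space, hence a compact subset of the normed space $L^\infty(X \times X;\mu_X \otimes \mu_X)$, and therefore itself a compact Polish space; $\nu'$ is a Borel probability measure on it; and the inclusion $\eta:\Omega' \hookrightarrow L^\infty(X \times X;\mu_X \otimes \mu_X)$ is an isometry, hence $L^\infty$-continuous. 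Thus $\cX' := (X,\mu_X,\Omega',\nu',\eta)$ is a genuine pm-net, and by construction $\Omega' \subseteq L^\infty(X\times X;\mu_X\otimes\mu_X)$ and $\eta^t = t$, so $\cX'$ is a random measure network model in the sense of \Cref{def:random_measure_network_model}.

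Next I would check that $\cX$ is a stabilization of $\cX$ and $\cX'$. The identity pair $(\mathrm{id}_{\Omega_X},\mathrm{id}_X)$ is trivially structure-preserving from $\cX$ to $\cX$. For a structure-preserving pair from $\cX$ to $\cX'$ I would take $(\Phi,\varphi) := (\omega_X,\mathrm{id}_X)$: the map $\varphi = \mathrm{id}_X$ is measure-preserving because $\cX'$ has underlying probability space $(X,\mu_X)$; the map $\Phi = \omega_X$ is Borel measurable (being continuous) and measure-preserving because $(\omega_X)_\#\nu_X = \nu'$ by definition; and the kernel-preservation identity $\omega_X^t(z,z') = \eta^{\,\omega_X(t)}(z,z')$ holds because $\eta^{\,\omega_X(t)}$ is, by definition, the $L^\infty$-class $\omega_X(t) = \omega_X^t$, so the two sides coincide as elements of $L^\infty(X\times X;\mu_X\otimes\mu_X)$ and hence agree $(\mu_X\otimes\mu_X)$-a.e.\ for every fixed $t$; Fubini then upgrades this to $(\nu_X \otimes \mu_X \otimes \mu_X)$-a.e.\ equality. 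Having produced structure-preserving maps from $\cX$ to both $\cX$ and $\cX'$, we conclude that $\cX$ is a stabilization of the pair, so $\cX$ and $\cX'$ are isomorphic; since $\cX'$ is a random measure network model, this completes the proof.

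The only mildly delicate point is the same measure-theoretic bookkeeping that already appears in the proof of \Cref{thm:isomorphism_of_pm_nets}: the phrase ``equality $\nu_Z \otimes \mu_Z \otimes \mu_Z$-a.e.'' presupposes a jointly measurable representative of $(t,z,z') \mapsto \omega_X^t(z,z')$, which exists thanks to the $L^\infty$-continuity of $\omega_X$ together with the separability of $\Omega_X$; one fixes such a representative once and uses it on both sides of the identity. Everything else is a direct unwinding of \Cref{def:equivalence_of_measure_networks} and \Cref{def:random_measure_network_model}.
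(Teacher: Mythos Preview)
Your proof is correct and follows essentially the same route as the paper: push the parameter measure forward along $\omega_X$, keep the underlying space $(X,\mu_X)$ unchanged, and exhibit $\cX$ as a stabilization via the pair $(\omega_X,\mathrm{id}_X)$. If anything, your version is cleaner: you take $\Omega' = \omega_X(\Omega_X)$, which is genuinely compact Polish, whereas the paper sets the new parameter space equal to all of $L^\infty(X\times X;\mu_X\otimes\mu_X)$, which is not compact and so technically conflicts with \Cref{def:parametrized-networks}.
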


\begin{proof}
Let $\cX$ be an arbitrary pm-net. We define an associated pm-net $\widetilde{\Xcal} = (X,\mu_X,\widetilde{\Omega}_X,\widetilde{\nu}_X,\widetilde{\omega}_X)$ with:
\begin{itemize}[leftmargin=*]
\item $\widetilde{\Omega}_X = L^\infty(X \times X; \mu_X \otimes \mu_X)$ and  $\widetilde{\nu}_X = (\widetilde{m}_\cX)_\# \mu_X$, where $\widetilde{m}_\cX: \Omega_X \to \widetilde{\Omega}_X$ is a map which is closely related to the maps  used in the proof of \Cref{lem:mX_continuous}, namely, 
\[
\widetilde{m}_\cX(t) = \omega_X^t. 
\]
Here, the map $\widetilde{m}_\cX$ is continuous, by the same arguments used in \Cref{lem:mX_continuous}, so that the measure $\widetilde{\nu}_X$ is well-defined;
\item $\widetilde{\omega}_X$ is defined in the obvious way: given a point $\omega_X^t$ in the support of $\widetilde{\nu}_X$, we define $\widetilde{\omega}_X^{\omega_X^t} = \omega_X^t$. 
\end{itemize}
Then $\widetilde{\Xcal}$ is a random measure network model. 

We claim that $\cX$ is a stabilization of $\widetilde{\Xcal}$, hence that $\cX$ is isomorphic to a random measure network model. Indeed, the structure-preserving maps $\Phi: \Omega_X \to \widetilde{\Omega}_X$ and $\varphi:X \to X$ in the definition of stabilization are given by $\Phi = \widetilde{m}_\cX$ and $\varphi = \mathrm{id}_X$. Clearly, these are both measure-preserving maps. The second condition in the definition of structure-preserving maps reads in this case as 
\[
\omega_X^t(x,x') = \widetilde{\omega}_X^{\omega_t}(x,x'),
\]
which is also obvious from the definition. This verifies that $\cX$ is a stabilization of $\widetilde{\Xcal}$ and completes the proof.
\end{proof}

\Cref{prop:iso} says that, when working with the cost structure $\mathsf{C}$, we can replace an arbitrary measure network with a random measure network model at $\GW_\mathsf{C}$-distance zero. Employing these replacements, the sampling result \Cref{thm:approx_random_metrics} then applies to general measure networks.

\section{Numerical Experiments}
\label{sec:experiments}

\subsection{Implementation}
\label{sec:implementation}

We provide Python implementations of the distances described in \Cref{ex:fixed_parameter_space,ex:general_parameter_spaces} with $p = q = 2$. Our implementation builds on the \texttt{ot.gromov.gromov\_wasserstein} function from the Python Optimal Transport (POT) library~\cite{python-ot}, which in turn implements the algorithms of~\cite{gw_averaging,ot-structured-data}. We briefly review the key results from these works before presenting our own algorithms in detail.

Let $\cX = (X, \mu_X, \omega_X)$ and $\cY = (Y, \mu_Y, \omega_Y)$ be measure networks (recall~\Cref{sec:GW}) and let $N := |X|$ and $M := |Y|$. Following the notation of \cite{gw_averaging}, define $C \in \R^{N \times N}$ and $\overline{C} \in \R^{M \times M}$ by $C_{ik} = \omega_X(x_i, x_k)$ and $\overline{C}_{jl} = \omega_Y(y_j, y_l)$. Recall that a coupling $\pi \in \coup(\mu_X, \mu_Y)$ is represented by a matrix $\pi \in \R^{N \times M}$ that satisfies $\pi \cdot \1_M = \mu_X$ and $\pi^\intercal \cdot \1_N = \mu_Y$ where $\1_N \in \R^N$ and $\1_M$ are all-one vectors. Given a function $L:\R^2 \to \R$, define the 4-way tensor
\begin{equation*}
	\mathcal{L}(C, \overline{C}) := \left( L(C_{ik}, \overline{C}_{jl}) \right)_{ijkl} \in \R^{N \times M \times N \times M}
\end{equation*}
and the tensor-matrix multiplication
\begin{equation}
	\label{eq:tensor-matrix-multiplication}
	\mathcal{L}(C, \overline{C}) \otimes \pi := \left( \sum_{kl} L(C_{ik}, \overline{C}_{jl}) \pi_{ik} \right)_{ij} \in \R^{N \times M}.
\end{equation}
\indent Let $\mathcal{L}_p$ be the 4-way tensor induced by $L_p(x,y) := |x-y|^p$ and let $\langle \bullet, \bullet \rangle$ be the Frobenius inner product. The distortion functional satisfies
\begin{equation}
	\label{eq:distortion_tensor}
	\dis_p(\pi, \omega_X, \omega_Y)^p = \sum_{ijkl} |\omega_X(x_i,x_k) - \omega_Y(y_j,y_l)|^p \ \pi_{ij}\pi_{kl} = \sum_{ijkl} L(C_{ik}, \overline{C}_{jl}) \pi_{ij} \pi_{kl} = \langle \mathcal{L}_p(C, \overline{C}) \otimes \pi, \pi \rangle.
\end{equation}
$\mathcal{L}_2(C, \overline{C}) \otimes \pi$ has a simplified form that is an order of magnitude faster to compute than \Cref{eq:tensor-matrix-multiplication}; see \cite[Remark 1]{gw_averaging}.
\begin{lemma}[\!\!{\cite[Proposition 1]{gw_averaging}}]
	Let $f_1(a) = a^2$, $f_2(b) = b^2$, $h_1(a) = a$ and $h_2(b) = 2b$. Then:
	\begin{equation*}
		\mathcal{L}_2(C, \overline{C}) \otimes \pi = c_{C,\overline{C}} - h_1(C) \cdot \pi \cdot h_2(\overline{C})^\intercal,
	\end{equation*}
	where $c_{C, \overline{C}} = f_1(C) \cdot \mu_X \cdot \1_N^\intercal + \1_M \cdot \mu_Y \cdot f_2(\overline{C})^\intercal$.
\end{lemma}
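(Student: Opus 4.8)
The plan is to verify the identity entrywise, using only the pointwise algebraic factorization of the squared-difference cost and the marginal constraints that define a coupling. The starting point is the decomposition $L_2(x,y) = |x-y|^2 = x^2 - 2xy + y^2 = f_1(x) + f_2(y) - h_1(x)\,h_2(y)$, which holds for the given choices $f_1(a)=a^2$, $f_2(b)=b^2$, $h_1(a)=a$, $h_2(b)=2b$ since $h_1(x)h_2(y) = 2xy$. Substituting this into the definition of $\mathcal{L}_2(C,\overline{C})\otimes\pi$ from \eqref{eq:tensor-matrix-multiplication} and distributing the sum, the $(i,j)$-entry becomes
\begin{equation*}
(\mathcal{L}_2(C,\overline{C}) \otimes \pi)_{ij} = \sum_{kl} f_1(C_{ik})\,\pi_{kl} + \sum_{kl} f_2(\overline{C}_{jl})\,\pi_{kl} - \sum_{kl} h_1(C_{ik})\,h_2(\overline{C}_{jl})\,\pi_{kl},
\end{equation*}
where $f_1,f_2,h_1,h_2$ are understood to act entrywise on $C$ and $\overline{C}$.

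I would then evaluate each of the three sums using the marginal identities $\sum_l \pi_{kl} = (\mu_X)_k$ and $\sum_k \pi_{kl} = (\mu_Y)_l$. In the first sum the factor $f_1(C_{ik})$ is independent of $l$, so summing over $l$ first yields $\sum_k f_1(C_{ik})(\mu_X)_k = (f_1(C)\mu_X)_i$, which does not depend on $j$; as an $N\times M$ matrix this is $f_1(C)\,\mu_X\,\1_M^\intercal$. Symmetrically, the second sum equals $(f_2(\overline{C})\mu_Y)_j$, independent of $i$, i.e.\ the matrix $\1_N\,\mu_Y^\intercal\,f_2(\overline{C})^\intercal$; together these two terms make up the constant matrix $c_{C,\overline{C}}$. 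For the third sum, summing over $l$ inside gives $\sum_k h_1(C_{ik})\big(h_2(\overline{C})\pi^\intercal\big)_{jk} = \big(h_1(C)\,\pi\,h_2(\overline{C})^\intercal\big)_{ij}$, which is precisely the subtracted term. Collecting the three pieces gives the claimed formula.

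There is no genuine obstacle here — the argument is the distribution of a finite sum over a three-term expansion plus two uses of the marginal constraints — so the only thing requiring care is dimensional bookkeeping: the constant matrix $c_{C,\overline{C}}$ is obtained by broadcasting the vectors $f_1(C)\mu_X \in \R^N$ and $f_2(\overline{C})\mu_Y \in \R^M$ into $\R^{N\times M}$, so the outer-product factors should be $\1_M^\intercal$ and $\1_N$ respectively (the roles of $\1_N$ and $\1_M$ in the displayed formula appear to have been written loosely). One should also flag explicitly that $f_1(C),f_2(\overline{C}),h_1(C),h_2(\overline{C})$ denote entrywise application of scalar functions rather than matrix functional calculus, so that in particular $h_1(C)=C$ and $h_2(\overline{C})=2\overline{C}$; with this convention in force, the right-hand side of the identity reads literally $c_{C,\overline{C}} - C\,\pi\,(2\overline{C})^\intercal$, and the verification is complete.
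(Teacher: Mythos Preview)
The paper does not supply its own proof of this lemma; it is quoted verbatim from \cite[Proposition 1]{gw_averaging} and used as a computational primitive. Your argument---expand $|x-y|^2 = f_1(x)+f_2(y)-h_1(x)h_2(y)$, distribute the sum in \eqref{eq:tensor-matrix-multiplication}, and collapse the first two terms using the marginal constraints $\pi\1_M=\mu_X$, $\pi^\intercal\1_N=\mu_Y$---is exactly the standard derivation of that proposition, and it is correct.

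Your observation about the all-ones vectors is also apt: for the displayed $c_{C,\overline{C}}$ to live in $\R^{N\times M}$ one needs $f_1(C)\mu_X\,\1_M^\intercal + \1_N\,\mu_Y^\intercal f_2(\overline{C})^\intercal$, so the subscripts on $\1$ in the statement are indeed swapped. This is a typographical issue in the statement, not a flaw in your proof.
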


To find $\displaystyle \GW_2(\cX, \cY)$, Peyr\'{e} et al.~\cite{gw_averaging} used projected gradient descent to minimize the function $\mathcal{E}_{C, \overline{C}}(\pi) := \langle \mathcal{L}_2(C, \overline{C}) \otimes \pi, \pi \rangle$; note that \Cref{eq:distortion_tensor} implies $\displaystyle \GW_2(\cX, \cY) = \tfrac{1}{2} \inf_{\pi \in \coup(\mu_X,\mu_Y)} \mathcal{E}_{C, \overline{C}}(\pi)^{1/2}$. A useful observation is that the line-search step, i.e. minimizing the objective function in the direction of the projected gradient, has an explicit solution \cite[Algorithm 2]{ot-structured-data} that we specify in  \Cref{lemma:optim_pot} \Cref{it:optim_pot_line_search}. This Lemma collects other results from \cite{gw_averaging, ot-structured-data} that we use to implement parameterized GW distances. Since these previous works solve more general problems, we also specify the parameter values that yield \Cref{lemma:optim_pot}.

\begin{lemma}
	\label{lemma:optim_pot}
	\hfill
	\begin{enumerate}
		\item\label{it:optim_pot_cost} $\GW_{\mathsf{C}}(\cX, \cY) = \frac{1}{2} \inf_{\pi \in \coup(\mu_X, \mu_Y)} \mathcal{E}_{ C, \overline{C}}(\pi)^{1/2}$.
		\item\label{it:optim_pot_gradient} $\nabla \mathcal{E}_{C, \overline{C}}(\pi) = 2 \mathcal{L}(C, \overline{C}) \otimes \pi$.
		\item\label{it:optim_pot_line_search} Given $\displaystyle \tau = \operatorname*{argmin}_{\tau \in \coup(\mu_X, \mu_Y)} \langle \tau, \nabla \mathcal{E}_{C, \overline{C}}(\pi) \rangle$ and $\tau_\gamma = (1 - \gamma) \pi + \gamma \tau = \pi + \gamma(\tau - \pi)$ for $0 \leq \gamma \leq 1$, the function $f(\gamma) := \mathcal{E}_{C, \overline{C}}(\tau_\gamma)$ expands as a second degree polynomial $f(\gamma) = a\gamma^2 + b\gamma + c$ with coefficients
		\begin{align*}
			a &= - \langle h_1(C) \cdot (\tau-\pi) \cdot h_2(\overline{C})^\intercal, \tau-\pi \rangle \\
			b &= - \langle h_1(C) \cdot \pi \cdot h_2(\overline{C})^\intercal, \tau-\pi \rangle - \langle h_1(C) \cdot \pi \cdot h_2(\overline{C})^\intercal, \pi \rangle.
		\end{align*}
		If $a > 0$, $f$ is minimized when $\gamma$ is either $0$, $1$ or $-b/2a$. Otherwise, $f$ is minimized at $\gamma = 0$ or $\gamma = 1$.
	\end{enumerate}
\end{lemma}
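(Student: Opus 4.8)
The three claims are specializations, to the case of a measure network with $p=q=2$, of results established in \cite{gw_averaging,ot-structured-data}; accordingly my plan is mostly to transcribe those derivations into the notation fixed above, item by item. For the first claim, a measure network $\cX=(X,\mu_X,\omega_X)$ is, by \Cref{rem:recovers_GW_distance}, the same thing as a pm-net over a one-point parameter space, and there $\GW_\mathsf{C}=\GW_2$; combining this with the identity $\dis_2(\pi,\omega_X,\omega_Y)^2=\langle\mathcal{L}_2(C,\overline{C})\otimes\pi,\pi\rangle=\mathcal{E}_{C,\overline{C}}(\pi)$ recorded in \Cref{eq:distortion_tensor} and the definition $\GW_2(\cX,\cY)=\tfrac12\inf_{\pi}\dis_2(\pi,\omega_X,\omega_Y)$ gives the stated formula at once.

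For the gradient claim, I would differentiate the homogeneous quadratic form $\mathcal{E}_{C,\overline{C}}(\pi)=\sum_{ijkl}L_2(C_{ik},\overline{C}_{jl})\pi_{ij}\pi_{kl}$ entrywise: each variable $\pi_{ab}$ occurs in two families of summands, and collecting them yields $\nabla\mathcal{E}_{C,\overline{C}}(\pi)=\mathcal{L}_2(C,\overline{C})\otimes\pi+\mathcal{L}_2(C^\intercal,\overline{C}^\intercal)\otimes\pi$, with $\otimes$ the tensor--matrix contraction of \Cref{eq:tensor-matrix-multiplication}. Since the kernels implemented here (weighted adjacency matrices, graph Laplacians and heat kernels, metrics) are symmetric, one has $C=C^\intercal$ and $\overline{C}=\overline{C}^\intercal$, so the two terms agree and $\nabla\mathcal{E}_{C,\overline{C}}(\pi)=2\,\mathcal{L}_2(C,\overline{C})\otimes\pi$; for the implementation one then substitutes the rank-structured formula for $\mathcal{L}_2(C,\overline{C})\otimes\pi$ from \cite[Proposition 1]{gw_averaging}.

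For the line-search claim, I would use that $\mathcal{E}_{C,\overline{C}}$ is quadratic and $\gamma\mapsto\tau_\gamma=\pi+\gamma(\tau-\pi)$ is affine, so $f(\gamma)=\mathcal{E}_{C,\overline{C}}(\tau_\gamma)$ is automatically a polynomial of degree at most two, $f(\gamma)=a\gamma^2+b\gamma+c$. To extract $a$ and $b$ I would insert the factorization $\mathcal{L}_2(C,\overline{C})\otimes\pi=c_{C,\overline{C}}-h_1(C)\pi h_2(\overline{C})^\intercal$ from \cite[Proposition 1]{gw_averaging}: the term $c_{C,\overline{C}}$ pairs with $\tau_\gamma$ only through the row and column sums of $\tau_\gamma$, which are the fixed marginals $\mu_X,\mu_Y$, so $\langle c_{C,\overline{C}},\tau_\gamma\rangle$ is independent of $\gamma$ and contributes solely to the constant $c$; expanding the remaining bilinear term $-\langle h_1(C)\tau_\gamma h_2(\overline{C})^\intercal,\tau_\gamma\rangle$ by bilinearity of the Frobenius inner product then produces the stated coefficients, exactly as in the line-search step of \cite[Algorithm 2]{ot-structured-data}. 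It remains to minimize the scalar quadratic $f$ over $[0,1]$: if $a>0$ it is strictly convex, hence minimized at whichever of $0$, $1$, and the critical point $-b/(2a)$ lies in $[0,1]$; if $a\le 0$ it is concave or affine and therefore minimized at an endpoint.

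I do not anticipate a genuine obstacle, since the substantive arguments are already present in \cite{gw_averaging,ot-structured-data}; the only care required is bookkeeping — reconciling the tensor-contraction conventions of those papers with \Cref{eq:tensor-matrix-multiplication} and \Cref{eq:distortion_tensor}, invoking symmetry of the kernels where the gradient formula needs it, and verifying that $\langle c_{C,\overline{C}},\pi\rangle$ is genuinely constant over the coupling polytope so that it drops out of the quadratic and linear coefficients in the line search.
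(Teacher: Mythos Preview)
Your proposal is correct and follows the same route as the paper: the paper's own proof is essentially a set of pointers to \cite{gw_averaging,ot-structured-data} and \cite{pot-implementation} (noting in particular that the gradient formula in \cite{ot-structured-data} needs a factor-of-two correction), and you supply exactly the derivations those references contain, including the observation that the constant tensor $c_{C,\overline{C}}$ pairs with any coupling only through its fixed marginals. If anything you are slightly more careful than the paper, since you flag explicitly that the identity $\nabla\mathcal{E}_{C,\overline{C}}(\pi)=2\,\mathcal{L}_2(C,\overline{C})\otimes\pi$ relies on $C$ and $\overline{C}$ being symmetric.
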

\begin{proof}
\Cref{it:optim_pot_cost} follows from \Cref{eq:distortion_tensor} and the definition of $\mathcal{E}_{C, \overline{C}}(\pi)$. Contrary to the above, Peyr\'{e} et al.~\cite{gw_averaging} defined the GW distance in terms of $\mathcal{E}_{C, \overline{C}}$.
\Cref{it:optim_pot_gradient} originally appears in a formula in \cite[Proposition 2]{ot-structured-data} after setting $\varepsilon = 0$. However, \cite{ot-structured-data} does not contain the derivation and their final formula is missing a factor of 2. The detailed and corrected calculations are found in \cite[Section 1.2]{pot-implementation}.
Finally, considering item \Cref{it:optim_pot_line_search}: the original solution of the line-search step appears in \cite[Algorithm 2]{ot-structured-data} when setting $\alpha = 1$. Instead, we use the formulas from \cite[Section 1.3]{pot-implementation}, which also come with a detailed derivation.
\end{proof}

\subsubsection{Gradient descent for \Cref{ex:fixed_parameter_space}}
\label{sec:optim_fixed}
Let $(\Omega, \nu)$ be a fixed parameter space. When $\Omega$ is finite, the cost structure in \Cref{ex:fixed_parameter_space} becomes a sum of terms of the form $\dis_p(\pi, \omega_X^t, \omega_Y^t)$, and the formulas above generalize accordingly. Consequently, we compute $\GW_{\mathsf{C}}$ with projected gradient descent, using the formulas in \Cref{lemma:optim_fixed_parameters} to find the gradient and the explicit solution of the line-search step.

Fix $p=q=2$ and suppose, for simplicity, that $\Omega = \{1, \dots, T\}$. Let $\cX = (X, \mu_X, \Omega, \nu, \omega_X)$ and $\cY = (Y, \mu_Y, \Omega, \nu, \omega_Y)$ be pm-nets in $\mathfrak{N}_{\nu}$ with $N := |X|$ and $M := |Y|$. Define $C \in \R^{T \times N \times N}$ and $\overline{C} \in \R^{T \times M \times M}$ by $C_{t,i,k} = \omega_X^t(x_i, x_k)$ and $\overline{C}_{t,j,l} = \omega_Y^t(y_j, y_l)$, and let
\begin{equation*}
	\mathcal{E}_{\mathsf{C}, C, \overline{C}}(\pi, \nu) := \sum_t \langle \mathcal{L}_2(C_{t,*,*}, \overline{C}_{t,*,*}) \otimes \pi, \pi \rangle \cdot \nu_t.
\end{equation*}

\begin{lemma}
	\label{lemma:optim_fixed_parameters}
	\hfill
	\begin{enumerate}
		\item\label{it:optim_fixed_cost} $\displaystyle \GW_{\mathsf{C}}(\cX, \cY) = \frac{1}{2} \inf_{\pi \in \coup(\mu_X, \mu_Y)} \mathcal{E}_{\mathsf{C}, C, \overline{C}}(\pi, \nu)^{1/2}$.
		\item\label{it:optim_fixed_gradient} $\displaystyle \nabla_\pi \mathcal{E}_{\mathsf{C}, C, \overline{C}}(\pi, \nu) = \sum_t 2 \mathcal{L}_2(C_{t,*,*}, \overline{C}_{t,*,*}) \otimes \pi \cdot \nu_t$.
		\item\label{it:optim_fixed_line_search} Given $\displaystyle \tau = \operatorname*{argmin}_{\tau \in \coup(\mu_X, \mu_Y)} \langle \tau, \nabla_\pi \mathcal{E}_{\mathsf{C}, C, \overline{C}}(\pi, \nu) \rangle$ and $\tau_\gamma = (1 - \gamma) \pi + \gamma \tau = \pi + \gamma(\tau - \pi)$ for $0 \leq \gamma \leq 1$, the function $f(\gamma) := \mathcal{E}_{\mathsf{C}, C, \overline{C}}(\tau_\gamma, \nu)$ expands as a second degree polynomial $f(\gamma) = a\gamma^2 + b\gamma + c$ with coefficients
		\begin{align*}
			a &= - \sum_t \langle h_1(C_{t,*,*}) \cdot (\tau-\pi) \cdot h_2(\overline{C}_{t,*,*})^\intercal, \tau-\pi \rangle \cdot \nu_t \\
			b &= - \sum_t \left[ \langle h_1(C_{t,*,*}) \cdot \pi \cdot h_2(\overline{C}_{t,*,*})^\intercal, \tau-\pi \rangle + \langle h_1(C_{t,*,*}) \cdot \pi \cdot h_2(\overline{C}_{t,*,*})^\intercal, \pi \rangle \right] \cdot \nu_t.
		\end{align*}
		If $a > 0$, $f$ is minimized when $\gamma$ is either $0$, $1$ or $-b/2a$. Otherwise, $f$ is minimized at $\gamma = 0$ or $\gamma = 1$.
	\end{enumerate}
\end{lemma}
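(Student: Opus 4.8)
The plan is to reduce all three assertions to \Cref{lemma:optim_pot} by observing that, for a finite parameter space $\Omega = \{1,\dots,T\}$ with weights $(\nu_t)$, the functional $\mathcal{E}_{\mathsf{C},C,\overline{C}}(\pi,\nu)$ is \emph{by construction} the $\nu$-weighted sum of the single-scale objectives $\mathcal{E}_{C_{t,*,*},\overline{C}_{t,*,*}}(\pi) = \langle \mathcal{L}_2(C_{t,*,*},\overline{C}_{t,*,*})\otimes\pi,\pi\rangle$ treated in \Cref{lemma:optim_pot}, and each of the three properties in question is stable under finite non-negative linear combinations.

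For \Cref{it:optim_fixed_cost}, I would specialize \Cref{eqn:GW_fixed_space} to $p=q=2$ and a finite $\Omega$, giving $\GW_{\mathsf{C}}(\cX,\cY) = \tfrac{1}{2}\inf_{\pi\in\coup(\mu_X,\mu_Y)} \bigl(\sum_t \dis_2(\pi,\omega_X^t,\omega_Y^t)^2\,\nu_t\bigr)^{1/2}$, then apply the tensor identity \Cref{eq:distortion_tensor} scale by scale to rewrite $\dis_2(\pi,\omega_X^t,\omega_Y^t)^2 = \langle \mathcal{L}_2(C_{t,*,*},\overline{C}_{t,*,*})\otimes\pi,\pi\rangle$, and recognize $\sum_t \langle \mathcal{L}_2(C_{t,*,*},\overline{C}_{t,*,*})\otimes\pi,\pi\rangle\,\nu_t$ as exactly $\mathcal{E}_{\mathsf{C},C,\overline{C}}(\pi,\nu)$; this is pure definition-matching. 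For \Cref{it:optim_fixed_gradient}, I would write $\mathcal{E}_{\mathsf{C},C,\overline{C}}(\pi,\nu) = \sum_t \mathcal{E}_{C_{t,*,*},\overline{C}_{t,*,*}}(\pi)\,\nu_t$ and differentiate term by term, using linearity of $\nabla_\pi$ together with the gradient formula of \Cref{lemma:optim_pot} applied to each $t$, so that $\nabla_\pi\mathcal{E}_{\mathsf{C},C,\overline{C}}(\pi,\nu) = \sum_t 2\,\mathcal{L}_2(C_{t,*,*},\overline{C}_{t,*,*})\otimes\pi\cdot\nu_t$.

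For \Cref{it:optim_fixed_line_search}, since $\gamma\mapsto\tau_\gamma$ is affine in $\gamma$, each map $\gamma\mapsto\mathcal{E}_{C_{t,*,*},\overline{C}_{t,*,*}}(\tau_\gamma)$ is a quadratic $a_t\gamma^2 + b_t\gamma + c_t$ whose coefficients $a_t,b_t$ are precisely those given by the line-search part of \Cref{lemma:optim_pot} (with $C,\overline{C}$ replaced by $C_{t,*,*},\overline{C}_{t,*,*}$); summing against $\nu_t$ yields $f(\gamma) = \bigl(\sum_t a_t\nu_t\bigr)\gamma^2 + \bigl(\sum_t b_t\nu_t\bigr)\gamma + \mathrm{const}$, which matches the stated $a$ and $b$ (the constant term is irrelevant for the minimization). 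The characterization of the minimizer then follows from elementary single-variable calculus on $[0,1]$: if $a>0$ the parabola is strictly convex and its minimum over $[0,1]$ is at the vertex $-b/(2a)$ when that point lies in $[0,1]$ and otherwise at $0$ or $1$, whereas if $a\le 0$ the function is concave or affine and its minimum over the interval is attained at an endpoint.

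I do not expect a genuine obstacle: the entire content is that \Cref{lemma:optim_pot} (which itself collects results from \cite{gw_averaging,ot-structured-data,pot-implementation}) is linear in the cost tensors in exactly the way needed to lift it verbatim to a finite $\nu$-weighted family. The only points worth a brief remark are (i) that the gradient formula carries the same structural hypotheses on the kernels as in \Cref{lemma:optim_pot}, so nothing new must be verified at the level of each scale $t$, and (ii) that, as in \cite{ot-structured-data}, the auxiliary point $\tau = \operatorname*{argmin}_{\tau\in\coup(\mu_X,\mu_Y)}\langle\tau,\nabla_\pi\mathcal{E}_{\mathsf{C},C,\overline{C}}(\pi,\nu)\rangle$ is the conditional-gradient (Frank--Wolfe) direction, obtained by solving a standard linear optimal transport problem, which is what makes the closed-form line search computationally meaningful.
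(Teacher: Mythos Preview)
Your proposal is correct and follows essentially the same approach as the paper: specialize \Cref{eqn:GW_fixed_space} to $p=q=2$ with finite $\Omega$, apply \Cref{eq:distortion_tensor} termwise for \Cref{it:optim_fixed_cost}, and then use linearity of the gradient and of the quadratic expansion in $\gamma$ to lift \Cref{lemma:optim_pot}\,\Cref{it:optim_pot_gradient} and \Cref{it:optim_pot_line_search} to the $\nu$-weighted sum. The paper's proof is slightly terser but identical in substance.
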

\begin{proof}
Recall from \Cref{ex:fixed_parameter_space} (setting $p=2$) that
\begin{equation*}
		\GW_{\mathsf{C}}(\cX, \cY)
		= \frac{1}{2} \inf_{\pi \in \coup(\mu_X, \mu_Y)} \|\mathrm{dis}_2(\pi,\omega_X,\omega_Y)\|_{L^2(\Omega;\nu)}
		= \frac{1}{2} \inf_{\pi \in \coup(\mu_X, \mu_Y)} \left( \sum_t \dis_2(\pi, \omega_X^t, \omega_Y^t)^2 \cdot \nu_t \right)^{1/2}.
\end{equation*} 
\Cref{it:optim_fixed_cost} follows by applying \Cref{eq:distortion_tensor} to each term above. Since the gradient is linear, using \Cref{lemma:optim_pot}~\Cref{it:optim_pot_gradient} on each summand of $\nabla_\pi \mathcal{E}_{\mathsf{C}, C, \overline{C}}(\pi, \nu)$ yields \Cref{it:optim_fixed_gradient}. Likewise, $f(\gamma) = \mathcal{E}_{\mathsf{C}, C, \overline{C}}(\tau_\gamma, \nu)$ is a sum of second degree polynomials in $\gamma$ with coefficients given by \Cref{lemma:optim_pot}  \Cref{it:optim_pot_line_search}, so \Cref{it:optim_fixed_line_search} follows.
\end{proof}

\subsubsection{Alternating optimization for \Cref{ex:general_parameter_spaces}}
\label{sec:optim_general}
Similar to \Cref{lemma:optim_fixed_parameters}, it is straightforward to generalize the objective function and the formulas in \Cref{lemma:optim_pot} to compute the parametrized $\GW_{\mathsf{C}}$ from \Cref{ex:general_parameter_spaces}. However, this time we need to find two couplings $\xi \in \coup(\nu_X, \nu_Y)$ and $\pi \in \coup(\mu_X, \mu_Y)$ that jointly minimize the objective function, so we have to update the optimization procedure. We set up notation and generalize \Cref{lemma:optim_pot} before explaining the algorithm.

Once again, fix $p=q=2$. Let $(\Omega_X, \nu_X)$ and $(\Omega_Y, \nu_Y)$ be parameter spaces with $\Omega_X = \{1, \dots, T\}$ and $\Omega_Y = \{1, \dots, S\}$, and let $\cX = (X, \mu_X, \Omega_X, \nu_X, \omega_X), \cY = (Y, \mu_Y, \Omega_Y, \nu_Y, \omega_Y) \in \mathfrak{N}_{\mathrm{all}}$. Define $C \in \R^{T \times N \times N}$ and $\overline{C} \in \R^{S \times M \times M}$ by $C_{t,i,k} = \omega_X^t(x_i, x_k)$ and $\overline{C}_{s,j,l} = \omega_Y^s(y_j, y_l)$. Given $\pi \in \coup(\mu_X, \mu_Y)$ and $\xi \in \coup(\nu_X, \nu_Y)$, define
\begin{equation*}
	\mathcal{E}_{\mathsf{C}, C, \overline{C}}(\pi, \xi) := \sum_{t,s} \langle \mathcal{L}_2(C_{t,*,*}, \overline{C}_{s,*,*}) \otimes \pi, \pi \rangle \cdot \xi_{ts}.
\end{equation*}

\begin{lemma}
	\label{lemma:optim_general_parameters}
	\hfill
	\begin{enumerate}
		\item\label{it:optim_general_cost} $\displaystyle \GW_{\mathsf{C}}(\cX, \cY) = \frac{1}{2} \inf_{\pi, \xi} \mathcal{E}_{\mathsf{C}, C, \overline{C}}(\pi, \xi)^{1/2}$ where the inf runs over $\pi  \in \coup(\mu_X, \mu_Y)$ and $\xi \in \coup(\nu_X, \nu_Y)$.
		\item\label{it:optim_general_gradient_pi} $\displaystyle \nabla_\pi \mathcal{E}_{\mathsf{C}, C, \overline{C}}(\pi, \xi) = \sum_{t,s} 2 \mathcal{L}_2(C_{t,*,*}, \overline{C}_{s,*,*}) \otimes \pi \cdot \xi_{ts}$.
		\item\label{it:optim_general_line_search} Given $\displaystyle \tau = \operatorname*{argmin}_{\tau \in \coup(\mu_X, \mu_Y)} \langle \tau, \nabla_\pi \ \mathcal{E}_{\mathsf{C}, C, \overline{C}}(\pi, \xi) \rangle$ and $\tau_\gamma = (1 - \gamma) \pi + \gamma \tau = \pi + \gamma(\tau - \pi)$ for $0 \leq \gamma \leq 1$, the function $f(\gamma) := \mathcal{E}_{\mathsf{C}, C, \overline{C}}(\tau_\gamma, \xi)$ expands as a second degree polynomial $f(\gamma) = a\gamma^2 + b\gamma + c$ with coefficients
		\begin{align*}
			a &= - \sum_{t,s} \langle h_1(C_{t,*,*}) \cdot (\tau-\pi) \cdot h_2(\overline{C}_{s,*,*})^\intercal, \tau-\pi \rangle \cdot \xi_{ts} \\
			b &= - \sum_{t,s} \left[ \langle h_1(C_{t,*,*}) \cdot \pi \cdot h_2(\overline{C}_{s,*,*})^\intercal, \tau-\pi \rangle + \langle h_1(C_{t,*,*}) \cdot \pi \cdot h_2(\overline{C}_{s,*,*})^\intercal, \pi \rangle \right] \cdot \xi_{ts}.
		\end{align*}
		If $a > 0$, $f$ is minimized when $\gamma$ is either $0$, $1$ or $-b/2a$. Otherwise, $f$ is minimized at $\gamma = 0$ or $\gamma = 1$.
	\end{enumerate}
\end{lemma}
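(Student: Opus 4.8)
The plan is to derive \Cref{lemma:optim_general_parameters} as a parameter-indexed sum of \Cref{lemma:optim_pot}, treating the coupling $\xi$ as a fixed nonnegative constant array throughout the $\pi$-updates. First I would record the underlying decomposition: for each pair $(t,s)\in\Omega_X\times\Omega_Y$, abbreviating $C_t := C_{t,*,*}$ and $\overline{C}_s := \overline{C}_{s,*,*}$, the quantity $\langle \mathcal{L}_2(C_t,\overline{C}_s)\otimes\pi,\pi\rangle$ is exactly the single-parameter energy functional $\mathcal{E}_{C_t,\overline{C}_s}(\pi)$ of \cite{gw_averaging} to which all three parts of \Cref{lemma:optim_pot} apply, and by construction $\mathcal{E}_{\mathsf{C},C,\overline{C}}(\pi,\xi)=\sum_{t,s}\mathcal{E}_{C_t,\overline{C}_s}(\pi)\,\xi_{ts}$ is the $\xi$-weighted sum of these. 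Every assertion then reduces to ``summing \Cref{lemma:optim_pot} against the weights $\xi_{ts}$''.

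For part \Cref{it:optim_general_cost} I would unwind the definition of $\GW_{\mathsf{C}}$ from \Cref{ex:general_parameter_spaces} specialized to $p=q=2$ and finite parameter spaces: the inner $L^2$-norm over $(\Omega_X\times\Omega_Y,\xi)$ becomes the finite sum $\sum_{t,s}\dis_2(\pi,\omega_X^t,\omega_Y^s)^2\,\xi_{ts}$, and applying \Cref{eq:distortion_tensor} term by term rewrites each $\dis_2(\pi,\omega_X^t,\omega_Y^s)^2$ as $\langle\mathcal{L}_2(C_t,\overline{C}_s)\otimes\pi,\pi\rangle$; infimizing over $\pi\in\coup(\mu_X,\mu_Y)$ and $\xi\in\coup(\nu_X,\nu_Y)$ gives the stated identity. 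For part \Cref{it:optim_general_gradient_pi} I would fix $\xi$, note that $\pi\mapsto\mathcal{E}_{\mathsf{C},C,\overline{C}}(\pi,\xi)$ is a nonnegative linear combination of the functions $\mathcal{E}_{C_t,\overline{C}_s}$, and use linearity of the (Frobenius) gradient together with part \Cref{it:optim_pot_gradient} of \Cref{lemma:optim_pot} to read off $\nabla_\pi\mathcal{E}_{\mathsf{C},C,\overline{C}}(\pi,\xi)=\sum_{t,s}2\,\mathcal{L}_2(C_t,\overline{C}_s)\otimes\pi\cdot\xi_{ts}$.

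For part \Cref{it:optim_general_line_search} I would again fix $\xi$ and an arbitrary $\tau\in\coup(\mu_X,\mu_Y)$ — its characterization as an argmin plays no role here — and write $f(\gamma)=\mathcal{E}_{\mathsf{C},C,\overline{C}}(\tau_\gamma,\xi)=\sum_{t,s}\mathcal{E}_{C_t,\overline{C}_s}(\tau_\gamma)\,\xi_{ts}$. By part \Cref{it:optim_pot_line_search} of \Cref{lemma:optim_pot} each summand is a quadratic $a_{ts}\gamma^2+b_{ts}\gamma+c_{ts}$ in $\gamma$ with the single-parameter coefficients, so $f$ is a quadratic with leading coefficient $a=\sum_{t,s}a_{ts}\xi_{ts}$, linear coefficient $b=\sum_{t,s}b_{ts}\xi_{ts}$, and constant term $c=f(0)=\mathcal{E}_{\mathsf{C},C,\overline{C}}(\pi,\xi)$ — precisely the displayed expressions. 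Finally, minimizing $a\gamma^2+b\gamma+c$ over the compact interval $[0,1]$ is elementary: when $a>0$ the function is strictly convex and its minimizer is whichever of $0$, $1$, and the vertex $-b/(2a)$ lies in $[0,1]$ and attains the least value; when $a\le 0$ the function is concave (or affine) on $[0,1]$ and its minimum is attained at an endpoint.

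I expect no genuine obstacle: the content is essentially bookkeeping on top of the already-cited single-parameter results. The one place I would be careful is the claim that summing the single-parameter coefficients reproduces exactly the closed forms in the statement; this rests on the simplified representation $\mathcal{L}_2(C_t,\overline{C}_s)\otimes\pi=c_{C_t,\overline{C}_s}-h_1(C_t)\,\pi\,h_2(\overline{C}_s)^\intercal$ of \cite{gw_averaging} together with the observation that $\tau-\pi$ has vanishing row and column marginals, so the $\pi$-independent term $c_{C_t,\overline{C}_s}$ contributes nothing to $a_{ts}$ and $b_{ts}$; the remaining algebra is the computation carried out in \cite{ot-structured-data,pot-implementation}. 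I would also remark that this lemma concerns only the $\pi$-update with $\xi$ held fixed; the complementary $\xi$-update, with $\pi$ fixed, is a separate linear optimal-transport subproblem handled by the alternating scheme described below, and does not enter the present statement.
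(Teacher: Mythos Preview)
Your proposal is correct and matches the paper's approach: the paper does not give an explicit proof of this lemma, instead noting just before it that ``it is straightforward to generalize the objective function and the formulas in \Cref{lemma:optim_pot}'' in the same way as was done for \Cref{lemma:optim_fixed_parameters}. Your reduction to a $\xi$-weighted sum of the single-parameter identities from \Cref{lemma:optim_pot} is exactly that generalization, and your additional care about why the constant term $c_{C_t,\overline{C}_s}$ drops out of the line-search coefficients is a welcome clarification beyond what the paper spells out.
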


We minimize the two-variable objective $\mathcal{E}_{\mathsf{C}, C, \overline{C}}(\pi, \xi)$ with an alternating optimization procedure. The minimization with respect to $\pi \in \coup(\mu_X, \mu_Y)$ is solved using projected gradient descent with the updated formulas in  \Cref{lemma:optim_general_parameters}. The minimization $\inf_{\xi \in \coup(\nu_X, \nu_Y)} \mathcal{E}_{\mathsf{C}, C, \overline{C}}(\pi, \xi)$ is a standard optimal transport problem with cost matrix $M_{\pi, C, \overline{C}} \in \R^{T \times S}$ given by
\begin{equation*}
	(M_{\pi, C, \overline{C}})_{ts} = \langle \mathcal{L}_2(C_{t,*,*}, \overline{C}_{s,*,*}) \otimes \pi, \pi \rangle.
\end{equation*}
In other words, we solve
\begin{equation*}
	\begin{split}
		\text{Minimize: } & \sum_{ts} (M_{\pi, C, \overline{C}})_{ts} \cdot \xi_{ts} \\
		\text{Subject to: } & \xi \in \coup(\nu_X, \nu_Y).
	\end{split}
\end{equation*}

\subsection{Pandas}
\label{sec:pandas}

We begin with a proof of concept that the parametrized GW distance incorporates information that is spread across multiple scales. By ``spreading information'' we mean that given a metric space $(X, d_X)$ and an expression $X = X_1 \cup \cdots \cup X_\ell$, we define the pseudo-metrics $\omega_X^i: X \times X \to \R_{\geq 0}$ by $\omega_X^i(x,x') = d_X(x,x')$ if $x, x' \in X_i$ and $0$ otherwise. Each $\omega_X^i$ only remembers the distances between points in $X_i$, so if we have another metric space $(Y, d_Y)$ with an analogous expression $Y = Y_1 \cup \cdots \cup Y_\ell$ and pseudo-metrics $\omega_Y^i$, the GW coupling between $\omega_X^i$ and $\omega_Y^i$ only has information on $X_i$ and $Y_i$. We use the parametrized GW distance to incorporate the information of all $\omega_X^i$ and $\omega_Y^i$ in one coupling. 

For this experiment, we use a graph that we call a \emph{panda}. Let $P_1$ be a cycle graph of size $N$ and select two vertices of $C$ at distance $e \leq \lfloor N/2 \rfloor$. Given integers $N, n, e$ with $n < N$ and $e \leq \lfloor N/2 \rfloor$, an $(N,n,e)$-panda graph $P$ is formed by gluing two cycle graphs $P_2$ and $P_3$ of size $n$ to $P_1$, one at each distinguished vertex. We say that the $N$-cycle $P_1$ is the head of the panda, and that each of the smaller $n$-cycles $P_2$ and $P_3$ is an ear. Consequently, $P = P_1 \cup P_2 \cup P_3$ and $|P_1 \cap P_i| = 1$ for $i=2,3$. The pseudo-metrics $\omega_P^t$ are defined as above. We define a number of pm-nets from this setup. Let $\Omega := \{1, 2, 3\}$. Let $\mu_P$ and $\nu$ be the uniform measures on $P$ and $\Omega$, respectively, and let $d_P$ be the shortest path distance on $P$. We define the metric measure spaces (mm-spaces) $\mathcal{P}_0 := (P, \mu_P, d_P)$ and $\mathcal{P}_t := (P, \mu_P, \omega_P^t)$ for $t = 1, 2, 3$. We also define the pm-net $\mathcal{P}_{MS} := (P, \mu_P, \Omega, \nu, (\omega_P^t)_{t \in \Omega})$. 

We perform our experiments on a $(25,10,6)$-panda graph $X$ and a $(30,12,6)$-panda $Y$. The pm-nets $\cX_{MS}$, $\cY_{MS}$ and the mm-spaces $\cX_t$ and $\cY_t$ are defined as above.  \Cref{fig:pandas} shows the graph, distance matrix, and the pseudo-metrics $\omega_t$ of $X$ in the top row and those of $Y$, in the bottom.

We compute $\GW_2(\cX_t, \cY_t)$ for $t=1, 2, 3$ and $\GW_{\mathsf{C}}(\cX_{MS}, \cY_{MS})$ where $\mathsf{C}$ is the cost structure of \Cref{ex:fixed_parameter_space} with $p=q=2$. \Cref{fig:panda_couplings} has the optimal couplings $\pi_{\mathsf{C}}$ for $\GW_{\mathsf{C}}(\cX_{MS}, \cY_{MS})$ and $\pi_t$ for $\GW_2(\cX_t, \cY_t)$, $0 \leq t \leq 3$. 
We observe that a single $\pi_t$ with $1 \leq t \leq 3$ only sees the points from $X_t$ and $Y_t$, so every $\pi_t$ is a random coupling outside of a single block. The coupling $\pi_{\mathsf{C}}$ combines the information from these couplings into one. We remark that $\pi_{\mathsf{C}}$ is still not an optimal coupling for $\GW_2(\cX_t, \cY_t)$ because the computation of $\GW_{\mathsf{C}}$ still has no access to interactions between $X_i$ and $Y_j$ for $i \neq j$.

\begin{figure}[!ht]
	\centering
	\begin{tabularx}{0.90\textwidth}{cc|Y}
		Graph & Distance Matrix & $(\omega^t)_{1 \leq t \leq 3}$ \\
		\includegraphics[height=2.5cm]{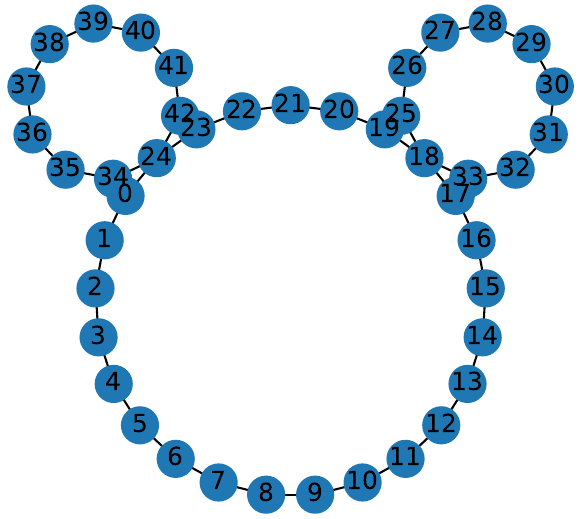}
		&
		\includegraphics[height=2.5cm]{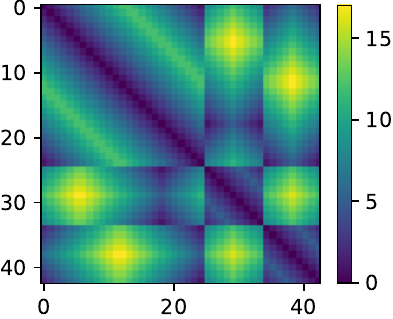}
		&
		\includegraphics[height=2.5cm]{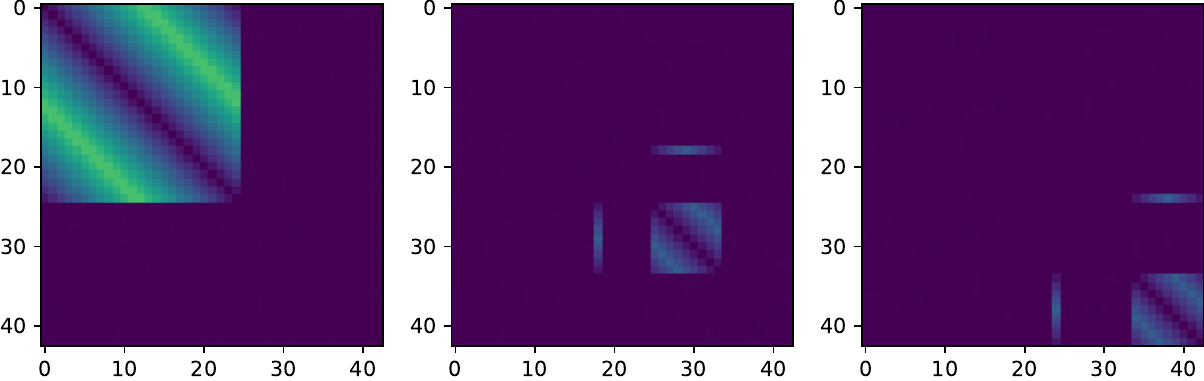}
		\\
		\includegraphics[height=2.5cm]{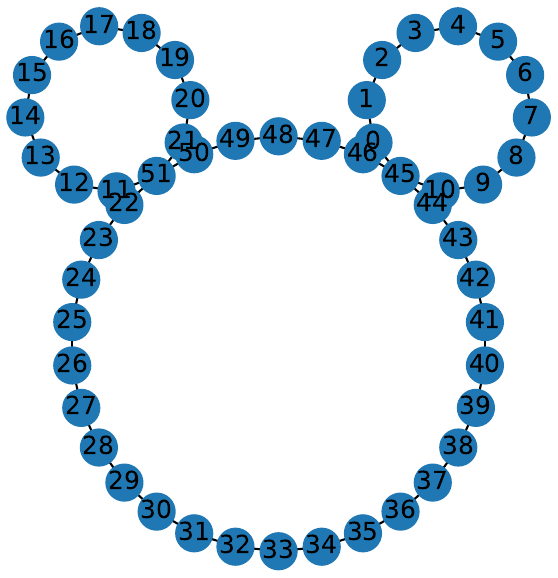}
		&
		\includegraphics[height=2.5cm]{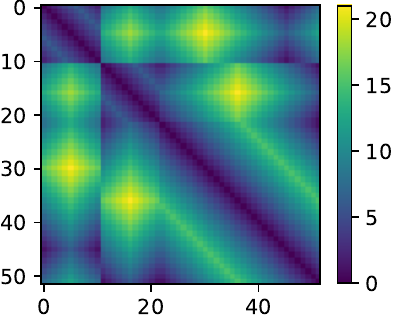}
		&
		\includegraphics[height=2.5cm]{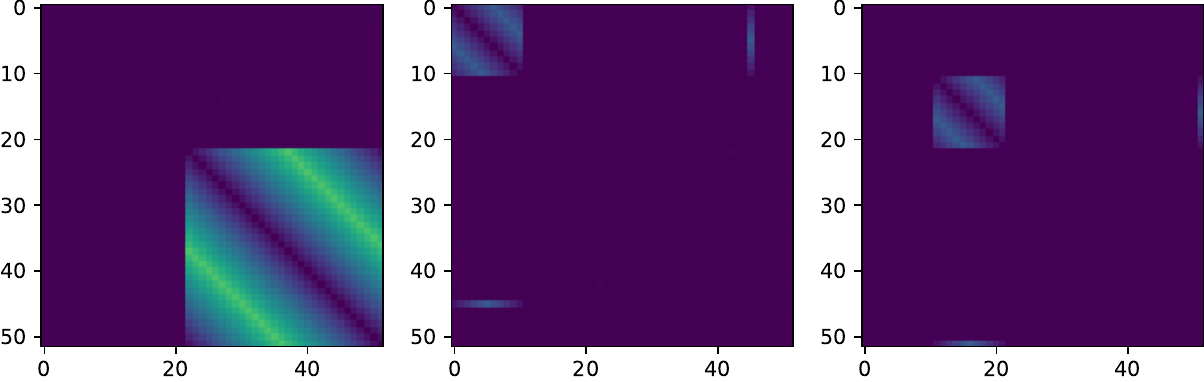}
	\end{tabularx}
	\caption{From left to right, each row shows a panda graph, its distance matrix, and the network functions $\omega^1$, $\omega^2$, and $\omega^3$. The function $\omega^1$ is the restriction of the distance matrix to the vertices of the head, while $\omega^2$ and $\omega^3$ are the restrictions to the ears. In the top row, the head consists of 25 vertices and each ear of 10 vertices; in the bottom row, the head has 30 vertices and each ear 12. The ears are formed by the vertex sets $\{18\} \cup \{25,26,\dots,33\}$ and $\{24\} \cup \{34,\dots,42\}$ in the top panda, and by $\{0,\dots,10\} \cup \{45\}$ and $\{11,\dots,21\} \cup \{51\}$ in the bottom panda. The corresponding heads are given by $\{0,\dots,24\}$ in the top panda and $\{22,\dots,51\}$ in the bottom panda.}
	\label{fig:pandas}
\end{figure}

\begin{figure}[!ht]
	\begin{tabularx}{0.90\textwidth}{cc|Y}
		\includegraphics[height=2.75cm]{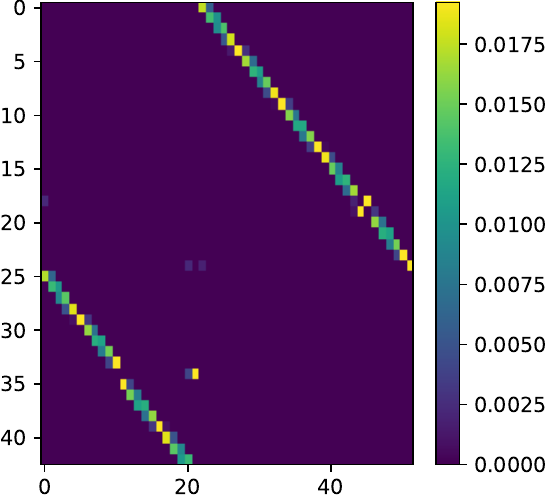}
		&
		\includegraphics[height=2.75cm]{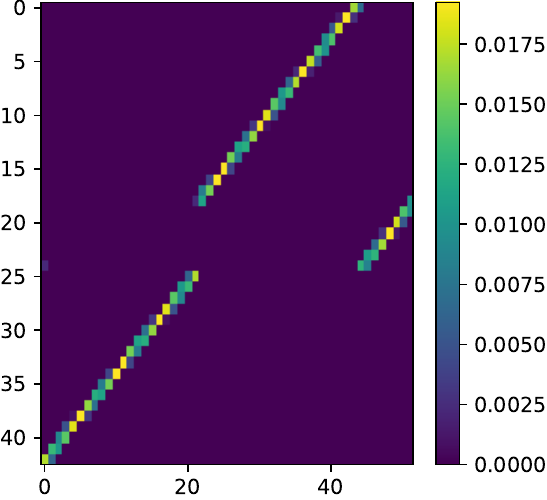}
		&
		\includegraphics[height=2.75cm]{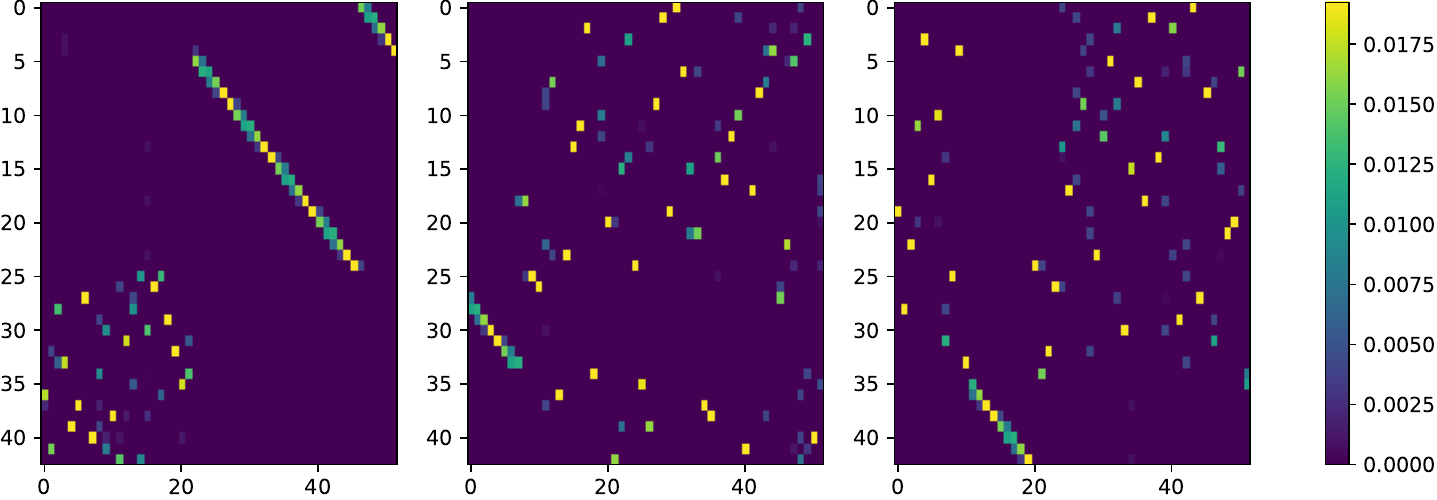}
	\end{tabularx}
	\caption{From left to right, the optimal couplings for $\GW_{\mathsf{C}}(\cX_{MS}, \cY_{MS})$ and $\GW_2(\cX_t, \cY_t)$ for $t = 0, 1, 2, 3$.}
	\label{fig:panda_couplings}
\end{figure}

\subsection{Random Graph Models} 
\label{sec:random-graphs}

The next experiments illustrate the behavior of parameterized GW distances on random graph models. 

\subsubsection{Perturbation Model}\label{subsubsec:perturbation_model} For the first experiment, we study the behavior of the parameterized GW distance on a perturbative random graph model. Given a graph $G = (V,E)$, we construct a  generative random graph model as follows. For a positive integer $k < |E|$, a random sample is generated by  deleting $k$ existing edges and adding $k$ new edges, both uniformly at random. Each graph is represented by a binary adjacency kernel, yielding a pm-net $\mathcal{V}_k = (V,\mu_V,\Omega_V,\nu_V,\omega_V)$, where:
\begin{itemize}
    \item $\mu_V$ is the uniform distribution on $V$;
    \item $\Omega_V$ is the set of all adjacency kernels on $V$;
    \item $\nu_V$ is the (unknown) distribution from which the graph kernels are being sampled under the perturbation model with parameter $k$;
    \item $\omega_V^t = t$ for all $t \in \Omega_V$, i.e., an adjacency kernel.
\end{itemize}

The goal of this experiment is to understand the behavior of the parameterized GW distance on empirical estimates of this pm-net, as in \Cref{sec:approximation_by_samples}. We begin with the well-known Karate Club graph $G=(V,E)$, from~\cite{zachary1977information}. For a fixed $k \in \{1,2,3,4,5\}$ and $n \in \{10,20,50,100,150\}$, we construct an empirical estimate $\mathcal{V}_{k,n}$ of the pm-net $\mathcal{V}_k$ by drawing $n$ samples of the perturbation model with addition/deletion parameter $k$. We then construct an additional empirical estimate $\mathcal{V}_{k,n}'$ via the same procedure (i.e., $\mathcal{V}_{k,n}$ and $\mathcal{V}_{k,n}'$ are both estimates of the same pm-net $\mathcal{V}_k$) and compute $\mathsf{GW}_\mathsf{C}(\mathcal{V}_{k,n},\mathcal{V}_{k,n}')$, where $\mathsf{C}$ is the cost structure defined in \Cref{ex:general_parameter_spaces}, with $p=q=2$. This calculation is repeated 10 times for each choice of parameters $(k,n)$, and results are reported in \Cref{fig:KarateClub}. As a baseline, we compare the empirical estimates using the standard $p=2$ GW distance: for each pair of sampled graphs in $\mathcal{V}_{k,n}$ and $\mathcal{V}_{k,n}'$ (in the arbitrary order they were sampled), we compute the GW distance between their adjacency kernels and then average the results. These results are also recorded in \Cref{fig:KarateClub}. 

The results of this experiment are rather intuitive. The standard GW distance (denoted as \narrowbf{GW}) is essentially constant as the number of samples increases, with the only difference being a tightening of the standard deviations over trials for larger numbers of samples. On the other hand, the parameterized GW distance (denoted as \narrowbf{PGW}) decreases as the number of samples increases---indeed, in theory, this should converge to zero as the number of samples goes to infinity. The difference between the standard GW and parameterized GW distances is more pronounced as $k$ increases, i.e., as the underlying distribution becomes more complicated. This experiment illustrates the benefit of incorporating global information in the distance computation via the parameterized GW framework.

\begin{figure}[!ht]
    \centering
    \includegraphics[width=\linewidth]{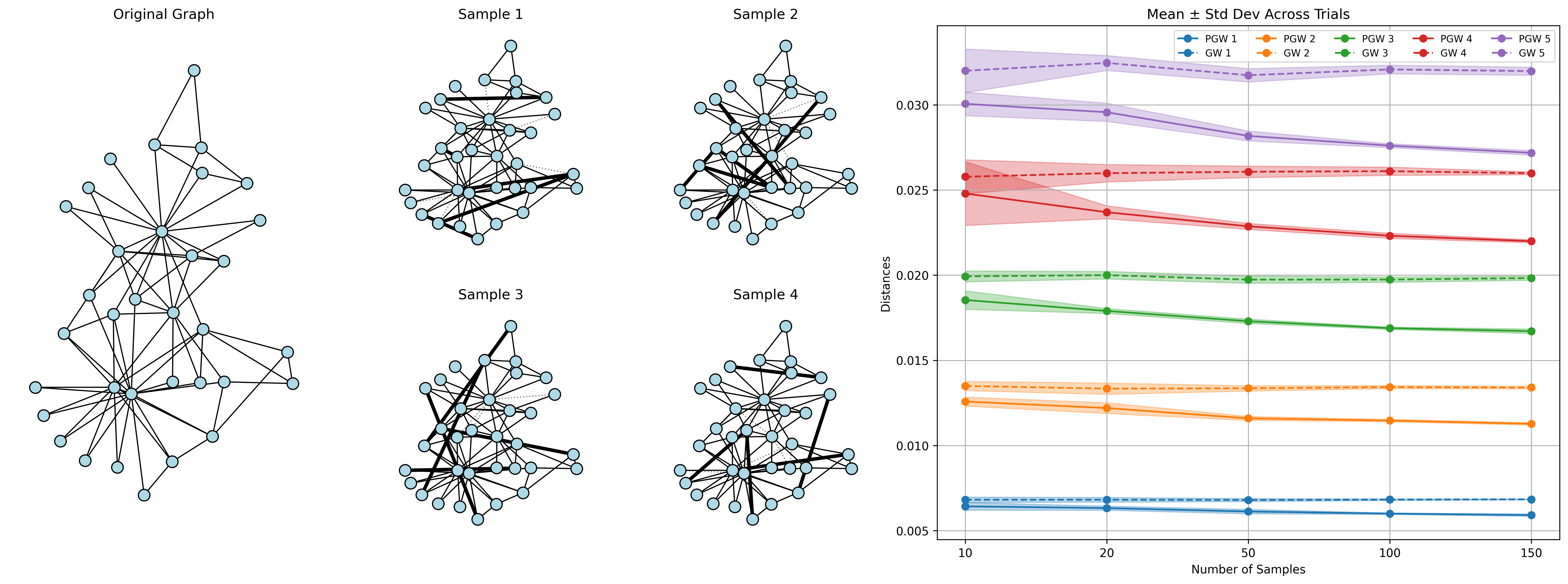}
    \caption{Illustration of the perturbation model from \Cref{subsubsec:perturbation_model}. The Karate Club graph $G$ appears on the left. The center panel shows samples from the perturbation model, where $k=5$ edges are deleted from and added to $G$ in each instance; added edges are drawn in bold, and deleted edges in dotted style. The right panel plots the standard GW distance (dashed, denoted as {\bf GW}) and the parameterized GW distance (solid, denoted as {\bf PGW}) between empirical estimates of the random graph model, as a function of the number of samples ($x$-axis) and the number of edge additions/deletions (encoded by color).}\label{fig:KarateClub}
\end{figure}

\subsubsection{Clustering Random Graphs via Distributions of Total Edges}\label{subsec:clustering_random_graphs} We use the distribution of total edges invariants described in \Cref{subsubsec:weight_distributions_random_graph_models} to cluster random graph models by their parameters. By \Cref{cor:global_distribution_stability} and \Cref{cor:global_distribution_stability_graphs}, this serves a proxy for the parameterized GW distance, and by \Cref{rem:computation_random_graph_models}, it is efficiently computable. Here, we use parameters $q=2$ and $p=1$ when computing Wasserstein distances between distributions of total edges. 

In the first version of the experiment, we use the Erd\H{o}s-R\'{e}nyi random graph model. We consider four instances of this model: in each instance, the underlying graphs have 50 nodes, with the probability of connecting any two nodes given by $\rho \in \{0.44,0.46,0.48,0.5\}$. A single trial of the experiment is described as follows. For each $k \in \{1,5,10,15,20\}$, we draw $k$ graphs from each model (i.e., each choice of $\rho$), and then repeat this a total of 10 times. This gives a total of 40 empirical random graph models, but these really come from only 4 classes---we expect that empirical models with the same $\rho$ should cluster tightly together, and that this clustering should become more pronounced for larger values of $k$. We compute pairwise Wasserstein distances between the distributions of total edges across the dataset of 40 random graph models. Clustering is measured by \emph{Leave One Out Nearest Neighbor (LOONN) score} (higher is better): for a fixed random model, we determine which model among the remaining 39 is closest to the fixed one (using Wasserstein distance between distributions of total edges); if the closest model has the same $\rho$-value as the fixed one, this is treated as a success, and total success percentage across the dataset is the reported score. We repeat the full experiment 10 times, and the results are provided in  \Cref{fig:graph_clustering}. Observe that the results agree with intuition. Indeed, increasing $k$ yields higher clustering scores, and the distribution of total edges appears to capture the dependence of the models on $\rho$ quite well (this is unsurprising, given \Cref{rem:erdos_renyi_model}).

We next run the same experiment on a different random graph model. In the second version of the experiment, we use \emph{stochastic block models}. In each instance, we have a graph on 50 nodes which have been partitioned into even groups of 25. An instance of the model depends on parameters $\rho_1,\rho_2 \in [0,1]$, where $\rho_1$ is the probability of connecting any two nodes within a partition block, and $\rho_2$ is the probability of connecting two nodes lying in distinct blocks. Here, we also use four classes, with 
\[
(\rho_1,\rho_2) \in \{(0.5,0.28), (0.5,0.3), (0.6,0.28), (0.6,0.3)\}.
\]
The experimental setup is then identical to the above. The results  (also reported in \Cref{fig:graph_clustering}) are qualitatively similar to the Erd\H{o}s-R\'{e}nyi case, but quantiatively indicate that this classification task is slightly more difficult. 

\begin{figure}
    \centering
    \includegraphics[width=0.45\linewidth]{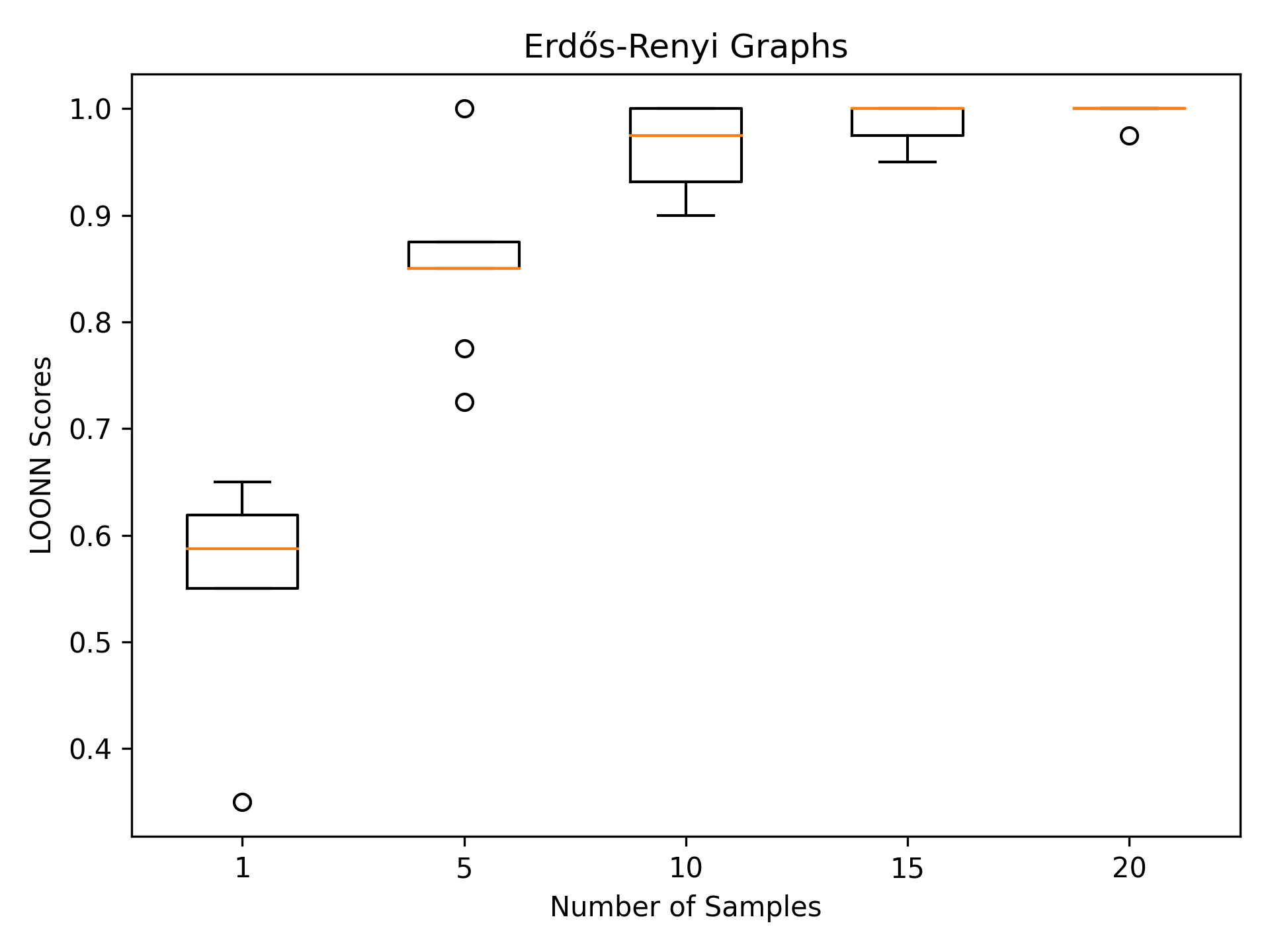}
    \includegraphics[width=0.45\linewidth]{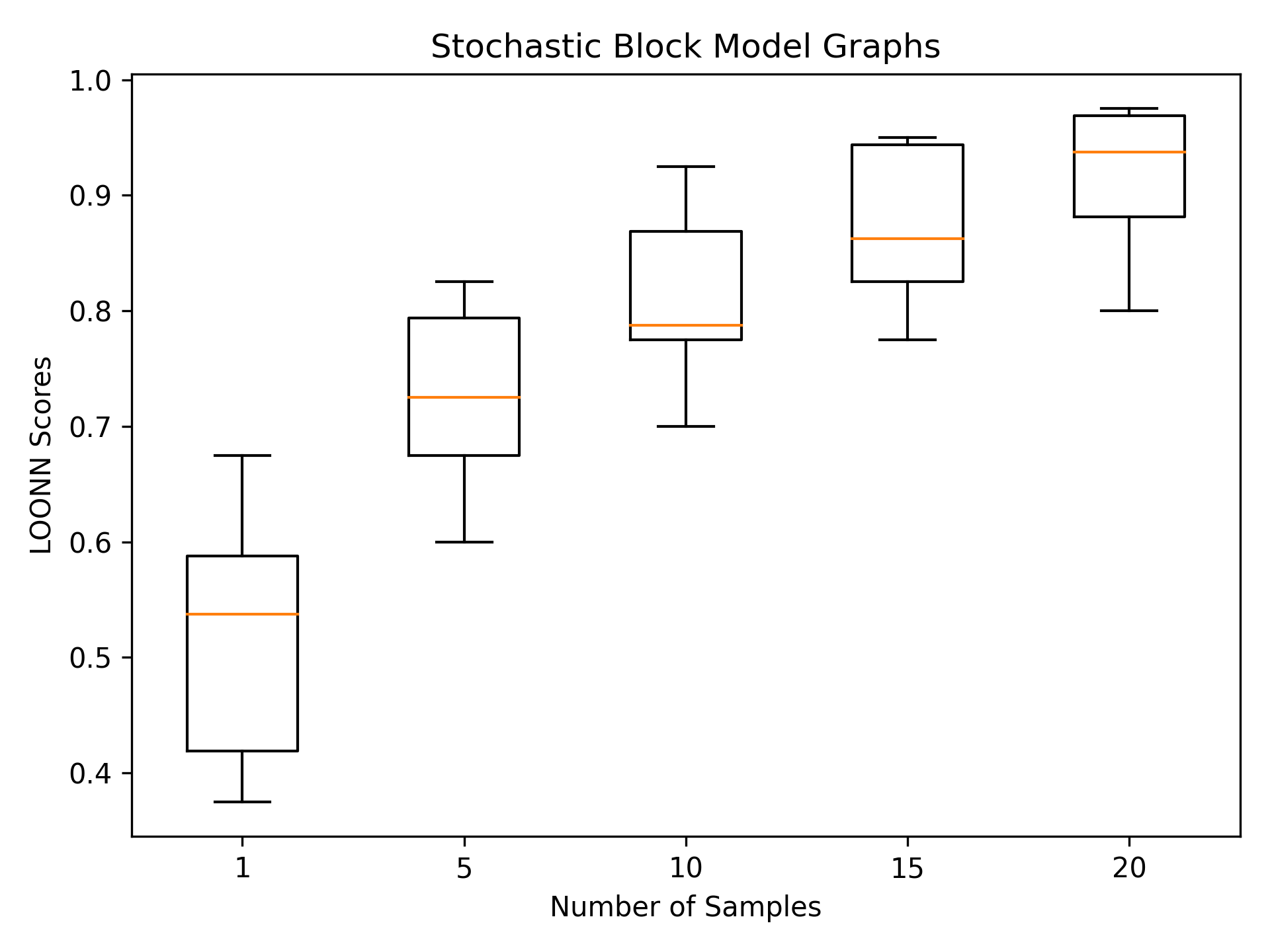}
    \caption{LOONN clustering scores (higher is better) versus number of samples in the empirical approximations for the Erd\H{o}s-R\'{e}nyi random graph model {\bf (Left)} and the stochastic block model {\bf (Right)}.}
    \label{fig:graph_clustering}
\end{figure}

\subsection{Nested Cycles}
\label{sec:nested-cycles}

The graph heat kernel represents the diffusion of heat in a graph across time, and it captures graph features at increasing scales as time advances. In many applications, people perform tests on graphs using the heat kernel at a single time, which raises the question of how to compare graphs that have features at multiple scales that the heat kernel cannot capture simultaneously.

We study this question with the following family of graphs. Given a sequence of graphs $G_1, \dots, G_n$ and basepoints $v_1 \in G_1, \dots, v_n \in G_n$, we define an \emph{$n$-cycle of graphs} as the result of attaching each $G_i$ to an $n$-cycle $C_n$ by gluing $v_i \in G_i$ to the $i$-th vertex of $C_n$. We set $v_1$ as the basepoint of the resulting graph. For a fixed set of positive integers $n_1, \dots, n_\ell$ and $m$, we define a \emph{1-nested cycle of cliques} of type $(n_1, m)$ as an $n_1$-cycle of $m$-cliques and an \emph{$\ell$-nested cycle of cliques} of type $(n_1, \dots, n_\ell, m)$ as an $n_1$-cycle of cliques of type $(n_2, \dots, n_\ell, m)$. Note that this construction is independent of the choice of basepoint in the $m$-cliques. See \Cref{fig:nested_graphs}. We refer to the $n_i$-cycles and $m$-cliques as \emph{features at scale $i$} and $\ell+1$, respectively. The heat kernel of an $\ell$-nested cycle of cliques has features at $\ell$ different times because, in order for heat to diffuse through each $n_i$-cycle, it first has to diffuse through $(\ell-i)$-nested cycles of cliques, and this process takes longer for smaller $i$.

\begin{figure}
    \centering
    \includegraphics[width=0.40\linewidth]{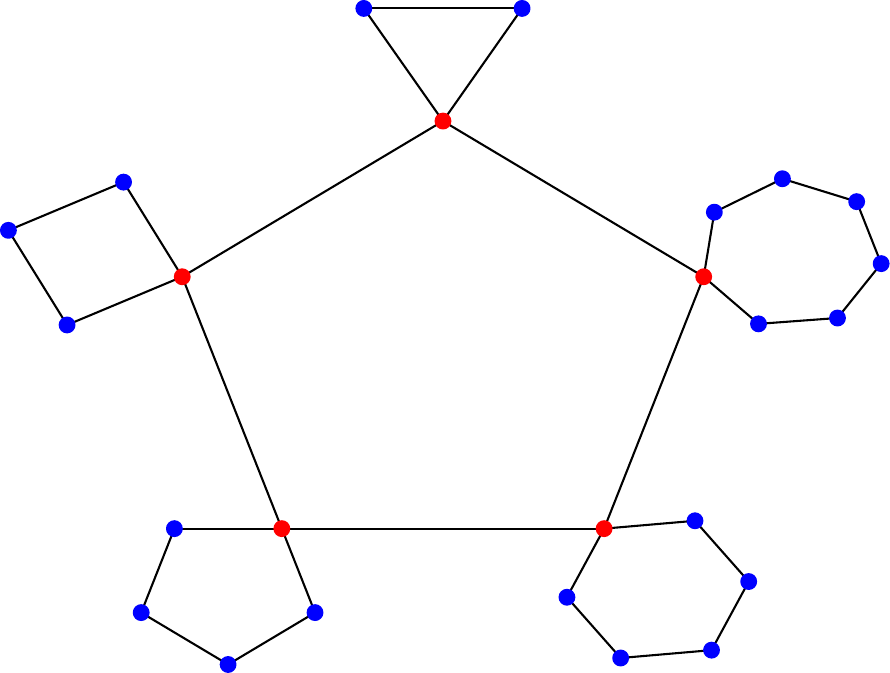}
    \quad\quad
    \includegraphics[width=0.40\linewidth]{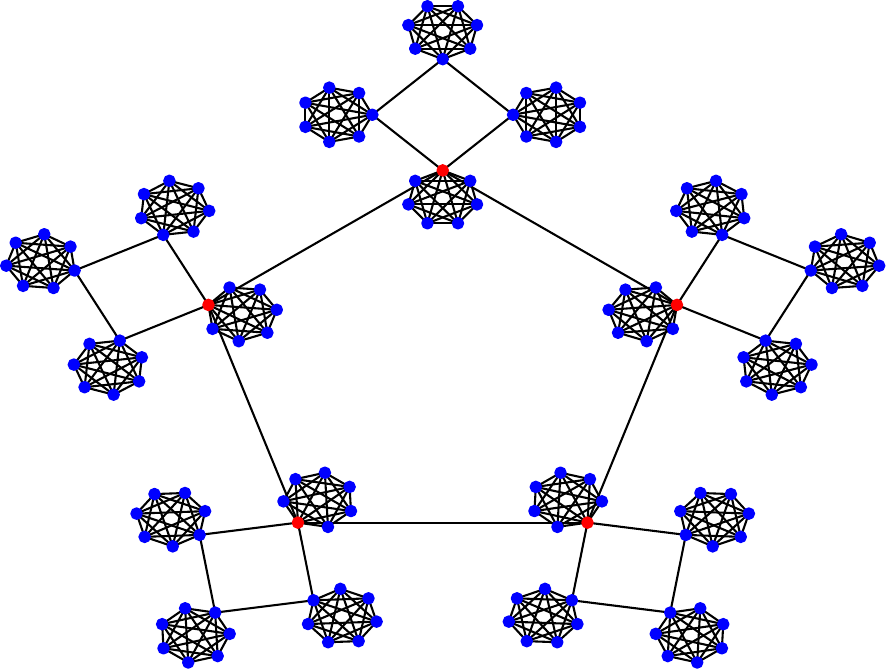}
    \caption{\textbf{Left:} A $5$-cycle of graphs formed by the cycle graphs $C_3, C_4, C_5, C_6, C_7$. The basepoint of each cycle graph is marked in red. \textbf{Right:} A 2-nested cycle of cliques of type $(5,4,7)$. This graph is a 5-cycle of 1-nested cycles of type $(4,7)$, each of which is a 4-cycle of 7-cliques. The basepoint of each 1-nested cycle is marked in red.}
    \label{fig:nested_graphs}
\end{figure}

To set a benchmark for the upcoming experiments, suppose we compute the GW distance between the distance matrices of two $\ell$-nested cycles of cliques of types $(n_1, \dots, n_\ell, m)$ and $(n_1, \dots, n_\ell, m')$ with $m \neq m'$ and equipped with the uniform measure. The optimal coupling $\pi$ captures the multiscale structure of these graphs through a hierarchy of nested blocks. At the coarsest level, these graphs are $n_1$-cycles of subgraphs, so $\pi$ should be an $n_1$-by-$n_1$ grid of blocks of size $(N_1 m)$-by-$(N_1 m')$, where $N_i=n_{i+1} \cdots n_\ell$. These outer blocks form a coupling\footnote{More precisely, the $n_1$-by-$n_1$ matrix of block sums is a coupling between two $n_1$-cycles.} between two $n_1$-cycles. After normalizing, each non-zero sub-block of size $(N_2m)$-by-$(N_2m')$ is a coupling between $(\ell-1)$-nested cycles of cliques, and we can iterate this description on each block. Thus, $\pi$ has a nested block structure where the outer blocks form a coupling of $n_1$-cycles, each non-zero block thereof is a coupling of $n_2$-cycles, and so on. At the smallest scale, the blocks that form a coupling of $n_\ell$-cycles are themselves couplings between cliques of sizes $m$ and $m'$. Note that an optimal coupling between $n$-cycles with uniform measure is a cyclic permutation of $\{1, \dots, n\}$, while an optimal coupling between cliques is random. We say that the outer blocks of size $(N_1m)$-by-$(N_1m')$ occur at \emph{scale 1}, their sub-blocks of size $(N_2m)$-by-$(N_2m')$ occur at \emph{scale 2}, and so on.

To test how well the GW distance captures multiscale features, we compare heat kernels at multiple values of $t$. Concretely, let $G_1$ and $G_2$ be 2-nested cycles of cliques of types $(10, 5, 5)$ and $(10, 5, 20)$, and let $H_{i,t}$ denote the heat kernel of $G_i$ at time $t$. Based on the discussion above, we expect the optimal coupling $\pi$ to have 10 blocks, each with 5 sub-blocks of random noise of size 5-by-20, and the blocks at scales 1 and 2 should form cyclic permutations of 10- and 5-cycles, respectively. We summarize this structure using the binary vector \texttt{cyclic} above each coupling in \Cref{fig:HK_GW_2} and \Cref{fig:HK_MS_2}. The $i$-th entry of \texttt{cyclic} indicates whether all non-zero blocks of $\pi$ at scale $i$ form cyclic permutations of $n_i$-cycles. In line with our intuition, the GW couplings in \Cref{fig:HK_GW_2} capture small and large scale features at different times. Specifically, only the blocks at scale 2 are cyclic permutations when $30 \leq t \leq 50$ (as indicated by $\texttt{cyclic = [0, 1]}$), while the opposite is true when $t \geq 170$ ($\texttt{cyclic = [1, 0]}$).

\begin{figure}[!ht]
	\includegraphics[width=\textwidth]{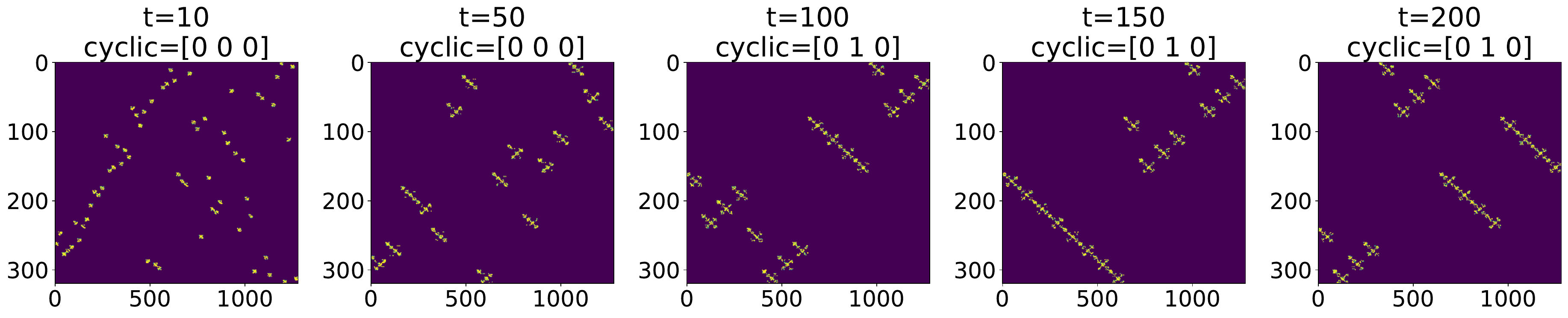}
	\caption{Optimal couplings for $\GW_2(H_{1,t}, H_{2,t})$, where $H_{i,t}$ is the heat kernel of the 2-nested cycle of cliques $G_i$, and $G_1$ and $G_2$ have types $(10,5,5)$ and $(10,5,20)$. Each panel has the time parameter of $H_{i,t}$, and the vector \texttt{cyclic} indicates whether the coupling is a cyclic permutation at each scale.}
	\label{fig:HK_GW_2}
\end{figure}

\begin{figure}[!ht]
	\includegraphics[width=\textwidth]{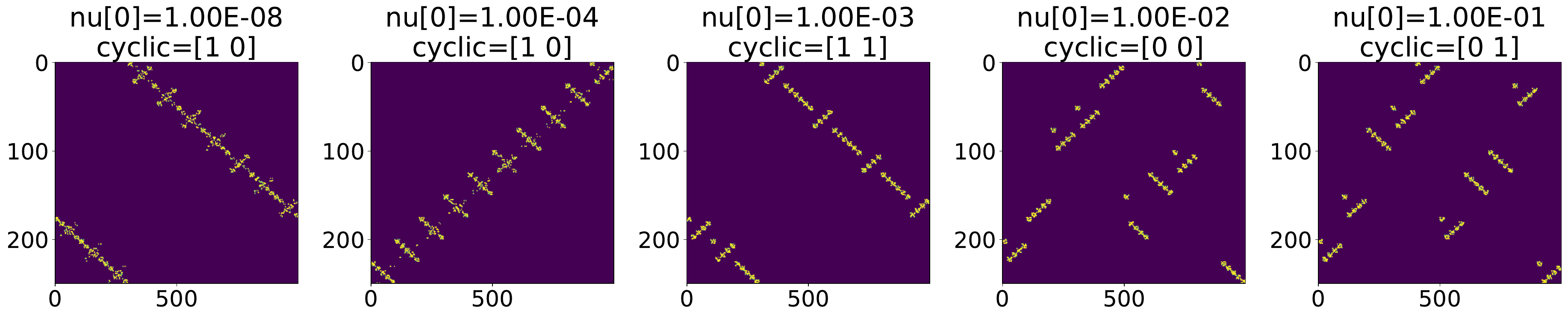}
	\caption{Optimal couplings for $\GW_{\mathsf{C}}(\mathcal{H}_{1}, \mathcal{H}_{2})$. $\mathcal{H}_i$ is a sequence of two heat kernels $H_{i,t}$ with $t_1 = 50$ and $t_2 = 200$. The title of each panel contains the value of $\nu_1$ ($\texttt{nu[0]}$ in 0-indexing) and the vector \texttt{cyclic} that indicates whether the coupling is a cyclic permutation at each scale. The coupling induced by $\nu_1 = 10^{-3}$ (and $\nu_2 = 0.999$) has the desired block structure at all scales (i.e. $\texttt{cyclic=[1,1]}$).}
	\label{fig:HK_MS_2}
\end{figure}

We now attempt to capture features at both scales simultaneously with the parametrized GW distance on a fixed parameter space. Let $\mathsf{C}$ be the cost structure from  \Cref{ex:fixed_parameter_space} with $p=q=2$. We set $t_1=50$, $t_2=200$ and $\Omega = \{t_1, t_2\}$, but select $\nu$ later. We manually chose $t_1$ and $t_2$ because the GW couplings satisfy $\texttt{cyclic = [0,1]}$ when $t=50$ and $\texttt{cyclic = [1,0]}$ when $t = 200$. We construct pm-nets $\mathcal{H}_{i} = (G_i, \mu_i, (H_{i, t_j})_{j=1,2}, \Omega, \nu)$ with the uniform measure $\mu_i$ for each $i=1,2$.

We have to carefully choose $\nu$ because of a numerical issue in the computation of $\GW_{\mathsf{C}}(\mathcal{H}_1, \mathcal{H}_2)$. The extreme values of $H_{1,t_1}$ and $H_{2,t_1}$ are several orders of magnitude larger than those of $H_{1,t_2}$ and $H_{2,t_2}$, even after normalization (e.g. with the Frobenius norm). When we set $\nu$ as the uniform measure, the values at $t_1$ dominate the optimization and the resulting coupling resemble the GW coupling at $t_1$. We resolve this issue by doing a grid search on $\nu$. \Cref{fig:HK_MS_2} has the optimal couplings for $\GW_{\mathsf{C}}(\mathcal{H}_1, \mathcal{H}_2)$ for several choices of $\nu$. In particular, the coupling with $\nu_1 = 10^{-3}$ and $\nu_2 = 1 -\nu_1$ has the expected block structure ($\texttt{cyclic = [1, 1]}$). 

Therefore, after some parameter tuning, the parametrized GW distance with a fixed parameter space (\Cref{ex:fixed_parameter_space}) captures information that is spread across multiple heat kernels with a single coupling.

\subsubsection{3-nested cycles of cliques}
We repeat the experiments above with 3-nested cycles of cliques of types $(4, 4, 4, 5)$ and $(4, 4, 4, 20)$ to see if the parametrized GW distance needs 3 levels to capture features at three scales. 

For the standard GW framework, we construct the heat kernels $H_{1,t}$ and $H_{2,t}$ as above with $10 \leq t \leq 500$ and compute $\GW_2(H_{1,t}, H_{2,t})$; see \Cref{fig:HK_GW_3}. The blocks of the GW couplings form cyclic permutations of 4-cycles at scale 3 when $t=20, 30$ ($\texttt{cyclic=[0,0,1]}$), at scale 2 when $100 \leq t \leq 280$ ($\texttt{cyclic=[0,1,0]}$) and at scale 1 when $t=410, 430$ ($\texttt{cyclic=[1,0,0]}$). 

For the parametrized GW framework, we manually select $t_1=30$, $t_2=100$, and $t_3=410$, and set $\Omega = \{t_1, t_2, t_3\}$ and $\mathcal{H}_{i} = (G_i, \mu_i, (H_{i, t_j})_{1 \leq j \leq 3}, \Omega, \nu)$. We perform a grid search over $\nu$ and found that $\nu_1 = 1.29 \times 10^{-4}$, $\nu_2 = 3.39 \times 10^{-5}$ and $\nu_3 = 1-\nu_1-\nu_2 \approx 9.998 \times 10^{-1}$ produces a coupling with the correct block structure at all scales; see \Cref{fig:HK_MS_3_lev_3}.

\begin{figure}[!ht]
	\includegraphics[width=\textwidth]{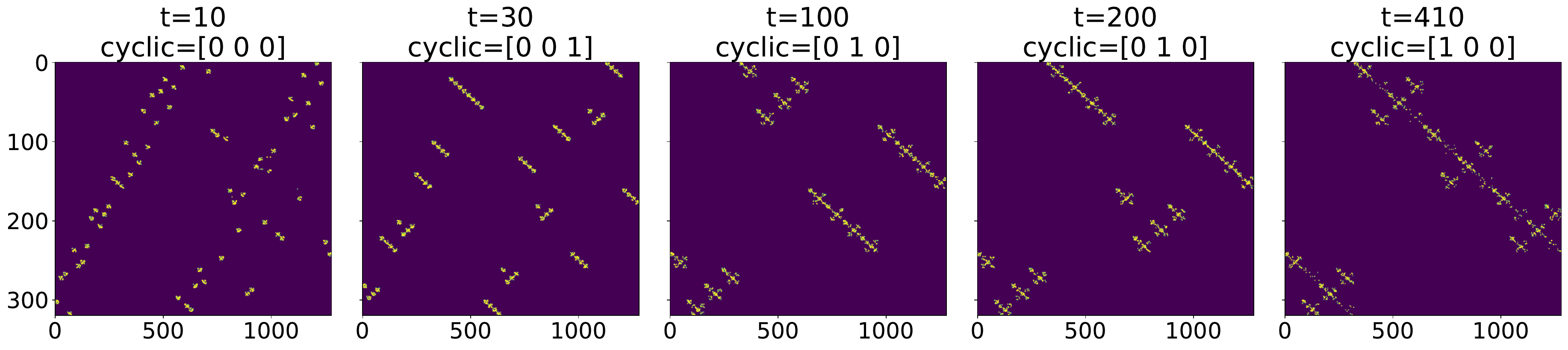}
	\caption{Optimal couplings for $\GW_2(H_{1,t}, H_{2,t})$, where $H_{i,t}$ is the heat kernel of the 3-nested cycle of cliques $G_i$. $G_1$ and $G_2$ have types $(4,4,4,5)$ and $(4,4,4,20)$. Each panel has the time parameter of $H_{i,t}$, and the vector \texttt{cyclic} indicates whether the coupling is a cyclic permutation at each scale.}
	\label{fig:HK_GW_3}
\end{figure}

\begin{figure}
	\includegraphics[width=0.8\textwidth]{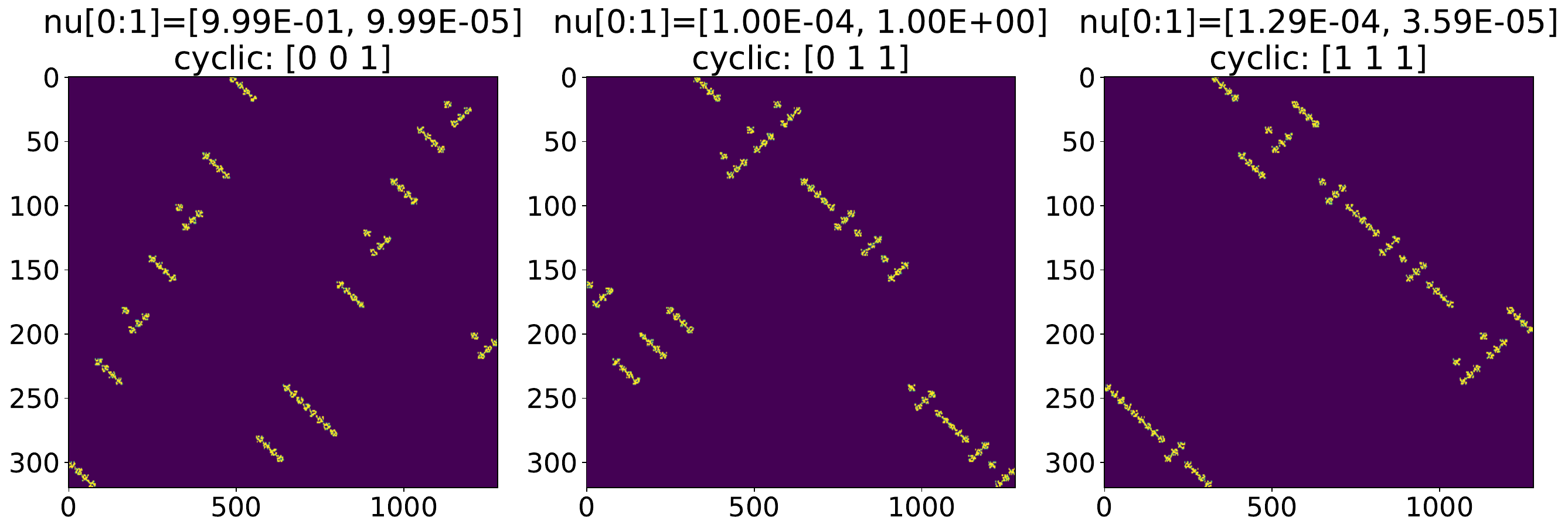}
	\caption{Optimal couplings for $\GW_{\mathsf{C}}(\mathcal{H}_{1}, \mathcal{H}_{2})$. $\mathcal{H}_i$ is a sequence of three heat kernels $H_{i,t}$ with $t_1 = 30$, $t_2 = 100$, and $t_3 = 410$. The title of each panel contains the value of $\nu_1$ and $\nu_2$ (\texttt{nu[0:1]} in 0-indexing, and rounded to 2 decimal places) and the vector \texttt{cyclic} that indicates whether the coupling is a cyclic permutation at each scale. The coupling induced by $\nu_1 = 1.29 \times 10^{-4}$, $\nu_2 = 3.39 \times 10^{-5}$ and $\nu_3 = 1-\nu_1-\nu_2 \approx 9.998 \times 10^{-1}$ has the desired block structure at all scales ($\texttt{cyclic=[1,1,1]}$).}
	\label{fig:HK_MS_3_lev_3}
\end{figure}

\subsection{Feature Selection}
\label{sec:feature-selection}

In this subsection, we propose the parameterized GW distance as a cost function for feature selection. We interpret $\Omega$ as the set of feature labels (invariants) and $\nu$ as their relative importance. For instance, in the case of graph data one may take $\Omega = \{\text{adj}, \text{Lap}, \text{dist}\}$, where $\omega_G^{\text{adj}}$ denotes the adjacency matrix, $\omega_G^{\text{Lap}}$ the Laplacian, and $\omega_G^{\text{dist}}$ the shortest-path distance matrix of a given graph $G$. 

Our objective is to find the weights $\nu$ that optimize a downstream task. Invariants that hinder performance should get small or zero weights, while those that contribute positively should get larger weights. We work on the class $\mathfrak{N}_{\nu}$ of pm-nets parameterized by $(\Omega, \nu)$ (recall \Cref{ex:classes_of_pmnets}) and use the cost structure from  \Cref{ex:fixed_parameter_space} with $p=q=2$. For any $\cX = (X, \mu_X, \Omega, \nu, \omega_X) \in \mathfrak{N}_{\nu}$, the set $\{\omega_X^t : t \in \Omega\}$ contains all invariants of $(X, \mu_X)$, and the parametrized GW distance $\GW_{\mathsf{C}}(\cX, \cY)$ measures the difference between the corresponding invariants of $\cX, \cY \in \mathfrak{N}_{\nu}$.

To choose a concrete task, suppose we have pm-nets $\cX_1, \dots, \cX_n$ with class labels $y_1, \dots, y_n \in \{1, \dots, m\}$, and we want to determine which invariants from $\Omega$ correctly classify them. Let $M_\nu$ be the $n$-by-$n$ matrix given by $(M_\nu)_{ij} = \GW_{\mathsf{C}}(\mathcal{X}_i, \mathcal{X}_j)$. Suppose that $y_1 \leq \dots \leq y_n$ so that $M_\nu$ has the block structure
\begin{equation}
	\label{eq:classification_matrix}
	M_\nu =
	\begin{pmatrix}
		B_{11} & \cdots  & B_{1m}\\
		\vdots & \ddots & \vdots \\
		B_{m1} & \cdots & B_{mm}
	\end{pmatrix}
\end{equation}
where $B_{ij}$ is the matrix of distances between elements of classes $i$ and $j$. The best clustering is achieved when the intra-cluster distances are small relative to the inter-cluster distances, so we want to find the $\nu$ that minimizes
\begin{equation*}
	\operatorname{cost}_p(\nu) := \frac{\|B_{11}\|_p^p + \cdots + \|B_{mm}\|_p^p}{\|M_\nu\|_p^p}
\end{equation*}
where $\| \bullet \|_2$ is the Frobenius norm. Depending on the context, we may add a regularization term and minimize
\begin{equation}
	\label{eq:classification_cost_lambda}
	\operatorname{cost}_{p,\lambda}(\nu) := \frac{\|B_{11}\|_p^p + \cdots + \|B_{mm}\|_p^p}{\|M_\nu\|_p^p} + \lambda \cdot KL(\nu | q)
\end{equation}
instead, where $\lambda \geq 0$, $q$ is the uniform measure on $\Omega$, and $KL(\nu | q)$ is the KL divergence.

\subsubsection{Dynamic Metric Spaces}
Recall that $\cX$ is a dynamic metric space if $\Omega$ is a compact subset of $\R_{\geq 0}$ and every $\omega_X^t$ is a (pseudo)-metric. Suppose that a set of drones flies through one of two corridors that have the same shape, except that one corridor has an obstacle. If the drones maintain roughly the same speed and direction, their behavior only changes significantly when they dodge the obstruction. We use the pipeline above to identify the times when the drones find the obstacle.

We define two types of pm-nets that represent the flight of the drones through one of the two corridors; see \Cref{fig:drone_flight}. Fix the indexing sets $X = \{0,\cdots,4\} \times \{0, \cdots, 4\}$ and $\Omega = \{0, \dots, 4\}$, and let $\mu_X$ and $\nu_0$ be their uniform probability measures. The corridor is the rectangle $[-1,4] \times [-2,2]$ in $\R^2$ and the obstacle is the ellipse $\left(\frac{x-1.5}{0.5}\right)^2 + \left(\frac{y}{0.7}\right)^2 = 1$. The initial drone configuration is the $5 \times 5$ grid $P(X) \subset [0,1] \times [-1,1]$ where $P(x,y) = \left(\frac{x}{4}, -1+\frac{y}{2} \right)$. Each point represents a drone. At each time step, all drones move $\Delta x = 0.6$ units to the right resulting in 5 grids $X_0, \dots, X_4$ of 25 points each. In the corridor without obstacles, we apply Gaussian noise $\mathcal{N}(0,0.05)$ independently to every point of $X_i$, and define $\omega_X^i$ as the distance matrix of $X_i$. This process defines a random variable $\cX = (X, \mu_X, \Omega, \nu_0, \omega_X)$ valued in pm-nets that we call \emph{clear flight}.

In the presence of an obstacle, a drone avoids collision by moving above the ellipse if its $y$-coordinate is positive and below the ellipse otherwise. Within each vertical column of drones (i.e. drones with the same $x$ coordinate), those that go above remain evenly spaced between the top of the obstacle and the line $y=1$, while those that go below remain evenly spaced between the line $y=-1$ and the bottom of the obstacle. As before, this process produces 5 grids $Y_0, \dots, Y_4$ of drones to which we apply independent Gaussian noise $\mathcal{N}(0,0.05)$, except when it would cause a collision. Let $\overline{\omega}_X^i$ be the distance matrix of $Y_i$; the tuple $\overline{\cX}= (X, \mu_X, \Omega, \nu_0, \overline{\omega}_X^i)$ defines another random variable valued in pm-nets called \emph{obstructed flight}.

In our experiment, we sample 10 instances of pm-nets: 5 clear flights $\cX_1, \dots, \cX_5$ and 5 obstructed flights $\cX_6, \dots, \cX_{10}$. We then apply alternating optimization to minimize $\operatorname{cost}_\lambda(\nu)$ for several values of $\lambda$, with results summarized in \Cref{table:drone_feature_selection}. As $\lambda$ increases from $0.01$ to $10$, the optimal measure $\nu$ assigns different weights to the features. For $\lambda = 0.01$, the regularization is too weak, and $\nu$ concentrates on the time steps where the drone configurations differ the most (see \Cref{fig:drone_flight}). In contrast, when $\lambda = 10$, the strong regularization drives $\nu$ close to the uniform measure, failing to distinguish between time steps. The most informative range is between $0.1$ and $1$, where the weights in $\nu$ capture the differences between the two flight classes. Specifically, in obstructed flights the drones begin crossing the obstacle at time 1 and nearly clear it by time 3, making times 1 and 2 the most distinctive, time 3 moderately distinctive, and times 0 and 4 indistinguishable. This pattern is reflected in the weights, with $\nu_1$ and $\nu_2$ largest, $\nu_3$ intermediate, and $\nu_0$ and $\nu_4$ smallest.

\begin{table}
	\begin{equation*}
		\begin{array}{c|ccccc|c}
			\lambda & \nu_0 & \nu_1 & \nu_2 & \nu_3 & \nu_4 & \operatorname{cost}_\lambda(\nu)\\
			\hline
			0.01 & 0.007& 0.001& \textbf{0.968}& 0.022& 0.001 & 0.095149 \\
			0.1 & 0.011& \textbf{0.423}& \textbf{0.525}& 0.027& 0.014 & 0.187433 \\
			1 & 0.146& \textbf{0.258}& \textbf{0.298}& 0.151& 0.146 & 0.283082 \\
			10 & 0.195& 0.206& 0.210&  0.195& 0.195 & 0.296712
		\end{array}
	\end{equation*}
	\caption{Minimizer of $\operatorname{cost}_\lambda(\nu)$ for several values of $\lambda$. If $\lambda $ is small, the measure $\nu$ assigns the most weight to the single time that distinguishes classes 1 and 2 the best. Conversely, if $\lambda$ is too big, there is no time that has significantly larger weight than the others. In the intermediate range ($\lambda = 0.1, 1$), the times when the drones avoid the obstacle ($t=1,2$) have the largest weights.}
	\label{table:drone_feature_selection}
\end{table}

\begin{figure}
	\centering
	\includegraphics[width=0.8\linewidth]{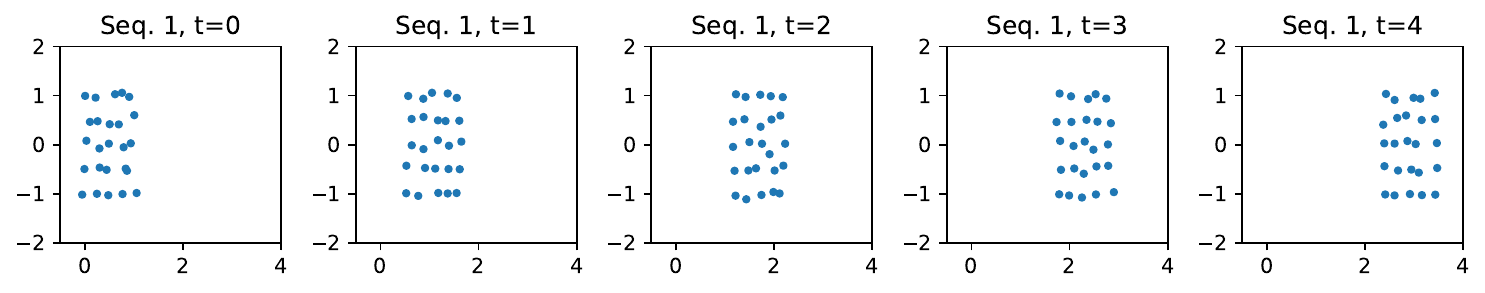}\\
	\includegraphics[width=0.8\linewidth]{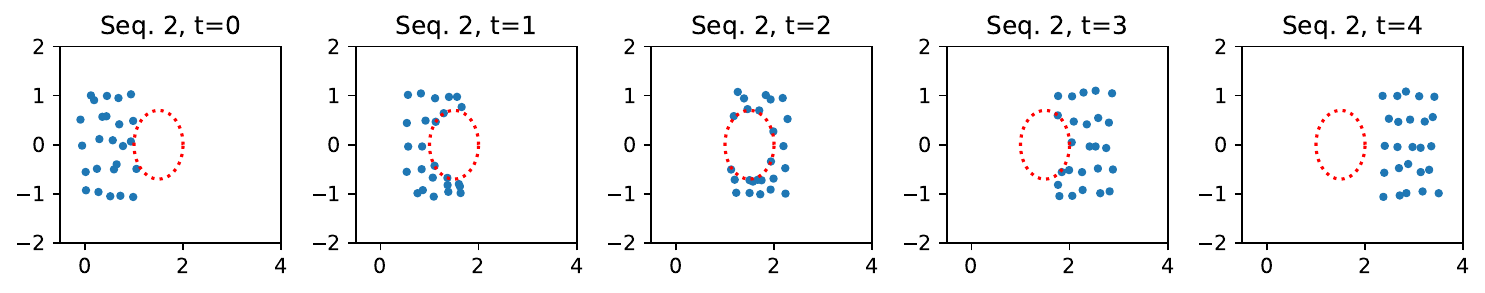}
	
	\caption{Examples of obstructed and unobstructed drone flights. Each row shows 5 snapshots of a $5 \times 5$ grid of drones flying through the square $[0,4] \times [-2,2]$. The first row shows unobstructed flight, while the second row shows the drones avoiding an obstacle at the ellipse $\left(\frac{x-1.5}{0.5}\right)^2 + \left(\frac{y}{0.7}\right)^2 = 1$.}
	\label{fig:drone_flight}
\end{figure}

\subsubsection{Supervised Classification}
We use the above dataset in a supervised classification experiment. We sample 15 instances of each flight pattern and reserve 5 of each as test set. Using the rest as training set, we obtain the $\nu_\text{opt}$ that minimizes $\operatorname{cost}_\lambda(\nu)$ with $\lambda = 10^{-1}$. 

We then classify each entry of the test set by its nearest neighbor under the parametrized GW distance with parameter space $(\Omega, \nu_\text{opt})$. For comparison, we build the following ensemble classifier. For every $\cX = (X, \mu_X, \Omega, \nu_\text{opt}, \omega_X)$ in the training set and every $\cY = (X, \mu_X, \Omega, \nu_\text{opt}, \omega_Y)$ in the test set, we can compute the standard GW distances $\GW_2(\omega_X^t, \omega_Y^t)$ for every $t \in \Omega$. This results in one set of distances between the training and test sets for every $t \in \Omega$, and thus, one nearest neighbor classifier for each $t \in \Omega$ that we call the \emph{$t$-th GW classifier}. The ensemble classifier then labels each entry of the test set with the most frequent label among the labels given by the GW classifiers. 

After repeating the above experiment 10 times, the parametrized GW distance classifier reaches an average of  $95\%$ classification accuracy with a $7\%$ standard deviation, while the ensemble classifier (based on standard GW) manages only $87\%$ accuracy with a $8\%$ standard deviation.

\subsubsection{Implementation}
\label{sec:classification_implementation}
Minimizing $\operatorname{cost}_\lambda(\nu)$ requires solving several optimizations that we describe now. Let $\nu \in \mathcal{P}(\Omega)$. For every $i = 1, \dots, n$, let $\cX_i = (X_i, \mu_i, \Omega, \nu, \omega_i)$ be a pm-net in $\mathfrak{N}_{\nu}$ with class label $y_i$ such that $1 \leq y_1 \leq \cdots \leq y_n \leq m$. We begin by finding couplings $\pi_{ij} \in \coup(\mu_i, \mu_j)$ that realize $\GW_{\mathsf{C}}(\cX_i, \cX_j)$ for $1 \leq i, j \leq n$ using gradient descent as detailed in \Cref{sec:optim_fixed}. Then we assemble the matrix $M_\nu$ as in \Cref{eq:classification_matrix} and minimize $\operatorname{cost}_\lambda(\nu)$ subject to $\nu \in \mathcal{P}(\Omega)$ using gradient descent. We alternate these optimizations until a convergence criterion is satisfied. Each gradient descent starts from the previous optimal coupling (resp. measure).

To complement \Cref{sec:implementation}, we record the gradient of $\operatorname{cost}_\lambda(\nu)$ with respect to $\nu$. To simplify the presentation below, we assume $\Omega = \{1, \dots, T\}$. The results below hold for arbitrary $1 \leq p, q < \infty$, but our code only implements the version for $p = q = 2$.

\begin{lemma}
	\label{lemma:gradient_classification_M_ij}
	Let $M \in \R^{n \times n}$ be a matrix with block structure
	\begin{equation*}
		M =
		\begin{pmatrix}
			B_{11} & \cdots  & B_{1m}\\
			\vdots & \ddots & \vdots \\
			B_{m1} & \cdots & B_{mm}
		\end{pmatrix}
	\end{equation*}
	where $B_{ij} \in \R^{n_i \times n_j}$ and $n_1 + \cdots + n_m = n$. Let $\displaystyle S(M) := \frac{\|B_{11}\|_1 + \cdots + \|B_{mm}\|_1}{\|M\|_1}$. Then:
	\begin{itemize}
		\item If $M_{ij}$ belongs to a block $B_{kk}$, $\displaystyle \frac{\partial S}{\partial M_{ij}} = \frac{1 - S(M)}{\|M\|_1}$.
		\item Otherwise, $\displaystyle \frac{\partial S}{\partial M_{ij}} = -\frac{S(M)}{\|M\|_1}$.
	\end{itemize}
\end{lemma}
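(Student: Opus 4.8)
The plan is to observe that, on the region of interest where the entries of $M$ are non-negative, both the numerator and the denominator of $S$ are \emph{linear} functions of the entries $M_{ij}$, so that $S$ is a ratio of linear functions and its partial derivatives follow immediately from the quotient rule. First I would fix notation: since $\|N\|_1 = \sum_{k,l}|N_{kl}|$ and the entries under consideration (namely $M_{ij} = \GW_\mathsf{C}(\mathcal{X}_i,\mathcal{X}_j) \geq 0$) are non-negative, we may write $\|M\|_1 = \sum_{k,l} M_{kl}$ and $\|B_{rr}\|_1 = \sum_{(k,l)\in B_{rr}} M_{kl}$; differentiability is then clear wherever the relevant entries are positive, and I would simply restrict to that region, which is all that is needed for the gradient-descent routine of \Cref{sec:feature-selection} and covers the generic case. (The main-diagonal entries $M_{ii} = \GW_\mathsf{C}(\mathcal{X}_i,\mathcal{X}_i) = 0$ are harmless there, since they do not depend on $\nu$ and drop out of the chain rule.)

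Second, writing $D(M) \coloneqq \sum_{r=1}^{m} \|B_{rr}\|_1$ so that $S = D/\|M\|_1$, I would record the elementary partials: $\partial\|M\|_1/\partial M_{ij} = 1$ for every index pair $(i,j)$, while $\partial D/\partial M_{ij} = 1$ if $(i,j)$ lies in one of the diagonal blocks $B_{kk}$ and $\partial D/\partial M_{ij} = 0$ otherwise, since $D$ only involves the diagonal-block entries.

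Finally, the quotient rule gives
\[
\frac{\partial S}{\partial M_{ij}}
= \frac{\|M\|_1\,\frac{\partial D}{\partial M_{ij}} - D\,\frac{\partial\|M\|_1}{\partial M_{ij}}}{\|M\|_1^{2}},
\]
and substituting the two cases yields $(\|M\|_1 - D)/\|M\|_1^{2} = (1 - S(M))/\|M\|_1$ when $M_{ij}$ lies in a diagonal block $B_{kk}$, and $-D/\|M\|_1^{2} = -S(M)/\|M\|_1$ otherwise, which is exactly the claimed formula. There is essentially no obstacle here beyond bookkeeping; the one subtlety worth flagging in the writeup is the non-smoothness of $\|\cdot\|_1$ at vanishing entries, which is dealt with by the positivity restriction noted above.
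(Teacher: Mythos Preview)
Your proof is correct and follows essentially the same approach as the paper: compute the partial derivatives of the numerator and denominator separately (each being $0$ or $1$ according to whether the entry lies in a diagonal block) and then apply the quotient rule. Your explicit remark about restricting to the region of positive entries to sidestep the non-differentiability of $\|\cdot\|_1$ is a point of rigor that the paper's proof leaves implicit.
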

\begin{proof}
	Suppose that the entry $M_{ij}$ belongs to the block $B_{kk}$ for some $1 \leq k \leq m$. Note that $\partial \|B_{kk}\|_1 /\partial M_{ij} = 1$ and $\partial \|B_{hh}\|_1 /\partial M_{ij} = 0$ for any $h \neq k$. Likewise, $\partial \|M\|_1 /\partial M_{ij} = 1$. Then
	\begin{align*}
		\frac{\partial S}{\partial M_{ij}}
		&= \left[ \frac{\partial \|B_{kk}\|_1}{\partial M_{ij}} \cdot \|M\|_1 - \left(\sum_{h=1}^m \|B_{hh}\|_1 \right) \cdot \frac{\partial \|M\|_1}{\partial M_{ij}} \right] / \|M\|_1^{2} \\
		&= \left[ 1 - \left(\sum_{h=1}^m \|B_{hh}\|_1 \right)/\|M\|_1 \right] \frac{1}{\|M\|_1} \\
		&= \frac{1 - S(M)}{\|M\|_1}.
	\end{align*}
	If $M_{ij}$ does not belong to any block $B_{kk}$, then $\partial \|B_{hh}\|_1 /\partial M_{ij} = 0$ for all $1 \leq h \leq m$ and $\partial \|M\|_1 /\partial M_{ij} = 1$. Hence
	\begin{align*}
		\frac{\partial S}{\partial M_{ij}}
		&= \left[ 0 \cdot \|M\|_1 - \left(\sum_{h=1}^m \|B_{hh}\|_1 \right) \cdot \frac{\partial \|M\|_1}{\partial M_{ij}} \right] / \|M\|_1^{2} \\
		&= - \frac{\sum_{h=1}^m \|B_{hh}\|_1}{\|M\|_1} \cdot \frac{1}{\|M\|_1} \\
		&= -\frac{S(M)}{\|M\|_1}.
	\end{align*}
\end{proof}

\begin{lemma}
	\label{lemma:gradient_GW_nu}
	Let $\nu \in \mathcal{P}(\Omega)$, and let $\cX, \cY \in \mathfrak{N}_{\nu}$. Let $\pi \in \coup(\mu_X, \mu_Y)$ be the coupling that realizes $\GW_{\mathsf{C}}(\cX_i, \cX_j)$. Then for any $1 \leq t \leq T$,
	\begin{equation*}
		\frac{\partial}{\partial \nu_t} \GW_{\mathsf{C}}(\cX_i, \cX_j)^p = \dis_p(\pi, \omega_X^t, \omega_Y^t)^p.
	\end{equation*}
\end{lemma}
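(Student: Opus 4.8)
The plan is to read the right-hand side as the $\nu$-derivative of the value function of the optimization problem that defines $\GW_{\mathsf{C}}$, and to justify freezing the optimal coupling during differentiation by an envelope (Danskin) argument. Write $\Omega = \{1,\dots,T\}$ and, for a coupling $\pi \in \coup(\mu_X,\mu_Y)$ and an index $s \in \Omega$, set $\phi_s(\pi) \coloneqq \dis_p(\pi,\omega_X^s,\omega_Y^s)^p \ge 0$. By \Cref{ex:fixed_parameter_space} (with inner and outer exponents both equal to $p$, as in \Cref{sec:optim_fixed}), the quantity $g(\nu) \coloneqq \GW_{\mathsf{C}}(\cX,\cY)^p$ satisfies, up to the fixed multiplicative constant from the $\tfrac12$-normalization,
\[
g(\nu) \;=\; \inf_{\pi \in \coup(\mu_X,\mu_Y)} \ \sum_{s=1}^{T} \phi_s(\pi)\,\nu_s ,
\]
so it suffices to show $\partial g/\partial\nu_t = \phi_t(\pi^\ast)$ for an optimal coupling $\pi^\ast$. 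First I would record that the infimum is attained — this is exactly \Cref{prop:optimal_couplings} (equivalently \Cref{lem:general_optimal_couplings} together with the lower semicontinuity coming from \Cref{lem:continuity_lemma_extended}) — and that the coupling $\pi^\ast$ realizing $\GW_{\mathsf{C}}(\cX,\cY)$ named in the statement is such a minimizer.

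The key structural observation is that, for each fixed $\pi$, the map $\nu \mapsto \sum_s \phi_s(\pi)\,\nu_s$ is \emph{affine}, with constant $\nu$-gradient $(\phi_s(\pi))_{s=1}^T$. Thus $g$ is an infimum of affine functions of $\nu$, hence concave, and the standing hypotheses put us in the setting of Danskin's theorem: $\coup(\mu_X,\mu_Y)$ is compact by Prokhorov (as used in \Cref{lem:general_optimal_couplings}), and $(\pi,\nu) \mapsto \sum_s \phi_s(\pi)\nu_s$ is jointly continuous because each $\phi_s$ is continuous by \Cref{lem:continuity_lemma}. Danskin's theorem then gives, at any $\nu$, the one-sided directional derivatives $g'(\nu;d) = \min_{\pi^\ast \in M(\nu)}\langle(\phi_s(\pi^\ast))_s, d\rangle$ over the set $M(\nu)$ of minimizers; combined with concavity, $g$ is differentiable wherever this expression is linear in $d$, and at such a point $\partial g/\partial\nu_t = \phi_t(\pi^\ast) = \dis_p(\pi^\ast,\omega_X^t,\omega_Y^t)^p$ for every $\pi^\ast \in M(\nu)$. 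Substituting $\pi = \pi^\ast$ yields the claimed identity.

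The main obstacle is this last point — differentiability of the value function at the particular $\nu$ — since in general the value function of a parametrized optimization can fail to be differentiable when the optimizer is non-unique. Here it is defused by the affine dependence on $\nu$: every optimal coupling $\pi^\ast$ contributes the same type of object, the vector $(\dis_p(\pi^\ast,\omega_X^t,\omega_Y^t)^p)_{t}$, which is a supergradient of the concave $g$ at $\nu$ regardless of uniqueness, coincides with $\nabla g(\nu)$ at every $\nu$ where $g$ is differentiable (a full-measure set, including every $\nu$ admitting a unique optimal coupling), and is exactly the (super)gradient used in the alternating gradient-descent scheme of \Cref{sec:classification_implementation}. I would also note that the envelope step can be done by hand rather than quoting Danskin: for $\pi^\ast$ optimal at $\nu$ and any $\nu'$ one has $g(\nu') \le \sum_s \phi_s(\pi^\ast)\nu'_s$ with equality at $\nu' = \nu$, which directly exhibits $(\phi_s(\pi^\ast))_s$ as a supergradient of $g$ and pins down $\partial g/\partial\nu_t$ wherever $g$ is smooth.
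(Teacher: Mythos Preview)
Your proof is correct, but considerably more elaborate than what the paper does. The paper's proof is a single sentence: it writes out the finite-$\Omega$ specialization of \Cref{ex:fixed_parameter_space}, namely $\GW_{\mathsf{C}}(\cX,\cY)^p = \sum_{s=1}^T \dis_p(\pi,\omega_X^s,\omega_Y^s)^p\,\nu_s$ with $\pi$ the realizing coupling, and then declares the result immediate by differentiating this linear expression in $\nu$ with $\pi$ held fixed. This is exactly the quantity needed in the alternating scheme of \Cref{sec:classification_implementation}, where the $\nu$-update is performed with the couplings $\pi_{ij}$ frozen, so the paper is content to treat it as a partial derivative in that sense and skip any envelope justification.

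What you add is the Danskin/envelope argument explaining \emph{why} freezing $\pi$ is legitimate if one reads the left-hand side as the derivative of the value function $\nu \mapsto \inf_\pi \sum_s \phi_s(\pi)\nu_s$: the objective is affine in $\nu$, the coupling set is compact, each $\phi_s$ is continuous, so the optimal coupling's coefficient vector is a supergradient of the concave value function and equals the gradient wherever the latter exists. This buys rigor the paper does not supply (and correctly flags the non-uniqueness caveat), at the cost of machinery the paper evidently did not feel was needed for an implementation lemma. Both routes land on the same formula; yours justifies it, the paper's asserts it.
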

\begin{proof}
	When $\Omega$ is finite, the equation in \Cref{ex:fixed_parameter_space} becomes $\displaystyle \GW_{\mathsf{C}}(\cX, \cY)^p = \sum_{s=1}^T \dis_p(\pi, \omega_X^s, \omega_Y^s)^p \cdot \nu_s$. The result is immediate from here.
\end{proof}

\begin{lemma}
	\label{lemma:gradient_classification_nu}
	Let $\nu \in \mathcal{P}(\Omega)$ and fix $1 \leq p < \infty$. For every $i = 1, \dots, n$, let $\cX_i = (X_i, \mu_i, \Omega, \nu, \omega_i)$ be a pm-net in $\mathfrak{N}_{\nu}$ with class label $y_i$ such that $1 \leq y_1 \leq \cdots \leq y_n \leq m$. Let $\pi_{ij} \in \coup(\mu_i, \mu_j)$ be the coupling that realizes $\GW_{\mathsf{C}}(\cX_i, \cX_j)$ and define $M \in \R^{n \times n}$ by $M_{ij} = \GW_{\mathsf{C}}(\cX_i, \cX_j)^p$. Then for any $1 \leq t \leq T$,
	\begin{equation*}
		\frac{\partial}{\partial \nu_t} \operatorname{cost}_p(\nu)
		= \sum_{\substack{M_{ij} \in B_{kk}\\\text{for some } k}} \frac{1 - S(M)}{\|M\|_1} \cdot \dis_p(\pi_{ij}, \omega_i^t, \omega_j^t)^p
		+ \sum_{\substack{M_{ij} \notin B_{kk}\\\text{for any } k}} -\frac{S(M)}{\|M\|_1} \cdot \dis_p(\pi_{ij}, \omega_i^t, \omega_j^t)^p
	\end{equation*}
\end{lemma}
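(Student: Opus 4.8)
The plan is to apply the multivariate chain rule, treating $\operatorname{cost}_p(\nu)$ as the composition $\nu \mapsto M(\nu) \mapsto S(M(\nu))$, where $M(\nu) \in \R^{n\times n}$ is the matrix with entries $M_{ij}(\nu) = \GW_{\mathsf{C}}(\cX_i,\cX_j)^p$ and $S$ is the functional from \Cref{lemma:gradient_classification_M_ij}. The first step is to observe that $\operatorname{cost}_p(\nu)$ is literally $S(M(\nu))$: each entry of a diagonal block $B_{kk}$ of $M(\nu)$ has the form $\GW_{\mathsf{C}}(\cX_i,\cX_j)^p$ with $y_i = y_j = k$, so $\|B_{kk}\|_1 = \sum_{y_i = y_j = k}\GW_{\mathsf{C}}(\cX_i,\cX_j)^p$, and likewise $\|M(\nu)\|_1 = \sum_{i,j}\GW_{\mathsf{C}}(\cX_i,\cX_j)^p$; the ratio defining $S$ thus coincides with the ratio defining $\operatorname{cost}_p$. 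Since all entries $M_{ij}(\nu) = \GW_{\mathsf{C}}(\cX_i,\cX_j)^p$ are nonnegative, $S$ is a ratio of \emph{affine} functions of $M$ and is differentiable wherever the denominator $\|M\|_1$ does not vanish, which justifies using the formulas of \Cref{lemma:gradient_classification_M_ij}.

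The second step is the chain rule itself:
\[
\frac{\partial}{\partial \nu_t}\operatorname{cost}_p(\nu) = \sum_{i,j} \left.\frac{\partial S}{\partial M_{ij}}\right|_{M = M(\nu)} \cdot \frac{\partial M_{ij}}{\partial \nu_t}.
\]
I would then substitute the two ingredients. The outer derivative $\partial S/\partial M_{ij}$ is supplied by \Cref{lemma:gradient_classification_M_ij}: it equals $(1 - S(M))/\|M\|_1$ when $M_{ij}$ lies in some diagonal block $B_{kk}$ and $-S(M)/\|M\|_1$ otherwise. The inner derivative $\partial M_{ij}/\partial\nu_t = \partial_{\nu_t}\GW_{\mathsf{C}}(\cX_i,\cX_j)^p$ is supplied by \Cref{lemma:gradient_GW_nu}, which gives $\dis_p(\pi_{ij},\omega_i^t,\omega_j^t)^p$ with $\pi_{ij}$ the optimal coupling realizing $\GW_{\mathsf{C}}(\cX_i,\cX_j)$. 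Splitting the sum over $(i,j)$ according to whether $M_{ij}$ belongs to a diagonal block and collecting terms then yields exactly the claimed formula, so beyond assembling the chain rule no further computation is required.

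The one genuine subtlety is the differentiation of $M_{ij}(\nu) = \GW_{\mathsf{C}}(\cX_i,\cX_j)^p$, because the optimal coupling $\pi_{ij}$ realizing this value itself depends on $\nu$, and one might naively expect an extra term involving $\partial_{\nu_t}\pi_{ij}$. This is precisely the point absorbed into \Cref{lemma:gradient_GW_nu}: on the finite parameter space $\Omega = \{1,\dots,T\}$ one has $\GW_{\mathsf{C}}(\cX_i,\cX_j)^p = \inf_{\pi}\sum_{s}\dis_p(\pi,\omega_i^s,\omega_j^s)^p\,\nu_s$, an infimum of functions affine in $\nu$, and by an envelope (Danskin) argument its gradient at a point where the minimizer is unique is obtained by freezing $\pi$ at that minimizer and differentiating the affine expression. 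I would therefore note in passing that $\nu \mapsto \operatorname{cost}_p(\nu)$ is differentiable only off a measure-zero subset of $\mathcal{P}(\Omega)$ — away from those $\nu$ at which some $\GW_{\mathsf{C}}(\cX_i,\cX_j)$ admits a non-unique optimal coupling, or at which $M(\nu)=0$ — and that the stated formula holds at every point of differentiability. Since \Cref{lemma:gradient_classification_M_ij} and \Cref{lemma:gradient_GW_nu} are already established, this envelope remark is the only conceptual ingredient that is not a bookkeeping step.
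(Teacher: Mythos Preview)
Your proposal is correct and follows essentially the same approach as the paper: identify $\operatorname{cost}_p(\nu) = S(M(\nu))$, apply the chain rule, and substitute the two derivatives from \Cref{lemma:gradient_classification_M_ij} and \Cref{lemma:gradient_GW_nu}. Your added envelope/Danskin remark about the dependence of $\pi_{ij}$ on $\nu$ is in fact more careful than the paper, which simply invokes \Cref{lemma:gradient_GW_nu} without comment.
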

\begin{proof}
Since we define $M_{ij} = \GW_{\mathsf{C}}(\cX_i, \cX_j)^p$, $\operatorname{cost}_p(\nu) = S(M)$, where $S$ is the function defined in \Cref{lemma:gradient_classification_M_ij}. Hence, using the chain rule and  \Cref{lemma:gradient_classification_M_ij} and \Cref{lemma:gradient_GW_nu} yields
	\begin{align*}
		\frac{\partial}{\partial \nu_t} \operatorname{cost}_p(\nu)
		&= \sum_{i,j=1}^{n} \frac{\partial S}{\partial M_{ij}} \cdot \frac{\partial M_{ij}}{\partial \nu_t}
		= \sum_{i,j=1}^{n} \frac{\partial S}{\partial M_{ij}} \cdot \frac{\partial}{\partial \nu_t} \GW_{\mathsf{C}}(\cX_i, \cX_j)^p \\
		&= \sum_{\substack{M_{ij} \in B_{kk}\\\text{for some } k}} \frac{1 - S(M)}{\|M\|_1} \cdot \dis_p(\pi_{ij}, \omega_i^t, \omega_j^t)^p
		+ \sum_{\substack{M_{ij} \notin B_{kk}\\\text{for any } k}} -\frac{S(M)}{\|M\|_1} \cdot \dis_p(\pi_{ij}, \omega_i^t, \omega_j^t)^p
	\end{align*}
\end{proof}

\subsection{Discussions, Limitations, and Future Work}

In this section, we demonstrated four applications of the parameterized GW distance. The utility of the parameterized GW framework in each case is summarized as follows.
\begin{enumerate}[leftmargin=*]
\item \narrowbf{Block matrices}: It defines distances by decomposing each block matrix into submatrices and combining information from all parts (\cref{sec:pandas}).
\item \narrowbf{Random graphs}: It serves as a meaningful invariant for comparing and clustering random graph samples, tested on Erd\H{o}s--R\'enyi and stochastic block models (\cref{sec:random-graphs}).
\item \narrowbf{Heat kernels}: By aligning sets of heat kernels, it captures features across multiple timescales (e.g., cycles of different lengths or nesting structures), with $\nu$ tuned for effective couplings (\cref{sec:nested-cycles}).
\item \narrowbf{Feature selection}:  Allowing $\nu$ to vary enables identification of discriminative time intervals in dynamic data, illustrated by classifying obstructed vs. unobstructed drone flights (\cref{sec:feature-selection}).
\end{enumerate}
In the block matrix experiments, the computation of $\GW_{\mathsf{C}}$ ignores interactions between blocks $X_i$ and $Y_j$ for $i \neq j$, and consequently does not yield the expected optimal alignment. Further work is needed to determine the \emph{minimal} interaction information required to approach the global optimum. In the heat kernel and feature selection experiments, manual tuning of the parameter $\nu$ is necessary both to capture features across scales and to identify discriminative time intervals. 

We remark that the focus of this paper is on fundamental theory and establishing core methods for its application. In particular, the experiments above are qualitative in nature and only deal with synthentic data. An important direction for future research is the development of more efficient learning frameworks that support automatic parameter tuning, so that these methods are more applicable to real-world data. Applications to time-varying metric space and network data, e.g., in the form of longitudinal fMRI data, will be the goal of a followup project.


\section*{Acknowledgments}
This work was partially supported by grants from National Science Foundation (NSF) projects IIS-2145499, DMS-2324962, and CIF-2526630, and Department of Energy (DOE) projects DE-SC0023157 and DE-SC0021015. 
We thank C\'{e}dric Vincent-Cuaz for clarifying the Python Optimal Transport library and for identifying errors in our initial implementation.

\bibliographystyle{plain}
\bibliography{refs-reeb}


\end{document}